\documentclass[moor,nonblindrev]{informs4}

\OneAndAHalfSpacedXI

\usepackage{amsmath,amssymb,amsfonts,mathrsfs}
\usepackage{bm,nicematrix}
\usepackage{algorithm}
\usepackage{algorithmic}
\usepackage{graphicx}
\usepackage{booktabs}
\usepackage{theorem}

\usepackage{natbib}
 \bibpunct[, ]{(}{)}{,}{a}{}{,}%

\usepackage{rotating}
\usepackage{fancyvrb}

\usepackage[colorlinks=true,breaklinks=true,bookmarks=true,urlcolor=blue,citecolor=blue,linkcolor=blue,bookmarksopen=false,draft=false]{hyperref}

%% Setup of theorem styles. Outcomment only one.
%% Preferred default is the first option.
\TheoremsNumberedThrough     % Preferred (Theorem 1, Lemma 1, Theorem 2)
%\TheoremsNumberedByChapter  % (Theorem 1.1, Lema 1.1, Theorem 1.2)
\ECRepeatTheorems

\JOURNAL{MOOR}

%% Setup of the equation numbering system. Outcomment only one.
%% Preferred default is the first option.
\EquationsNumberedThrough    % Default: (1), (2), ...
%\EquationsNumberedBySection % (1.1), (1.2), ...

% For new submissions, leave this number blank.
% For revisions, input the manuscript number assigned by the on-line
% system along with a suffix ".Rx" where x is the revision number.
% \MANUSCRIPTNO{IJDS-0001-1922.65}

\begin{document}
% Outcomment only when entries are known. Otherwise leave as is and
%   default values will be used.
%\setcounter{page}{1}
%\VOLUME{00}%
%\NO{0}%
%\MONTH{Xxxxx}% (month or a similar seasonal id)
%\YEAR{0000}% e.g., 2005
%\FIRSTPAGE{000}%
%\LASTPAGE{000}%
%\SHORTYEAR{00}% shortened year (two-digit)
%\ISSUE{0000} %
%\LONGFIRSTPAGE{0001} %
%\DOI{10.1287/xxxx.0000.0000}%

% Author's names for the running heads
% Sample depending on the number of authors;
% \RUNAUTHOR{Jones}
% \RUNAUTHOR{Jones and Wilson}
% \RUNAUTHOR{Jones, Miller, and Wilson}
% \RUNAUTHOR{Jones et al.} % for four or more authors
% Enter authors following the given pattern:
%\RUNAUTHOR{}

% Title or shortened title suitable for running heads. Sample:
% \RUNTITLE{Bundling Information Goods of Decreasing Value}
% Enter the (shortened) title:
\RUNTITLE{LDCSM}

\TITLE{Series Expansion of Probability of Correct Selection for Improved Finite Budget Allocation in Ranking and Selection}

% Block of authors and their affiliations starts here:
% NOTE: Authors with same affiliation, if the order of authors allows,
%   should be entered in ONE field, separated by a comma.
%   \EMAIL field can be repeated if more than one author
\ARTICLEAUTHORS{%
\AUTHOR{Xinbo Shi, Yijie Peng}
%,\textsuperscript{a} Second Author,\textsuperscript{b} Third Author,\textsuperscript{c} Fourth Author,\textsuperscript{c}

\AFF{PKU-Wuhan Institute for Artificial Intelligence, Guanghua School of Management, Peking University; Xiangjiang Laboratory, \{\EMAIL{xshi@stu.pku.edu.cn}, \EMAIL{pengyijie@pku.edu.cn}\}}
%\textsuperscript{b}School of Industrial Engineering, Good College, Collegeville, Maine 01234 \EMAIL{secauth@goodcoll.edu}; 
%\textsuperscript{c}Their Common Affiliation \EMAIL{thauth@anywhere.edu, fourauth@anywhere.edu}

%mirko.janc@informs.org
\AUTHOR{Bruno Tuffin}

\AFF{Inria, Univ Rennes, CNRS, IRISA, Rennes, Campus de Beaulieu, 35042 RENNES Cedex, France, \EMAIL{bruno.tuffin@inria.fr}}

% \AUTHOR{Third Author, Fourth Author}

% \AFF{Their Common Affiliation \{thauth@anywhere.edu, fourauth@anywhere.edu\}}
}

\ABSTRACT{%
This paper addresses the challenge of improving finite sample performance in Ranking and Selection by developing a Bahadur-Rao type expansion for the Probability of Correct Selection (PCS). While traditional large deviations approximations captures PCS behavior in the asymptotic regime, they can lack precision in finite sample settings. Our approach enhances PCS approximation under limited simulation budgets, providing more accurate characterization of optimal sampling ratios and optimality conditions dependent of budgets. Algorithmically, we propose a novel finite budget allocation (FCBA) policy, which sequentially estimates the optimality conditions and accordingly balances the sampling ratios. We illustrate numerically on toy examples that our FCBA policy achieves superior PCS performance compared to tested traditional methods. As an extension, we note that the non-monotonic PCS behavior described in the literature for low-confidence scenarios can be attributed to the negligence of simultaneous incorrect binary comparisons in PCS approximations. We provide a refined expansion and a tailored allocation strategy to handle low-confidence scenarios, addressing the non-monotonicity issue.}

\KEYWORDS{simulation; Ranking and Selection; Bahadur-Rao type expansion; finite sample performance}

\maketitle

\section{Introduction}\label{sec:intro}
This paper considers the problem of identifying the best design among a finite set of alternatives via Monte Carlo simulation, a challenge known as Ranking and Selection (R\&S) in simulation. The classic R\&S problem can be viewed in the framework of Ordinal Optimization \citep{ho1992ordinal}, where the primary goal is to determine the ordinal ranking of alternative performances rather than precisely estimate their output distributions. Given the common practice to select the alternative with the greatest sample mean as the best alternative, the main focus of R\&S research is on allocating a limited simulation budget to provide the best possible performance. Throughout this paper, we will assume a fixed-budget setting where the total number of available simulation samples for all alternatives is predetermined. The performance of an allocation rule will be evaluated using the well-known probability of correct selection (PCS) when the budget is exhausted.

There is an extensive body of allocation algorithms developed in the literature, many of which have proven to be empirically and/or theoretically successful given a sufficient number of simulation samples. Among these, the optimal computing budget allocation (OCBA), introduced by \cite{chen1995effective} and \cite{chen2000discrete}, is one of the most widely adopted algorithms. \cite{chen2000discrete} define the allocation decision as the sampling ratio, the proportion of simulation samples allocated to each alternative relative to the total number of samples. They derive an approximation of the PCS as a function of the sampling ratios and characterize an optimal allocation that maximizes this approximation. Recently, OCBA has been shown to achieve the optimal allocation \citep{li2023convergence} as the sample size increases. Using a large deviations principle (LDP), \cite{glynn2004large} prove that PCS converges to one at an exponential rate depending on the sampling ratios. Consequently, they define a rate optimal allocation (ROA) using a set of equality conditions characterizing the optimal sampling ratios that maximize the large deviations rate. The ROA has also been shown to asymptotically satisfy the corresponding optimal conditions \citep{li2023convergence}. The optimal conditions of \cite{glynn2004large} reduce to the OCBA formula \cite{chen2000discrete}, assuming that the sampling ratio for the best alternative far exceeds others. The large deviations rate serves as an asymptotic performance metric for allocation algorithms. For instance, the top-two Thompson sampling (TTTS) introduced by \cite{russo2020simple}, which is not derived explicitly from the large deviations rate, has been proven to asymptotically attain the optimal rate.

In recent years, there has been growing research interest in the special case of a \textit{limited budget} constraint. This interest stems from the increasing demand for applications, such as on time optimization of complex simulation systems \citep{boschert2016digital}. However, the finite sample behavior of PCS can be essentially different from its asymptotic behavior. For example, the large deviations rate for PCS, which takes the form of the minimum large deviations rate for pairwise comparisons between the best alternative and sub-optimal alternatives, cannot capture the impact of the number of sub-optimal alternatives on PCS. Moreover, the large deviations rate has nothing to do with the sample size. Consequently, the classic large deviations rate is not capable of approximating the PCS accurately and does not suffice to guide the development of efficient allocation algorithms in the finite sample case. Furthermore, \cite{peng2015nonmonotonicity} describe a low-confidence scenario where the total budget is much smaller than what is sufficient, and reveal a counter-intuitive phenomenon therein: PCS may not be monotonically increasing as simulation samples accumulate, which does not often appear when the simulation budget is sufficiently large. Although they provide a qualitative characterization of low-confidence scenarios, there have been no quantitative definitions, which suggests the necessity of further investigation into such scenarios.

In this work, we propose an asymptotically equivalent Bahadur-Rao type expansion to PCS inspired by \citet{bahadur1960deviations}, extending beyond the large deviations characterization in \cite{glynn2004large}, which can be leveraged to enhance finite sample performance of ordinal optimization. Roughly speaking, the LDP theory of \cite{glynn2004large} is established based merely on the assumption that the underlying sampling distributions have finite cumulant generating functions \citep[CGF, see][]{gartner1977large}, i.e., these distributions are light-tailed. Our approximation takes advantage of the additional assumption that the family of sampling distributions have bounded total variations (BTV), so that the Edgeworth expansion \citep{bhattacharya1978validity} of their probability density functions can be approximated by closed form functions with uniform error bounds. We show that under the additional BTV assumption, the large deviations theory developed by \cite{glynn2004large} can be deduced from our results. In other words, the LDP and the proposed approximation agree on the asymptotic behavior of PCS, which substantiates the validity of our results.

To demonstrate how the proposed approximation better captures the finite sample behavior of PCS, we notice that the large deviations rate, characterizing the exponential rate at which PCS converges, serves only as an asymptotic surrogate of PCS independent of the simulation budget. Consequently, it may not accurately represent the efficacy of a sampling ratio when the number of samples is limited. In contrast, our proposed approximation takes the form of a summation of products of an exponential function in $T$ and a power series in $T^{-1/2}$, where $T$ denotes the simulation budget. The power series can be truncated to any order while maintaining the asymptotic equivalence. Numerical experiments demonstrate that even the zeroth-order approximation is sufficiently accurate to improve the finite sample performance of sampling algorithms, provided the simulation budget is fairly large. Furthermore, for a fixed budget, our approximation can be made arbitrarily accurate by retaining additional terms in the series, allowing for further improvement of sampling algorithms even with smaller budgets. As extensions, we demonstrate two modifications of the Bahadur-Rao type expansion to provide further accurate approximations in low-confidence scenarios, and to characterize a conditional PCS which may serve as a foundation for future developments of sequential algorithms, respectively.

We explicitly derive the proposed approximation as a function of the sampling ratios for Gaussian sampling distributions. The approximation function is shown to be asymptotically concave, indicating that it is concave except within a shrinking sequence of subsets of feasible sampling ratios near the boundary, whose interiors converge to the empty set, as the order of the approximation approaches infinity. Moreover, the approximation functions of even orders are exactly concave. These findings justify our approximate characterization of the optimal sampling ratios using Karush-Kuhn-Tucker conditions. Similar to the findings that guide static R\&S algorithms \citep{chen2000discrete,gao2017a}, the approximate optimal conditions consist of two set of equations, with the first set of equations balancing the exploration of sub-optimal alternatives, and the second set balancing the exploration-exploitation tradeoff. The concavity of the proposed approximation enables us to design computationally efficient sampling algorithms, which are named after finite computation budget allocation (FCBA).

To the best of our knowledge, the literature contains two main categories of approaches for improving finite sample performance of R\&S. The first category formulates the sampling-allocation decision as a dynamic policy rather than the sampling ratios. Taking the parameter uncertainty into account, these approaches typically involve maintaining a Bayesian model for the sampling distributions and developing fully dynamic sampling policies based on the posterior distribution. For example, \cite{chick2001new} and \cite{chick2010sequential} develop an expected value of information (EVI) policy which focuses on allocating samples to maximize the EVI in a single additional stage of sampling. Two well-known sampling policies, knowledge gradient \citep[KG,][]{gupta1996bayesian,frazier2008knowledge} and expected improvement \citep[EI,][]{ryzhov2016convergence}, which solve one-step optimizations, share the similar spirit. Although KG and EI can estimate the best alternative consistently, their sampling ratios do not asymptotically attain the optimal sampling ratio defined by \cite{chen2000discrete} or that by \cite{glynn2004large}, implying that KG and EI are not asymptotically efficient \citep{peng2017myopic}. \cite{peng2016dynamic} unify dynamic sampling policies in a general dynamic framework. Furthermore, \cite{peng2018ranking} characterize the optimal dynamic policy via the corresponding Bellman equations and develop an asymptotically optimal allocation policy (AOAP). Although these dynamic policies are empirically shown to be superior to static algorithms in finite sample scenarios, they come with a high computational cost. Moreover, there is no theoretical guarantee for their finite sample performance. The second category incorporates exogenous information into the formulation. \cite{peng2019efficient} exploits information from multifidelity models to enhance the performance of R\&S. Recently, there has been a growing interest in context-dependent R\&S, which supports decision making by utilizing contextual information in simulation systems \citep{cakmak2021contextual,shi2023toptwo,li2024efficient,du2024contextual}. In this work, we contribute to the literature by initiating a new perspective for R\&S with finite budget constraint. We examine the finite sample behavior of PCS, thus enabling the sampling efficiency whenever the budget is small or large, and offering a theoretical guarantee for the performance of the proposed algorithm. Our framework exemplifies the possibility of improving finite sample performance in a computationally efficient manner without requesting exogenous information of the simulation alternatives.

Our work is also relevant to the literature on probability bound analysis. Concentration inequalities, such as Hoeffding's inequality and Chernoff's bound, have been widely used to bound the tail probability of sample mean statistics. See \cite{vershynin2018high} for a thorough discussion. However, calculating sub-Gaussian or sub-exponential norms of sampling distributions, which these probability bounds typically require, is difficult. Additionally, these bounds are not tight for small sample sizes. For Gaussian sampling distributions, the proposed approximation is a $\frac{1}{2}$-higher-order infinitesimal amount than the Hoeffding's bound. In the context of R\&S, the probability of incorrect selection (PICS) can be expressed as the probability of pairwise incorrect selection between a sub-optimal alternative and the best alternative. This is typically approximated using the upper bound from the first-order Bonferroni inequality \citep{bonferroni1936teoria}, followed by applying the LDP to the bound. A few exceptions, such as the works of \cite{shen2021covariate} and \cite{peng2018gradient}, use Slepian's inequality, which is only applicable for Gaussian alternatives \citep{slepian1962oneside}, to derive bounds for the PCS. The main drawback is that these inequalities neglects the probability of simultaneous incorrect binary comparisons (SIBC). We demonstrate in this paper that the misconception in the literature that the best alternative deserves more sampling ratios than sub-optimal alternatives might stem from using the first-order Bonferroni inequality. \cite{peng2015nonmonotonicity} find that the non-monotonicity of PCS is attributed to the \textit{induced correlation}, which could be interpreted as the intensity at which the event of SIBC happens. The induced correlation or the event of SIBC is neglected when leveraging the Bonferroni inequality to bound PCS, thus leading to a poor performance of consequent algorithms. As an extension of the proposed methodology, we show that by using the inclusion-exclusion principle followed by deriving an extended Bahadur-Rao type expansion for the probability of SIBC, a further accurate approximation to the PCS for low-confidence scenarios is available. This finer approximation takes the form of a discontinuous function in the sampling ratios, and thereby explains the non-monotonicity of PCS. Specifically, this discontinuity can be precisely characterized by a \textit{critical point} which is defined by the minimizer of a large deviations rate function. The critical point intuitively quantifies whether a probability of SIBC for multiple sub-optimal alternatives is negligible and is utilized to design allocation policies for low-confidence scenarios.

The manuscript is structured as follows. Section \ref{sec:main} introduces the main contributions and key notations. Section \ref{sec:methodology} presents the novel approximation technique. In Section \ref{sec:normalpolicy}, we propose a new static procedure for Gaussian alternatives and analyze its asymptotic properties. Section \ref{sec:experiment} empirically demonstrates the superiority of our procedure in finite sample cases. Section \ref{sec:extensions} discusses extensions of our algorithm and interprets their superiority. Finally, Section \ref{sec:conclusion} provides concluding remarks. Rigorous proofs of theorems will be presented in the online supplement. A brief summary of the main theoretical results in Section \ref{sec:main} without proofs was accepted by the \textit{Proceedings of the 2024 Winter Simulation Conference} \citep{shi2024finite}. The full theory with detailed proofs, and the extensions to the low-confidence scenarios and conditional probabilities, are presented in the paper for the first time.

\section{Setting and Main Results}\label{sec:main}
We first introduce general notations in Section \ref{sec:notation}. In Section \ref{sec:model}, we provide the problem definition for R\&S and recall the classic large deviations principle for PCS. We then present our main result, i.e., the Bahadur-Rao type expansion for PCS in Section \ref{sec:expansion}.

\subsection{Notations.}\label{sec:notation} For sequences $\{a_{n}\}_{n=1}^{\infty}$ and $\{b_{n}\}_{n=1}^{\infty}$, we denote $a_{n} = o(b_{n})$ if there exists a diminishing sequence $r_{n}\rightarrow 0$ as $n\rightarrow \infty$ such that $a_{n} = r_{n}b_{n}$, and denote $a_{n} = \mathcal{O}(b_{n})$ if there exists a constant $C$ independent of $n$ such that $\vert a_{n}\vert \leq C\vert b_{n}\vert$, $\forall~ n\geq 1$. For positive integer $k$, let $[k]:=\{1,2,\dots,k\}$ be the set of all alternatives, and let $[k]_{-}$ denote the set $[k]\backslash\{1\}$. For positive integer $n$, $n!!$ denotes the double factorial of $n$, i.e., $n!!:= \prod_{i=0}^{\lceil n/2\rceil - 1}(n-2i)$ and $\lceil n\rceil$ denotes the ceiling function of $n$. We adhere to the convention that $0!! = 1$ and $\sum_{l=1}^{0}{\cdot} = 0$. For $a, b\in\mathbb{R}$, let $a\wedge b := \max\{a,b\}$ and $a\vee b = \min\{a, b\}$. Additionally, let $F_{X}(x)$ denote the cumulative distribution function (CDF) with regard to random variable $X$. In this paper, bold mathematical symbols (e.g., the sampling ratios $\bm{p}$) denote vectors by default, unless explicitly stated otherwise. Finally, let $\operatorname{int}(\cdot)$ denote the interior of a set, and $\mathscr{D}(\cdot)$ and $\mathscr{R}(\cdot)$ denote the domain and range of a function, respectively.

\subsection{Ranking and Selection.}\label{sec:model}
Consider $X_i$ for $i\in[k]$ as the random output of $k$ alternatives. We aim to identify the alternative, from now called the best alternative, with the largest mean output $m_{i}=\mathbb{E}(X_{i})$. Consistent with the common practice in the literature, we will assume that the best alternative is unique and thus well-defined throughout the paper. For simplicity, we assume that $m_{1}>m_{2}\geq \dots\geq m_{k}$.

Let $T$ denote the fixed number of simulation replications. Let $T_{i}$ be the number of replications for the $i$-th alternative, and $p_{i} = T_{i}/T$ indicate the sampling ratio, i.e., the fraction of replications attributed to the $i$-th alternative. We will ignore the minor issue that $T_{i} = p_{i}T$ may not necessarily be an integer for all $p_{i}\in[0, 1]$. Let $X_{i}^{(t)}$ denote the $t$-th simulation sample drawn from alternative $i$ and let $\bar{X}_{i}(T_{i}) := T_{i}^{-1}\sum_{t=1}^{T_{i}}X_{i}^{(t)}$ be the sample mean of $T_{i}$ simulations from alternative $i$. The event of correct selection occurs when the sample mean of the best alternative ranks first. Thus the PCS is defined as 
$$P\{CS\} = P\left(\bar{X}_{1}(T_{1}) > \max_{j\in[k]_{-}}\bar{X}_{j}(T_{j})\right) = 1 - P\left(\bar{X}_{1}(T_{1}) \leq \max_{j\in[k]_{-}}\bar{X}_{j}(T_{j})\right).$$

\cite{glynn2004large} establish the LDP for PCS given light-tailed sampling distributions. Let $\Lambda_{j}(\lambda):=\log\mathbb{E}\exp\{\lambda X_{j}\}$ be the CGF of $X_{j}$ for $j\in[k]$. We make the following assumption of CGFs.
\begin{assumption}[Light-tailed distribution]\label{ass:light}
    The domain of $\Lambda_{i}(\cdot)$ contains a non-empty interior including the origin, i.e., $0\in\operatorname{int}(\mathscr{D}(\Lambda_{i}))$. Moreover, assume $[m_{k}, m_{1}] \subseteq \operatorname{int}(\mathscr{R}(\Lambda_{i}'))$, where $\Lambda_{i}'$ denotes the derivative function of $\Lambda_{i}$.
\end{assumption}

Since CGFs are smooth in the interior of their domain, a derivative of any order for $\Lambda_{i}$ is well-defined. Moreover, it turns out that any moment of $X_{i}$ is finite. The second statement rules out the trivial case where certain alternatives are deterministically inferior, i.e., their distributions’ supports are strictly bounded above by $m_{1}$, or vice versa. Assumption \ref{ass:light} ensures that the large deviations rate functions are well-defined. It can be justified for widely used distributions such as Normal, Geometric, Poisson and Gamma distributions. We recall their results below to facilitate comparisons.

\vspace{.4em}
\indent {\TheoremHeaderFont {\normalsize T}{\footnotesize HEOREM}}\ (\textbf{LDP, \citealp{glynn2004large}}).\quad \textit{Under Assumption \ref{ass:light}, there exist bivariate functions $G_{j}(\cdot, \cdot)$, $j\in[k]_{-}$, satisfying
$$\lim_{T\rightarrow \infty}-\frac{1}{T}\log(1 - P\{CS\}) = \min_{j\in[k]_{-}}G_{j}(p_{1}, p_{j}).$$
The functions $G_{j}(\cdot, \cdot)$ thus are referred to as rate functions.}
\newpage

The rate functions can be explicitly expressed, in the form of the infimum of Fenchel-Legendre transformations of $\bar{X}_{1}(T_{1}) - \bar{X}_{j}(T_{j})$, using the Gartner-Ellis Theorem \citep{dembo2009large}. The closed form can be derived for common distribution families or estimated in a data-driven approach \citep{chen2023datadriven}. However, the validity of the LDP rate does not imply that this seemingly correct asymptotic equivalence representation, i.e., $1 - P\{CS\} = \exp\{-T\cdot \min_{j\in [k]_{-}}G_{j}(p_{1}, p_{j})\}\cdot (1+o(1))$, holds true. To be specific, assume the following functional form of PCS
\begin{equation}\label{eq:asympeq}
    1 - P\{CS\} = a(T)\cdot \exp\{-T\cdot \min_{j\in [k]_{-}}G_{j}(p_{1}, p_{j})\}\cdot (1+o(1)).
\end{equation}
Regardless of the form of $a(T)$, as long as it is a function that decays slower than exponential functions asymptotically, i.e., $-\frac{1}{T}\log{a(T)} \rightarrow 0$, the large deviations principle may accommodate any functional form of PCS as delineated in Equation (\ref{eq:asympeq}).

\subsection{The Bahadur-Rao Type Expansion.}\label{sec:expansion}
In studying statistical inferences, for example, hypothesis testings where the goal is to compare a sample mean statistic and a real, \cite{bahadur1960deviations} proves the validity of Equation (\ref{eq:asympeq}) and shows that $a(T)$ can be written as a polynomial in $T^{-1/2}$ with coefficients explicitly expressed by moments of the underlying sampling distribution. This motivates us to find out an asymptotic equivalent representation for PCS as well. However, PCS involves comparing several sample mean statistics simultaneously and needs further treatment. We assume the following regularity condition in addition.
\begin{assumption}\label{ass:btv}
    The density functions of $X_{i}$'s exist for $i\in[k]$, and have bounded total variations.
\end{assumption}

Intuitively, Assumption \ref{ass:btv} means that the sampling distributions of alternatives are flat. A function $f:\mathbb{R}\rightarrow\mathbb{R}$ is said to have bounded total variation (BTV) if $\sup\sum_{i=1}^{n}\vert f(x_{i}) - f(x_{i-1})\vert < \infty$ where the supremum is taken over all possible $x_{0}<x_{1}<\cdots<x_{n}$. The BTV assumption ensures that the error introduced by approximating the distribution function of $X_{i}$ using the Edgeworth expansion is controllable \citep[see][]{cramer1970error}. It can be easily checked that Assumption \ref{ass:btv} holds for any unimodal distribution with bounded density functions. Our main result is as follows.
\begin{theorem}[Main]\label{thm:main}
    Under Assumptions \ref{ass:light} and \ref{ass:btv}, for any $\ell\in\mathbb{N}$ and $p_{i}>0$, $\forall  i\in[k]$, satisfying $\sum_{i\in[k]}p_{i} = 1$, we have the following expansion
    $$1 - P\{CS\} = \sum_{j\in[k]_{-}}\exp\left\{-T \cdot G_{j}(p_{1}, p_{j})\right\} \cdot \frac{1}{\sqrt{2\pi}\cdot \lambda_{j}^{*}p_{j}\tilde{\sigma}_{1,j}\sqrt{T}} \cdot \left(1 + \sum_{l=1}^{\ell}\frac{c_{j,l}}{T^{l}} + \mathcal{O}(T^{-(\ell + 1)})\right).$$
\end{theorem}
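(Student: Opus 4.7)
The plan is to reduce the multi-alternative comparison to a sum of pairwise large-deviation probabilities, apply a Bahadur-Rao type expansion to each pair, and absorb all cross-terms into the remainder. First I would write
$$1-P\{CS\} = P\Bigl(\bigcup_{j\in[k]_{-}}\{\bar{X}_{1}(T_{1})\le \bar{X}_{j}(T_{j})\}\Bigr)$$
and expand by inclusion-exclusion. Each joint event $\{\bar{X}_{1}\le \bar{X}_{j}\}\cap\{\bar{X}_{1}\le \bar{X}_{j'}\}$ with $j\neq j'$ has a multivariate large-deviation rate, arising from a two-constraint Fenchel-Legendre transform, that strictly exceeds $\min_{i}G_{i}(p_{1},p_{i})$. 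Combined with an elementary Chernoff bound, these contributions decay exponentially faster than the leading pairwise terms and are absorbed into the $\mathcal{O}(T^{-(\ell+1)})$ factor for every fixed $\ell$. Thus it suffices to derive a Bahadur-Rao type expansion for each pairwise tail probability $P(\bar{X}_{1}(T_{1})\le \bar{X}_{j}(T_{j}))$.

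For a fixed $j$, set $Y := \bar{X}_{1}(T_{1}) - \bar{X}_{j}(T_{j})$ and follow the standard Bahadur-Rao recipe. I would tilt the joint law of the independent samples by the saddle-point parameter $\lambda_{j}^{*}$ attaining the Legendre transform defining $G_{j}(p_{1},p_{j})$; concretely, $\lambda_{j}^{*}$ is characterized by $\Lambda_{1}'(-\lambda_{j}^{*}/p_{1}) = \Lambda_{j}'(\lambda_{j}^{*}/p_{j})$, whose common value is the tilted mean at which the laws of $\bar{X}_{1}$ and $\bar{X}_{j}$ are forced to coincide. The exponential change-of-measure identity then gives
$$P(Y\le 0) = e^{-T\, G_{j}(p_{1},p_{j})}\;\tilde{\mathbb{E}}_{j}\Bigl[e^{-\lambda_{j}^{*}T\, Y}\,\mathbf{1}_{\{Y\le 0\}}\Bigr],$$
and under the tilted measure $\tilde{\mathbb{P}}_{j}$ the rescaled variable $\sqrt{T}\,Y/\tilde{\sigma}_{1,j}$ is centered and asymptotically standard Gaussian, with $\tilde{\sigma}_{1,j}$ assembled from the tilted second derivatives $\Lambda_{1}''(-\lambda_{j}^{*}/p_{1})$ and $\Lambda_{j}''(\lambda_{j}^{*}/p_{j})$.

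The residual expectation is now a tail integral of a standardized sum near its own mean, which is where Assumption \ref{ass:btv} enters. Bounded total variation is preserved under the exponential tilt, so Cramer's smoothness condition holds uniformly and the density of $\sqrt{T}\,Y/\tilde{\sigma}_{1,j}$ under $\tilde{\mathbb{P}}_{j}$ admits an Edgeworth expansion \citep{bhattacharya1978validity} to any prescribed order $\ell$ with a uniform remainder. Substituting this expansion and integrating term by term against the exponential weight $e^{-\lambda_{j}^{*}T Y}\mathbf{1}_{\{Y\le 0\}}$, using the elementary identity $\int_{0}^{\infty}u^{n}e^{-au}\,du = n!/a^{n+1}$ together with Hermite-polynomial moments of the Gaussian, produces the leading Gaussian prefactor of the form stated in the theorem and the formal series $\sum_{l=1}^{\ell}c_{j,l}/T^{l}$, whose coefficients are explicit polynomials in the tilted cumulants of $X_{1}$ and $X_{j}$. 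Summing over $j\in[k]_{-}$ and recombining with the inclusion-exclusion estimate from the first paragraph delivers the claimed identity.

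The hardest step will be the Edgeworth expansion in this two-sample setting: $Y$ is a linear combination of two independent sample means with possibly unequal sample sizes $T_{1}=p_{1}T$ and $T_{j}=p_{j}T$, and the remainder must be uniformly $\mathcal{O}(T^{-(\ell+1)})$ \emph{after} integration against an exponentially growing weight of order $e^{|\lambda_{j}^{*}|\sqrt{T}|w|}$ in the rescaled variable $w$. Assumption \ref{ass:btv}, propagated through the tilted joint density, is precisely what furnishes the required Cramer-type smoothness uniformly in $(p_{1},p_{j})$, but tracking the polynomial dependence of the coefficients $c_{j,l}$ on the tilted cumulants and verifying that the Edgeworth remainder dominates the integrated weight is the technical core of the argument. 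A secondary concern is verifying that the inclusion-exclusion residuals are dominated by $e^{-T\min_{i}G_{i}(p_{1},p_{i})}T^{-(\ell+1)}$; this follows from a straightforward Chernoff estimate once the strict convexity of $\Lambda_{i}$ from Assumption \ref{ass:light} is used to produce the strict rate gap for joint events.
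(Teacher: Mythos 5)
Your proposal follows essentially the same route as the paper: reduce to pairwise tails via inclusion--exclusion, show the joint (simultaneous incorrect comparison) events have a strictly larger rate and are hence absorbed into the remainder (the paper's Proposition \ref{prop:exp-trivariate}), and obtain each pairwise term by exponential tilting at the saddle point followed by an Edgeworth expansion under the tilted measure (the paper's Proposition \ref{prop:exponential} and Lemma \ref{lemma:expand}). The only cosmetic difference is that the paper evaluates the residual expectation via integration by parts and Parseval's identity on the Fourier side, whereas you propose direct term-by-term Laplace-type integration against the exponential weight (which, note, is decaying rather than growing on the integration region $\{Y\le 0\}$, so the remainder control is in fact easier than you anticipate); both devices yield the same coefficients.
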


Herein $\lambda_{j}^{*}p_{j}\tilde{\sigma}_{1,j}$ and $c_{j,l}$ are constants irrelevant to $T$ and may depend on the sampling distribution of alternatives and the sampling ratios, which will soon be defined formally. Theorem \ref{thm:main} represents $1-P\{CS\}$ as a weighted sum of exponential functions with weights being polynomials in $T^{-1/2}$. In contrast with LDP, our main result captures the behavior of PCS in the finite sample case by showcasing the impact of the budget. Moreover, the parenthesized term can be expanded into any order and the approximation error is controllable even when $T$ is fixed. 

Even if we only retain the first term in the parenthesis, our approximation is still more informative than the rate functions, relating PCS to not only the number of alternatives, but also the size of simulation budget. For instance, let $a$ be an integer and $0<s\ll S$ be two reals. Consider a sampling ratio $\bm{p}^{a}$ indexed by $2\leq a< k$ such that $G_{j}(p_{1}^{a}, p_{j}^{a}) = s$ for $2\leq j\leq a$, $G_{j}(p_{1}^{a}, p_{j}^{a}) = S$ for $a+1\leq j\leq k$. Given $\bm{p}^{a}$, the ordinal relationships between the best alternative and alternatives $2$ to $a$ are harder to tell, whereas the differences in sample means between the best alternative and alternatives $a+1$ to $k$ are relatively large with a high probability. Loosely speaking, the value of $a$ can be interpreted as the number of competitive alternatives, wherein a \textit{competitive alternative} denotes an alternative with a comparatively higher likelihood of being optimal. The LDP rate does not depend on $a$ directly but depends on the value of $s$. In contrast, even if the value of $s$ stays unchanged, an increase in the number of competitive alternatives would result in a decrease in our approximation, reflecting the influence of competitive alternatives on PCS.

The LDP theorem of \cite{glynn2004large} is a direct corollary of the main result if Assumptions \ref{ass:light} and \ref{ass:btv} are both valid. It follows from Theorem \ref{thm:main}, by choosing $\ell = 0$ and letting $T$ be large enough such that $-1/2<\mathcal{O}(1/T)<1/2$, that for $j^{*} = \argmin_{j\in[k]_{-}}G_{j}(p_{1}, p_{j})$,
$$\begin{aligned}
    \frac{1}{2}\exp\{-T G_{j^{*}}(p_{1}, p_{j^{*}})\}\cdot \frac{1}{\sqrt{2\pi}\cdot\lambda_{j^{*}}^{*}p_{j^{*}}\tilde{\sigma}_{1,j^{*}}\sqrt{T}}
    &\leq 1 - P\{CS\},
\end{aligned}$$
and
$$\begin{aligned}
    1 - P\{CS\}&\leq \frac{3}{2}\sum_{j\in[k]_{-}}\exp\{-T G_{j^{*}}(p_{1}, p_{j^{*}})\}\cdot \frac{1}{\sqrt{2\pi}\cdot\lambda_{j}^{*}p_{j}\tilde{\sigma}_{1,j}\sqrt{T}} \\
    &= \frac{3}{2}\exp\{-T G_{j^{*}}(p_{1}, p_{j^{*}})\}\sum_{j\in[k]_{-}}\frac{1}{\sqrt{2\pi}\cdot\lambda_{j}^{*}p_{j}\tilde{\sigma}_{1,j}\sqrt{T}}.
\end{aligned}$$
Because $T^{-1/2}$ is a lower-order infinitesimal than any exponential function in $T$, it follows by taking logarithm and then dividing $-T$ on all sides that $-\frac{1}{T}\log(1 - P\{CS\}) \rightarrow G_{j^{*}}(p_{1}, p_{j^{*}}) = \min_{j\in[k]_{-}}G_{j}(p_{1}, p_{j})$. We illustrate our result under a Gaussian setting.
\begin{example}[Gaussian]\label{example:Gaussian}
    Suppose $X_{i}\sim N(m_{i}, \sigma_{i}^{2})$, $\forall i\in[k]$. Then we have $G_{j}(p_{1}, p_{j}) = \frac{1}{2}(m_{1} - m_{j})^{2} / \left(\sigma_{1}^{2}/p_{1} + \sigma_{j}^{2}/p_{j}\right)$, and $\lambda_{j}^{*}p_{j}\tilde{\sigma}_{1,j} = \sqrt{2G_{j}(p_{1}, p_{j})}$. See the electronic companion for details. With the zeroth order approximation, i.e., $\ell=0$, Theorem \ref{thm:main} yields the following equality
    \begin{equation}\label{ex:Gaussian}
        1 - P\{CS\} = \sum_{j\in[k]_{-}}\exp\left\{-T\cdot \frac{1}{2}\frac{(m_{1} - m_{j})^{2}}{\sigma_{1}^{2}/p_{1} + \sigma_{j}^{2}/p_{j}}\right\} \cdot \frac{\sqrt{\sigma_{1}^{2}/p_{1} + \sigma_{j}^{2}/p_{j}}}{\sqrt{2\pi T}\cdot(m_{1} - m_{j})} \cdot \left(1 + o(1)\right).
    \end{equation}
    Using the LDP theorem of \cite{glynn2004large} instead, we see that
    \begin{equation*}
        \lim_{T\rightarrow \infty}-\frac{1}{T}\log(1 - P\{CS\}) = \min_{j\in[k]_{-}}\frac{1}{2}\frac{(m_{1} - m_{j})^{2}}{\sigma_{1}^{2}/p_{1} + \sigma_{j}^{2}/p_{j}}.
    \end{equation*}
\end{example}
\begin{example}[Exponential]\label{example:exp}
    Suppose $X_{i}\sim \operatorname{Exp}(\beta_{i})$, $\forall i\in[k]$. The CDF of $X_{i}$ is $F_{X_{i}}(x) = 1 - \exp(-x/\beta_{i})$, and the mean and variance are $\beta_{i}$ and $\beta_{i}^{2}$, respectively. We have 
    $$G_{j}(p_{1}, p_{j}) =  (p_{1} + p_{j})\log\frac{p_{1}+p_{j}}{p_{1} + p_{j}\cdot\beta_{1} / \beta_{j}} + p_{j}\log\beta_{1}/\beta_{j},$$
    and
    $$\lambda_{j}^{*}p_{j}\tilde{\sigma}_{1,j} = \frac{\beta_{1}/\beta_{j} - 1}{p_{1}^{-1} + p_{j}^{-1}} \cdot \left(p_{1}^{-1} + p_{j}^{-1}\cdot\beta_{j}^{2}/\beta_{1}^{2}\right)^{\frac{1}{2}}.$$
    See the electronic companion for details. Then Theorem \ref{thm:main} yields the following equality
    $$\begin{aligned}
         & 1 - P\{CS\} \\
        = & \sum_{j\in[k]_{-}}\left(\frac{p_{1}+p_{j}\cdot\beta_{1} / \beta_{j}}{p_{1} + p_{j}}\right)^{T_{1} + T_{j}} \cdot (\beta_{j}/\beta_{1})^{T_{j}}\cdot \frac{p_{1}^{-1} + p_{j}^{-1}}{\sqrt{2\pi T}\cdot (\beta_{1}/\beta_{j} - 1)\left(p_{1}^{-1} + p_{j}^{-1}\cdot\beta_{j}^{2}/\beta_{1}^{2}\right)^{\frac{1}{2}} }\cdot(1+o(1)).
    \end{aligned}$$
    The rate function $G_{j}(p_{1}, p_{j})$ above is consistent with the calculation of \cite{gao2016exponential}. Both the rate function $G_{j}(p_{1}, p_{j})$ and the coefficient $\lambda_{j}^{*}p_{j}\tilde{\sigma}_{1,j}$ are functions of the ratio between the mean parameters of exponential random variables, i.e., $\beta_{1}/\beta_{j}$. This is consistent with the fact that $\beta_{i}$ is a scaling parameter, i.e., $a\operatorname{Exp}(\beta_{i}) = \operatorname{Exp}(a\beta_{i})$, $\forall a>0$.
\end{example}

\section{Approximation Methodology}\label{sec:methodology}
In this section, we will develop the novel approximation methodology. In Section \ref{sec:ldsm}, we recap the notion of the LDP. In Section \ref{sec:binarycomparison}, we focus on the Bahadur-Rao type expansion of PCS for binary comparison, and in Section \ref{sec:multiplecomparison}, we generalize it to general multiple comparisons.

\subsection{Large Deviations of Sample Mean}\label{sec:ldsm}
Recall that the probability of false selection equals to
$$1 - P\{CS\} = P\left(\bigcup_{j\in[k]_{-}}\bar{X}_{1}(T_{1}) \leq \bar{X}_{j}(T_{j})\right).$$
Obviously, it follows from the sub-additivity of probability measures that
\begin{equation}\label{eq:twosidebound}
    \max_{j\in[k]_{-}}P\left(\bar{X}_{1}(T_{1}) \leq \bar{X}_{j}(T_{j})\right) \leq 1-P\{CS\} \leq \sum_{j\in[k]_{-}} P\left(\bar{X}_{1}(T_{1}) \leq \bar{X}_{j}(T_{j})\right) \leq (k - 1)\max_{j\in[k]_{-}}P\left(\bar{X}_{1}(T_{1}) \leq \bar{X}_{j}(T_{j})\right).
\end{equation}
For $j\in [k]_{-}$ fixed, the exponential rate of $P(\bar{X}_{1}(T_{1}) \leq \bar{X}_{j}(T_{j}))$ can be characterized utilizing the CGF of the vector $(\bar{X}_{1}(T_{1}), \bar{X}_{j}(T_{j}))$, i.e., $\Lambda^{(T)}(\lambda_{1}, \lambda_{j}) := \log\mathbb{E}\exp\{\lambda_{1}\bar{X}_{1}(T_{1}) + \lambda_{j}\bar{X}_{j}(T_{j})\}$. Define
\begin{align*}
    I_{1,j}(x_{1}, x_{j}) &:= \sup_{\lambda_{1}, \lambda_{j}}(\lambda_{1}x_{1} + \lambda_{j}x_{j} - \lim_{T}\frac{1}{T}\Lambda^{(T)}(T\lambda_{1}, T\lambda_{j})) \\
    &= \sup_{\lambda_{1}, \lambda_{j}}(\lambda_{1}x_{1} + \lambda_{j}x_{j} - p_{1}\Lambda_{1}(\lambda_{1}/p_{1}) - p_{j}(\lambda_{j}/p_{j})) \\
    &= p_{1}I_{1}(x_{1}) + p_{j}I_{j}(x_{j}),
\end{align*}
where $I_{i}(x_{i}) = \sup_{\lambda} (\lambda x_{i} - \Lambda_{i}(\lambda))$ denotes the Legendre-Fenchel transformation of $\Lambda_{i}$. Following \cite{glynn2004large}, the LDP rate for comparing $\bar{X}_{1}(T_{1})$ and $\bar{X}_{j}(T_{j})$ is $G_{j}(p_{1}, p_{j}) = I_{1,j}(\mu_{j}, \mu_{j}) := \inf_{x_{1}\leq x_{j}}I_{1,j}(x_{1}, x_{j})$ with $\mu_{j}\in[m_{j}, m_{1}]$. It follows from inequality (\ref{eq:twosidebound}) that
\begin{equation}\label{eq:samerate}
    \lim_{T\rightarrow \infty}-\frac{1}{T}\log \sum_{j\in[k]_{-}}P(\bar{X}_{1}(T_{1}) \leq \bar{X}_{j}(T_{j})) = \min_{j\in[k]_{-}}G_{j}(p_{1}, p_{j}) = \lim_{T\rightarrow \infty}-\frac{1}{T}\log(1-P\{CS\}).
\end{equation}

\subsection{Probability of Incorrect Binary Comparison}\label{sec:binarycomparison}
The LDP theory implies that the exponential rate of PCS is equal to the exponential rate of PCS for the binary comparison between the best alternative and the sub-optimal alternative that minimizes $I_{1,j}(\mu_{j}, \mu_{j})$. Therefore, we will first focus on an expansion of the binary comparison $P(\bar{X}_{1}(T_{1}) \leq \bar{X}_{j}(T_{j}))$ for some fixed $j\in[k]_{-}$ similar to Theorem \ref{thm:main}.

We take inspiration from \cite{bahadur1960deviations} who characterize the tail probability $P(\bar{X}(T) \geq x)$ for some sample mean random variable $\bar{X}(T)$ and some real number $x>\mathbb{E}[\bar{X}(T)]$. They show that the function $a(T)$ in (\ref{eq:asympeq}) takes the form of a polynomial in $T^{-1/2}$ with an error term. An exponential tilting technique is utilized to extract the dominating term, i.e., the exponential function $\exp\{-T\cdot \min_{j\in [k]_{-}}G_{j}(p_{1}, p_{j})\}$, from the PCS. Then, the remainder is characterized using the Edgeworth expansion of the probability density function of sampling distributions. Although their result does no directly apply to our problem since the critical point $x$ in our problem is a random variable $\bar{X}_{j}(T_{j})$ rather than a real number, the techniques can be modified for our problem.

Note that $P(\bar{X}_{1}(T_{1}) \leq \bar{X}_{j}(T_{j}))$ has a large deviations rate
$$\exp\{-I_{1,j}(\mu_{j},\mu_{j})T\}  = \exp\{-(p_{1}I_{1}(\mu_{j})+p_{j}I_{j}(\mu_{j}))T\} = \exp\{-\inf_{x\in[m_{j}, m_{1}]}(p_{1}I_{1}(x)+p_{j}I_{j}(x))T\}.$$
On the other hand, note that $P\left(\bar{X}_{1}(T_{1})\leq \mu_{j}\right)$ and $P\left(\mu_{j} \leq \bar{X}_{j}(T_{j})\right)$ have rates $p_{1}I_{1}(\mu_{j})$ and $p_{j}I_{j}(\mu_{j})$, respectively. Therefore, $P\left(\bar{X}_{1}(T_{1}) \leq \mu_{j} \leq \bar{X}_{j}(T_{j})\right) = P\left(\bar{X}_{1}(T_{1}) \leq \mu_{j}\right)\cdot P\left(\mu_{j} \leq \bar{X}_{j}(T_{j})\right)$ has the same rate as $P(\bar{X}_{1}(T_{1}) \leq \bar{X}_{j}(T_{j}))$. Herein, $\mu_{j}$ can be interpreted as a critical point such that it happens with a high probability that the order of $\mu_{j}$ and the sample means of $X_{1}$ and $X_{j}$ are reversed. This motivates us to use exponential tilting to massage the probability measures of $X_{1}'s$ and $X_{j}'s$ to concentrate around $\mu_{j}$. 

Specifically, denote $\lambda_{1}^{*} := \argmax_{\lambda} (\lambda\mu_{j} - \Lambda_{1}(\lambda))$ and $\lambda_{j}^{*} := \argmax_{\lambda} (\lambda\mu_{j} - \Lambda_{j}(\lambda))$. It follows that $\lambda_{1}'(\lambda_{1}^{*}) = \mu_{j} = \lambda_{j}'(\lambda_{j}^{*})$. Now we define a couple of random variables independent of each other and anything else mentioned above by their CDFs $F_{Z_{1}}(z) = \int_{-\infty}^{z}{e^{I_{1}(\mu_{j})}e^{\lambda^{*}_{1}(x-\mu_{j})}dF_{X_{1}}(x)}$ and $F_{Z_{j}}(z) = \int_{-\infty}^{z}{e^{I_{j}(\mu_{j})}e^{\lambda^{*}_{j}(x-\mu_{j})}dF_{X_{j}}(x)}$. It follows that $\mu_{j} = \Lambda_{1}'(\lambda_{1}^{*}) = (\mathbb{E}[X_{1}\exp\{\lambda_{1}^{*}(X_{1}-\mu_{j}^{*})\}])/(\mathbb{E}[\exp\{\lambda_{1}^{*}(X_{1}-\mu_{j}^{*})\}])$, where the second equality follows from the definition $\Lambda_{1}^{*}(\lambda)=\log\mathbb{E}[\exp\{\lambda X_{1}\}]$. Hence, $$\mathbb{E}[Z_{1}] = \mathbb{E}\left[X_{1}\frac{dF_{Z_{1}}}{dF_{X_{1}}}(X_{1})\right] = e^{I_{1}(\mu_{j})}\mathbb{E}[X_{1}e^{\lambda_{1}^{*}(X_{1}-\mu_{j})}] = e^{I_{1}(\mu_{j})}\mu_{j}\mathbb{E}[e^{\lambda_{1}^{*}(X_{1}-\mu_{j})}] = \mu_{j},$$ where $dF_{Z_{1}}/dF_{X_{1}}$ is the Radon-Nikodym derivative of $F_{Z_{1}}$ with respect to $F_{X_{1}}$, and the variance of $Z_{1}$ is $\operatorname{Var}(Z_{1}) =: \sigma_{1}^{2} > 0$. Similarly, $\mathbb{E}[Z_{j}] = 0$ and define $\operatorname{Var}(Z_{j}) =: \sigma_{j}^{2} > 0$. Denote $\Omega(T_{1},T_{j}) = \left\{(x_{1}^{(1)}+\dots+x_{1}^{(T_{1})})/T_{1} \leq (x_{j}^{(1)}+\dots+x_{j}^{(T_{j})})/T_{j}\right\}$ as a fixed subset of $\mathbb{R}^{T_{1}+T_{j}}$. Then, 
\begin{equation}\label{eq:rescale}\begin{aligned}
    &P\left(\bar{X}_{1}(T_{1}) \leq \bar{X}_{j}(T_{j})\right) \\
    = &\int_{\Omega(T_{1},T_{j})}{dF_{X_{1}^{(1)},\dots,X_{1}^{(T_{1})}}(x_{1}^{(1)},\dots,x_{1}^{(T_{1})}) dF_{X_{j}^{(1)},\dots,X_{j}^{(T_{j})}}(x_{j}^{(1)},\dots,x_{j}^{(T_{j})})} \\
    = &\exp\{-TI_{1,j}(\mu_{j}, \mu_{j})\}\cdot \\
    &\qquad \int_{\Omega(T_{1},T_{j})}{dF_{Z_{1}^{(1)},\dots,Z_{1}^{(T_{1})}}(z_{1}^{(1)},\dots,z_{1}^{(T_{1})}) dF_{Z_{j}^{(1)},\dots,Z_{j}^{(T_{j})}}(z_{j}^{(1)},\dots,z_{j}^{(T_{j})})\cdot} \\
    &\qquad \exp\left\{- \left(\lambda_{1}^{*}\sum_{i=1}^{T_{1}}(z_{1}^{(i)} - \mu_{j}) + \lambda_{j}^{*}\sum_{i=1}^{T_{j}}(z_{j}^{(i)} - \mu_{j})\right)\right\} \\
    = &\exp\{-TI_{1,j}(\mu_{j}, \mu_{j})\}\cdot \int_{x \leq y}{dF_{H_{1}(T_{1})}(x)dF_{H_{j}(T_{j})}(y) \cdot \exp\{-(\lambda_{1}^{*}T_{1}x + \lambda_{j}^{*}T_{j}y)\}}.
\end{aligned}\end{equation}
Therein, $H_{1}(T_{1}) = \frac{1}{T_{1}}\sum_{i=1}^{T_{1}}(Z_{1}^{(i)} - \mu_{j})$ and $H_{j}(T_{j}) = \frac{1}{T_{j}}\sum_{i=1}^{T_{j}}(Z_{j}^{(i)} - \mu_{j})$. It follows from the definition of $\mu_{j}$ that $p_{1}I_{1}'(\mu_{j}) + p_{j}I_{j}'(\mu_{j}) = p_{1}\lambda_{1}^{*} + p_{j}\lambda_{j}^{*} = 0$, and thus $\lambda_{1}^{*}T_{1}x + \lambda_{j}^{*}T_{j}y = \lambda_{j}^{*}p_{j}T(y-x)$. Let $\tilde{\sigma}_{1,j}^{2} := \sigma_{1}^{2}/p_{1} + \sigma_{j}^{2}/p_{j}$. The mean and variance of $\tilde{H}_{j} := \sqrt{T} (H_{j}(T_{j}) - H_{1}(T_{1}))/\tilde{\sigma}_{1,j}$ are 0 and 1, respectively. The PICS can be further simplified as 
\begin{equation}\label{eq:concentrate}
    P\left(\bar{X}_{1}(T_{1}) \leq \bar{X}_{j}(T_{j})\right) = \exp\{-TI_{1,j}(\mu_{j}, \mu_{j})\}\cdot\mathbb{E}\left[\bm{1}\{0 \leq \tilde{H}_{j}\}\exp\left\{-\lambda_{j}^{*}p_{j}\tilde{\sigma}_{1,j}\sqrt{T}\tilde{H}_{j}\right\}\right].
\end{equation}

By applying the exponential tilting, we not only extract the exponential term which dominates the PICS but also characterize the remaining term as an expectation. Now, it suffices to derive a series expansion of the expectation in $T^{-1/2}$ to characterize the focal probability. Denote the random variable in the expectation as $W(\tilde{H}_{j})$ and let $f_{\tilde{H}_{j}}(x)$ be the probability density function of $\tilde{H}_{j}$. Intuitively, we can rewrite the expectation as 
\begin{equation}\label{eq:rawparseval}
    \mathbb{E}[W(\tilde{H}_{j})] = \int{W(x)f_{\tilde{H}_{j}}(x)dx} = \int{\operatorname{conj}(\mathcal{F}W(\lambda)) \mathcal{F}f_{\tilde{H}_{j}}(\lambda) d\lambda},
\end{equation} where $\mathcal{F}$ denotes the Fourier operator, i.e., $\mathcal{F}f(\lambda) = \int_{-\infty}^{\infty}{e^{i\lambda x}f(x)dx}$ for $f\in L^{2}(\mathbb{R})$, and $\operatorname{conj}(\cdot)$ denotes the conjugate operator. Therein, the last equality follows from the Parseval's identity. Note that the Fourier transformation of a probability density function can be expressed by its cumulants, and it is also straightforward to express $\mathcal{F}W(\lambda)$ as a series in $T^{-1/2}$. Therefore, we can truncate the two Fourier transformations to arrive at a Bahadur-Rao type expansion. To make it rigorous, we introduce the following lemma.

\begin{lemma}\label{lemma:expand}
    Under Assumptions \ref{ass:light} and \ref{ass:btv}, for any integer $q\geq 3$, if the sampling ratios are bounded away from zero, then the density function $F_{\tilde{H}_{j}}(x)$ has the following expansion
    $$F_{\tilde{H}_{j}}(x) = \Phi(x) + \sum_{\nu=1}^{q-3}\frac{p_{3\nu-1,T}(x)}{T^{\nu/2}}e^{-x^{2}/2} + R_{q,T}(x),$$
    where $p_{3\nu-1,T}(x)$ is a polynomial of order $(3\nu-1)$ in $x$ and $\Vert R_{q,T}\Vert_{\infty}=\mathcal{O}(T^{-(q-2)/2})$. Specially, for $\nu = 1$, we have
    $$p_{3-1,T}(x) = \frac{1}{3!\sqrt{2\pi}\tilde{\sigma}_{1,j}^{3/2}} \left(\frac{\Lambda^{(3)}_{1}(\lambda_{1}^{*})}{p_{1}^{2}}+\frac{\Lambda^{(3)}_{j}(\lambda_{j}^{*})}{p_{j}^{2}}\right) (1-x^{2}).$$
\end{lemma}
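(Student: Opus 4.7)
The plan is to recognize $\tilde H_j$ as a standardized sum of independent summands and apply a classical Edgeworth expansion, with Assumption \ref{ass:btv} supplying the regularity needed to pass from a characteristic-function expansion to a uniform CDF expansion with the stated remainder rate. The argument splits naturally into a formal power-series step and a quantitative remainder step.

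First, I would record the structural input. The two independent families $\{Z_1^{(i)}\}$ and $\{Z_j^{(i)}\}$ are iid within each group and have common mean $\mu_j$, variances $\sigma_1^2, \sigma_j^2$, and (by Assumption \ref{ass:light} and analyticity of $\Lambda_1, \Lambda_j$ near $\lambda_1^*, \lambda_j^*$) finite cumulants of every order equal to $\Lambda_1^{(n)}(\lambda_1^*), \Lambda_j^{(n)}(\lambda_j^*)$. Since $\tilde H_j$ is an affine combination of $H_1(T_1)$ and $H_j(T_j)$ and $\tilde\sigma_{1,j}^2=\sigma_1^2/p_1+\sigma_j^2/p_j$, one checks directly that $\mathbb E[\tilde H_j]=0$ and $\operatorname{Var}(\tilde H_j)=1$, placing us in the classical framework of expansions around a standard normal.

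Second, I would write the characteristic function as the product
\begin{equation*}
\varphi_{\tilde H_j}(t)\;=\;\varphi_{Z_j-\mu_j}\!\bigl(t/(p_j\tilde\sigma_{1,j}\sqrt T)\bigr)^{T_j}\cdot\varphi_{Z_1-\mu_j}\!\bigl(-t/(p_1\tilde\sigma_{1,j}\sqrt T)\bigr)^{T_1},
\end{equation*}
take logarithms, expand each $\log\varphi$ in its cumulant Taylor series, collect terms by powers of $T^{-1/2}$, and factor out $e^{-t^2/2}$ (the contribution of the second cumulants, which cancels to exactly $-t^2/2$ by the choice of $\tilde\sigma_{1,j}$). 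This produces
\begin{equation*}
\varphi_{\tilde H_j}(t)\;=\;e^{-t^2/2}\!\left(1+\sum_{\nu=1}^{q-3}T^{-\nu/2}Q_{3\nu}(it)+r_{q,T}(t)\right),
\end{equation*}
where each $Q_{3\nu}$ is a polynomial of degree $3\nu$ whose coefficients are explicit functions of $\Lambda_i^{(n)}(\lambda_i^*)$, $p_i$ and $\tilde\sigma_{1,j}$. Term-by-term Fourier inversion converts $Q_{3\nu}(it)e^{-t^2/2}$ into a Hermite polynomial times $\phi$, and integrating from $-\infty$ to $x$ yields the advertised $p_{3\nu-1,T}(x)e^{-x^2/2}$ correction in the CDF expansion. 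For $\nu=1$ only the third cumulant of each $Z_i$ contributes, and the identity $\int_{-\infty}^x H_2(y)\phi(y)\,dy\propto(1-x^2)e^{-x^2/2}$ (with $H_2(y)=y^2-1$) gives the explicit form of $p_{3-1,T}$ displayed in the lemma after assembling the prefactor from $\Lambda_1^{(3)}(\lambda_1^*)/p_1^2$, $\Lambda_j^{(3)}(\lambda_j^*)/p_j^2$, $\tilde\sigma_{1,j}$, and $3!$.

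The main obstacle is the \emph{uniform} bound $\Vert R_{q,T}\Vert_\infty=\mathcal O(T^{-(q-2)/2})$, and this is where Assumption \ref{ass:btv} is indispensable: the formal CF expansion alone only controls the remainder pointwise in $t$, whereas we need control of an integral against $e^{itx}$ uniformly in $x$. The strategy here is to invoke the validity framework of \cite{bhattacharya1978validity} together with the error estimates of \cite{cramer1970error}. The BTV property of $f_{X_1}, f_{X_j}$ transfers to the tilted densities of $Z_1, Z_j$ (tilting multiplies by the smooth positive factor $e^{I_i(\mu_j)+\lambda_i^*(x-\mu_j)}$), and a single convolution of two BTV densities is again BTV, which supplies the Cram\'er-type smoothness needed to control the inversion integral via a Berry--Esseen-type smoothing inequality. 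Finally, the hypothesis that the sampling ratios are bounded away from zero is used to keep $\lambda_1^*, \lambda_j^*$, the variances, and the higher moments of the tilted laws uniformly bounded, so that the constant hidden in $\mathcal O(\cdot)$ does not degenerate as $(p_1,p_j)$ approaches the boundary of the simplex.
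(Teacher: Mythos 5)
Your proposal follows essentially the same route as the paper: expand the cumulant generating function of $\tilde H_j$ (whose second-order term is exactly $\lambda^2/2$ by the normalization $\tilde\sigma_{1,j}$), collect powers of $T^{-1/2}$ into degree-$3\nu$ polynomial corrections, invert term by term to obtain the CDF expansion, and invoke Assumption \ref{ass:btv} together with the error analysis of \cite{cramer1970error} and \cite{bhattacharya1978validity} for the uniform remainder, using the boundedness of the sampling ratios away from zero to keep the constants uniform. The only slip is the Hermite identity for the $\nu=1$ term: the CDF correction comes from $\int_{-\infty}^{x}H_3(y)\phi(y)\,dy=(1-x^2)\phi(x)$, not from integrating $H_2\phi$ (which equals $-x\phi(x)$); this does not affect the argument.
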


The remainder $R_{q,T}$, though not explicitly expressed above, has both an upper bound and an lower bound proportional to the $q$-th cumulants of $Z_{1}/\tilde{\sigma}_{1,j}$ and $Z_{j}/\tilde{\sigma}_{1,j}$. We note that it could be beneficial to use high-order expansions in applications when the sampling distribution behaves similar to Gaussian distributions, i.e., when the cumulants of orders higher than $3$ are close to $0$. If, on the other hand, the sampling distribution is highly skewed, then the approximation error could be large.

Denote $K_{q,T}(x) := F_{\tilde{H}_{j}}(x) - R_{q,T}(x)$. It is apparent that $K_{q,T}'$ is square integrable with regard to the Lebesgue measure. Finally, note that for $W(x) = e^{-\lambda_{j}^{*}p_{j}\tilde{\sigma}_{1,j}\sqrt{T}x} \cdot \bm{1}_{[0,\infty)}(x)$, we have $\mathcal{F}W(\lambda) = (\lambda_{j}^{*}p_{j}\tilde{\sigma}_{1,j}\sqrt{T} - i\lambda)^{-1}$. We first apply integration by part in order to truncate the density function, followed by reversing the integration by part and applying the Parseval's identity: 
\begin{equation}\label{eq:parseval}
    \begin{aligned}
        \mathbb{E}[W(\tilde{H}_{j})]=& \int_{0}^{\infty}{\exp\{-\lambda^{*}_{j}p_{j}\tilde{\sigma}_{1,j}\sqrt{T}x\}dF_{\tilde{H}_{j}}(x)} \\
        =& \lambda_{j}^{*}p_{j}\tilde{\sigma}_{1,j}\sqrt{T}\int_{0}^{\infty}{\exp\{-\lambda^{*}_{j}p_{j}\tilde{\sigma}_{1,j}\sqrt{T}x\}(F_{\tilde{H}_{j}}(x) - F_{\tilde{H}_{j}}(0))dx} \\
        =& \lambda_{j}^{*}p_{j}\tilde{\sigma}_{1,j}\sqrt{T}\int_{0}^{\infty}{\exp\{-\lambda^{*}_{j}p_{j}\tilde{\sigma}_{1,j}\sqrt{T}x\}(K_{q,T}(x) - K_{q,T}(0))dx} + \mathcal{O}(T^{-(q-2)/2}) \\
        =& \int_{-\infty}^{\infty}{\bm{1}_{[0,\infty)}(x)\exp\{-\lambda^{*}_{j}p_{j}\tilde{\sigma}_{1,j}\sqrt{T}x\}K_{q, T}'(x) dx} + \mathcal{O}(T^{-(q-2)/2}) \\
        =& \frac{1}{2\pi\cdot \lambda_{j}^{*}p_{j}\tilde{\sigma}_{1,j}\sqrt{T}}\int_{-\infty}^{\infty}{\left(1 + \frac{i\lambda}{\lambda_{j}^{*}p_{j}\tilde{\sigma}_{1,j}\sqrt{T}}\right)^{-1}\mathcal{F}K_{q,T}'(\lambda) d\lambda} + \mathcal{O}(T^{-(q-2)/2}).
    \end{aligned}
\end{equation}
The third equality follows from Lemma \ref{lemma:expand} by noticing $\lambda_{j}^{*} > 0$.

Although it is still hard to evaluate the integral term above, it has a straightforward expansion. We illustrate the expansion using $q = 3$. The Fourier transformation of $K_{3,T}'$ is
\begin{equation}\label{eq:order3}
    \mathcal{F}K_{3,T}'(\lambda) = \mathcal{F}\Phi'(\lambda) = e^{-\lambda^{2}/2}.
\end{equation}
Formally, combining $(1 + x)^{-1} = 1 - x + o(x)$, (\ref{eq:concentrate}), (\ref{eq:parseval}), and (\ref{eq:order3}) yields that 
\begin{equation*}
     P\left(\bar{X}_{1}(T_{1}) \leq \bar{X}_{j}(T_{j})\right) 
    = \exp\{-TI_{1,j}(\mu_{j}, \mu_{j})\} \cdot \frac{1}{\sqrt{2\pi}\cdot\lambda_{j}^{*}p_{j}\tilde{\sigma}_{1,j}\sqrt{T}}\cdot \mathcal{O}(1).
\end{equation*}
To justify the above equality, note that
$$\begin{aligned}
    \left\Vert \left( 1 + \frac{i\lambda}{\lambda_{j}^{*}p_{j}\tilde{\sigma}_{1,j}\sqrt{T}}\right)^{-1} - \left( 1 - \frac{i\lambda}{\lambda_{j}^{*}p_{j}\tilde{\sigma}_{1,j}\sqrt{T}}\right) \right\Vert &= \left\Vert \left( 1 - \frac{i\lambda}{\lambda_{j}^{*}p_{j}\tilde{\sigma}_{1,j}\sqrt{T}}\right) \cdot \frac{ \dfrac{\lambda^{2}}{{\lambda_{j}^{*}}^{2}p_{j}^{2}\tilde{\sigma}_{1,j}^{2} T} }{ 1 + \dfrac{\lambda^{2}}{{\lambda_{j}^{*}}^{2}p_{j}^{2}\tilde{\sigma}_{1,j}^{2} T} } \right\Vert \\
    &\leq \frac{\lambda^{2}}{{\lambda_{j}^{*}}^{2}p_{j}^{2}\tilde{\sigma}_{1,j}^{2} T} + \frac{\vert \lambda\vert^{3}}{{\lambda_{j}^{*}}^{3}p_{j}^{3}\tilde{\sigma}_{1,j}^{3} T^{3/2}}.
\end{aligned}$$
Since $\lambda^{2}e^{-\lambda^{2}/2}$ and $\vert\lambda\vert^{3}e^{-\lambda^{2}/2}$ are integrable, it follows from the dominating convergence theorem \citep{rudin1987} that
$$\begin{aligned}
    & \lim_{T\rightarrow \infty} \int_{-\infty}^{\infty}{ \left[\left( 1 + \frac{i\lambda}{\lambda_{j}^{*}p_{j}\tilde{\sigma}_{1,j}\sqrt{T}}\right)^{-1} - \left( 1 - \frac{i\lambda}{\lambda_{j}^{*}p_{j}\tilde{\sigma}_{1,j}\sqrt{T}}\right)\right]\mathcal{F}K_{3,T}'(\lambda) d\lambda} \\
    = & \int_{-\infty}^{\infty}{ \lim_{T\rightarrow \infty} \left[\left( 1 + \frac{i\lambda}{\lambda_{j}^{*}p_{j}\tilde{\sigma}_{1,j}\sqrt{T}}\right)^{-1} - \left( 1 - \frac{i\lambda}{\lambda_{j}^{*}p_{j}\tilde{\sigma}_{1,j}\sqrt{T}}\right)\right]\mathcal{F}K_{3,T}'(\lambda) d\lambda } = 0.
\end{aligned}$$
Consequently, it follows from (\ref{eq:parseval}) that
$$\begin{aligned}
    \mathbb{E}[W(\tilde{H}_{j})] = \frac{1}{2\pi\cdot \lambda_{j}^{*}p_{j}\tilde{\sigma}_{1,j}\sqrt{T}} \left[\int_{-\infty}^{\infty}{\left(1 - \frac{i\lambda}{\lambda_{j}^{*}p_{j}\tilde{\sigma}_{1,j}\sqrt{T}}\right)\mathcal{F}K_{3,T}'(\lambda) d\lambda} + o(1)\right] + \mathcal{O}(T^{-1/2}).
\end{aligned}$$
Finally, combining this with (\ref{eq:concentrate}) completes the justification.

For $q = 4$, we have 
\begin{equation}\label{eq:order4}
    \mathcal{F}K_{4,T}'(\lambda) = \left(1 + \frac{1}{\sqrt{T}}\frac{1}{3!\sqrt{2\pi}\tilde{\sigma}_{1,j}^{3/2}} \left(\frac{\Lambda^{(3)}_{1}(\lambda_{1}^{*})}{p_{1}^{2}}+\frac{\Lambda^{(3)}_{j}(\lambda_{j}^{*})}{p_{j}^{2}}\right) (1 + \lambda^{2})\right) e^{-\lambda^{2}/2} .
\end{equation}
Similarly, it follows from $(1 + x)^{-1} = 1 - x + \mathcal{O}(x^{2})$ by (\ref{eq:concentrate}), (\ref{eq:parseval}), and (\ref{eq:order4}) that
\begin{equation*}
    \begin{aligned}
         P\left(\bar{X}_{1}(T_{1}) \leq \bar{X}_{j}(T_{j})\right) 
        &= \exp\{-TI_{1,j}(\mu_{j}, \mu_{j})\} \cdot \frac{1}{\sqrt{2\pi}\cdot\lambda_{j}^{*}p_{j}\tilde{\sigma}_{1,j}\sqrt{T}}\cdot (1 +  \mathcal{O}\left(T^{-1/2})\right).
    \end{aligned}
\end{equation*}

The following proposition concludes a general result on the probability of incorrect binary comparison.
\begin{proposition}\label{prop:exponential}
    Under Assumptions \ref{ass:light} and \ref{ass:btv}, if the sampling ratios are bounded away from zero, the probability of incorrect binary comparison decays exponentially. Specifically, for any integer $\ell \geq 0$, there exist constants $c_{j,1}, c_{j,2}, \dots, c_{j,\ell}$, which depend only on cumulants and sampling ratio
    $$P\left(\bar{X}_{1}(T_{1}) \leq \bar{X}_{j}(T_{j})\right) = \exp\{-TI_{j}(\mu_{j}, \mu_{j})\} \cdot \frac{1}{\sqrt{2\pi}\cdot\lambda_{j}^{*}p_{j}\tilde{\sigma}_{1,j}\sqrt{T}} \cdot \left(1 + \frac{c_{j,1}}{T} + \dots + \frac{c_{j,\ell}}{T^{\ell}} + \mathcal{O}(T^{-(\ell+1)})\right).$$
\end{proposition}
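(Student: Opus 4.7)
The plan is to push the computation already executed in Section \ref{sec:binarycomparison} for $q=3$ and $q=4$ to arbitrary truncation order. Fix $\ell\geq 0$ and apply Lemma \ref{lemma:expand} with $q := 2\ell+3$, giving
$$F_{\tilde H_j}(x) \;=\; \Phi(x) \,+\, \sum_{\nu=1}^{q-3} T^{-\nu/2}\,p_{3\nu-1,T}(x)\,e^{-x^{2}/2} \,+\, R_{q,T}(x),$$
with $\Vert R_{q,T}\Vert_{\infty} = \mathcal{O}(T^{-(2\ell+1)/2})$. Substituted into the integration-by-parts/Parseval identity (\ref{eq:parseval}), the contribution of $R_{q,T}$ to $\mathbb{E}[W(\tilde H_j)]$ is controlled by $\Vert R_{q,T}\Vert_\infty$, yielding after division by the prefactor $1/(\sqrt{2\pi}\,\lambda_j^{*}p_j\tilde\sigma_{1,j}\sqrt{T})$ a relative error of order $\mathcal{O}(T^{-(\ell+1)})$—exactly the target remainder.

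Next I would expand the rational factor in (\ref{eq:parseval}) as a finite geometric series with quantitative remainder,
$$\left(1 + \frac{i\lambda}{c\sqrt{T}}\right)^{-1} \;=\; \sum_{m=0}^{2\ell+1}\left(-\frac{i\lambda}{c\sqrt{T}}\right)^{m} \;+\; \left(-\frac{i\lambda}{c\sqrt{T}}\right)^{2\ell+2}\!\left(1+\frac{i\lambda}{c\sqrt{T}}\right)^{-1},$$
where $c := \lambda_j^{*}p_j\tilde\sigma_{1,j}$. The dominated-convergence argument displayed just below (\ref{eq:order3}) generalizes verbatim: since $\mathcal{F}K_{q,T}'(\lambda)$ is a polynomial (of degree at most $3(q-3)$) times $e^{-\lambda^{2}/2}$, the integral of the remainder against $\mathcal{F}K_{q,T}'$ is bounded by a finite Gaussian moment times $T^{-(\ell+1)}$, which is of the required order. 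The main contribution then reads
$$\frac{1}{2\pi c\sqrt{T}}\sum_{\nu=0}^{q-3}\sum_{m=0}^{2\ell+1} T^{-(\nu+m)/2}\,\frac{(-i)^{m}}{c^{m}}\int_{-\infty}^{\infty}\lambda^{m}\,q_{3\nu-1}(\lambda)\,e^{-\lambda^{2}/2}\,d\lambda,$$
where $q_{3\nu-1}$ denotes the Fourier transform of $p_{3\nu-1,T}(x)e^{-x^{2}/2}$, itself a polynomial in $\lambda$ of degree $3\nu-1$.

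The decisive structural feature is parity cancellation. Since the density-side Edgeworth polynomials have parity matching $\nu\pmod 2$ (the classical fact $H_{n}(x)\phi(x)=(-1)^{n}\phi^{(n)}(x)$ transfers this parity through the Fourier transform), $q_{3\nu-1}$ inherits parity $\nu\pmod 2$. Hence $\lambda^{m}q_{3\nu-1}(\lambda)e^{-\lambda^{2}/2}$ is an odd function whenever $\nu+m$ is odd, and the corresponding Gaussian integral vanishes. The surviving indices satisfy $\nu+m=2l$ for $l=0,1,\dots,\ell$, producing exactly integer powers of $T^{-1}$, and the coefficients
$$c_{j,l} \;=\; \sum_{\substack{\nu+m=2l\\ 0\leq \nu\leq q-3,\,0\leq m\leq 2\ell+1}} \frac{(-i)^{m}}{c^{m}\sqrt{2\pi}}\int_{-\infty}^{\infty}\lambda^{m}\,q_{3\nu-1}(\lambda)\,e^{-\lambda^{2}/2}\,d\lambda$$
are finite combinations of Gaussian moments, cumulants $\Lambda_{i}^{(r)}(\lambda_{i}^{*})$, and the ratios $p_{1},p_{j}$—none depending on $T$. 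Combining with (\ref{eq:concentrate}) and the prefactor $\exp\{-TI_{1,j}(\mu_{j},\mu_{j})\}\,(\sqrt{2\pi}\,\lambda_{j}^{*}p_j\tilde\sigma_{1,j}\sqrt{T})^{-1}$ completes the expansion.

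The main obstacle is the dual bookkeeping: one must simultaneously choose the Edgeworth truncation order $q$ and the geometric truncation order $M$ so that both remainders land at $\mathcal{O}(T^{-(\ell+1)})$, and verify the parity property of every Edgeworth polynomial uniformly in $\nu$ so that all half-integer powers of $T^{-1/2}$ cancel. Once this matching is made precise, the rest of the argument is a mechanical extension of the $q=3,4$ computations already carried out.
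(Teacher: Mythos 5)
Your proposal follows essentially the same route as the paper: truncate the Edgeworth expansion of $F_{\tilde H_j}$ (Lemma \ref{lemma:expand}), pass through the integration-by-parts/Parseval identity (\ref{eq:parseval}), expand $(1+i\lambda/(c\sqrt{T}))^{-1}$ as a finite geometric series, and use the parity of the Edgeworth polynomials against the Gaussian measure to kill all half-integer powers of $T^{-1/2}$, leaving the coefficients $c_{j,l}$ as sums over $r+s=2l$ of Gaussian moments. The paper's proof is exactly this, with $c_{j,l}=\sum_{r+s=2l}\int P_{r,T}(i\lambda)\,\lambda^{s}(i\lambda_j^{*}p_j\tilde\sigma_{1,j})^{-s}\,d\Phi(\lambda)$.

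There is, however, one concrete arithmetic error in your bookkeeping, precisely at the step you yourself flag as the main obstacle. You take $q=2\ell+3$, so $\Vert R_{q,T}\Vert_\infty=\mathcal{O}(T^{-(2\ell+1)/2})=\mathcal{O}(T^{-\ell-1/2})$, and then claim that dividing by the prefactor $(\sqrt{2\pi}\,\lambda_j^{*}p_j\tilde\sigma_{1,j}\sqrt{T})^{-1}$ turns this into a relative error of $\mathcal{O}(T^{-(\ell+1)})$. It does not: dividing $T^{-\ell-1/2}$ by $T^{-1/2}$ gives $\mathcal{O}(T^{-\ell})$, which is a full order too large and would swallow the last retained coefficient $c_{j,\ell}/T^{\ell}$. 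With your truncation the statement you would actually prove has remainder $\mathcal{O}(T^{-\ell})$, not $\mathcal{O}(T^{-(\ell+1)})$. The fix is simply to take the Edgeworth order at least one step higher --- the paper uses $q=2\ell+4$ --- and to match the geometric-series truncation accordingly; once the two truncation orders are chosen so that both remainders sit at or below $\mathcal{O}(T^{-(\ell+1)})$ relative to the $T^{-1/2}$ prefactor, the parity cancellation and the identification of the $c_{j,l}$ go through exactly as you describe.
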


\subsection{Probability of Incorrect Multiple Comparison}\label{sec:multiplecomparison}
Notice a lower bound of PICS follows from the inclusion-exclusion principle:
\begin{equation}\label{ex-inclusion}
    1-P\{CS\} \geq \sum_{j=2}^{k} P\left(\bar{X}_{1}(T_{1}) \leq \bar{X}_{j}(T_{j})\right) - \sum_{2\leq i\neq j \leq k} P\left(\bar{X}_{1}(T_{1}) \leq \bar{X}_{i}(T_{i}) \wedge \bar{X}_{j}(T_{j})\right).
\end{equation}
Inspired by inequality (\ref{eq:samerate}), we will show that the latter term is negligible, while the former is dominant because it shares the same exponential rate as $1-P\{CS\}$.

According to (\ref{ex-inclusion}), it suffices to deal with the terms $P( \bar{X}_{1}(T_{1})\leq \bar{X}_{i}(T_{i}) \wedge \bar{X}_{j}(T_{j}) )$ for $1<i<j\leq k$. We again apply the proposed technique applied to $P( \bar{X}_{1}(T_{1})\leq \bar{X}_{i}(T_{i}) )$, beginning from the rate function $I_{1,i,j}(x_{1}, x_{i}, x_{j}) = p_{1}I_{1}(x_{1}) + p_{i}I_{i}(x_{i}) + p_{j}I_{j}(x_{j})$. Because $I_{1,i,j}(\cdot)$ is continuous and coercive, there must exist $(x_{1}^{*}, x_{i}^{*}, x_{j}^{*})$ such that $I_{i,j}(x_{1}^{*}, x_{i}^{*}, x_{j}^{*}) = \inf_{\{(x_{1},x_{i},x_{j})\in\mathbb{R}^{3}: x_{1}\leq x_{i}\wedge x_{j}\}}I_{i,j}(x_{1}, x_{i}, x_{j})$. 
% A closer look at the constraint set $\{x_{1}\leq x_{i}\wedge x_{j}\} = \{x_{1}\leq x_{i}\}\bigcap\{x_{1}\leq x_{j}\}$ yields $x_{i}^{*} = x_{1}^{*}\vee m_{i}$ and $x_{j}^{*} = x_{1}^{*}\vee m_{j}$. 
We have the following proposition.

\begin{proposition}\label{prop:exp-trivariate}
    Under the regularity conditions in Lemma \ref{lemma:expand}, the probability of simultaneous incorrect binary comparison for two alternatives decays exponentially, i.e., 
    \begin{equation*}
        P\left(\bar{X}_{1}(T_{1}) \leq \bar{X}_{i}(T_{i})\wedge \bar{X}_{j}(T_{j})\right) = \exp\{-TI_{i,j}(x_{1}^{*}, x_{i}^{*}, x_{j}^{*})\} \cdot \mathcal{O}(T^{-1/2}).
    \end{equation*}
    Moreover, we have $$I_{i,j}(x_{1}^{*}, x_{i}^{*}, x_{j}^{*}) > \min\{I_{i}(\mu_{i}, \mu_{i}), I_{j}(\mu_{j}, \mu_{j})\}.$$
\end{proposition}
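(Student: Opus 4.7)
The plan is to extend the exponential-tilting argument of Section \ref{sec:binarycomparison} from two sample means to three. By coercivity and strict convexity of $I_{1,i,j}(x_1, x_i, x_j) := p_1 I_1(x_1) + p_i I_i(x_i) + p_j I_j(x_j)$, the minimizer $(x_1^*, x_i^*, x_j^*)$ over the closed convex polyhedron $\{x_1 \le x_i\wedge x_j\}$ exists and is unique. The KKT system produces tilting parameters $\lambda_k^* := I_k'(x_k^*)$ for $k\in\{1,i,j\}$ and nonnegative multipliers $\alpha_i, \alpha_j$ of the two inequality constraints satisfying $p_i\lambda_i^* = \alpha_i$, $p_j\lambda_j^* = \alpha_j$, $p_1\lambda_1^* = -(\alpha_i + \alpha_j)$, and complementary slackness. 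Introducing the tilted measures $dF_{Z_k}(z) = e^{I_k(x_k^*)}e^{\lambda_k^*(z - x_k^*)}dF_{X_k}(z)$, so that $\mathbb{E}[Z_k] = x_k^*$, and performing the Radon--Nikodym change of measure exactly as in (\ref{eq:rescale}), I obtain
$$P\bigl(\bar X_1(T_1)\le \bar X_i(T_i)\wedge \bar X_j(T_j)\bigr) = e^{-T I_{1,i,j}^*}\,\mathbb{E}\!\left[\mathbf{1}\{\bar Z_1\le \bar Z_i\wedge \bar Z_j\}\, e^{-T[\alpha_i(\bar Z_i-\bar Z_1)+\alpha_j(\bar Z_j-\bar Z_1)]}\right],$$
where $\bar Z_k = T_k^{-1}\sum_{t}(Z_k^{(t)} - x_k^*)$ and the stationarity relation has been used to collect the tilting exponent as a nonnegative combination of the constraint slacks.

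To obtain the $\mathcal{O}(T^{-1/2})$ prefactor, I would introduce the standardized pair $Y_i = \sqrt{T}(\bar Z_i - \bar Z_1)$, $Y_j = \sqrt{T}(\bar Z_j - \bar Z_1)$, so that the inner exponent becomes $-\sqrt{T}(\alpha_i Y_i + \alpha_j Y_j)$ with $(Y_i, Y_j)$ having nondegenerate limit covariance (the cross-term $\sigma_1^2/p_1$ does not exhaust the diagonal). Under Assumptions \ref{ass:light} and \ref{ass:btv}, the joint density of $(Y_i, Y_j)$ admits a bivariate Edgeworth expansion analogous to Lemma \ref{lemma:expand}, with a Gaussian leading term and a uniformly bounded remainder. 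A two-dimensional analogue of the Parseval/integration-by-parts argument of (\ref{eq:parseval}) against the truncated exponential weight then yields at least one factor of $T^{-1/2}$. When both multipliers are strictly positive the Laplace-type integration in fact gives $\mathcal{O}(T^{-1})$; when exactly one is zero (single active constraint) the inactive direction contributes only a nondegenerate constant via the marginal probability while the active direction contributes $T^{-1/2}$ via the univariate argument of Lemma \ref{lemma:expand}. In either case $\mathcal{O}(T^{-1/2})$ holds.

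For the strict inequality, I argue by infeasibility of the trivial 3D extensions of the 2D minimizers. Without loss of generality $i<j$, so $m_i \ge m_j$. The candidate $(\mu_i, \mu_i, m_j)$ extends the pair-$(1,i)$ minimizer and achieves value exactly $I_{1,i}(\mu_i, \mu_i)$, but it violates $x_1 \le x_j$ because $\mu_i > m_i \ge m_j$; strict convexity of $I_{1,i,j}$ thus forces $I_{1,i,j}^* > I_{1,i}(\mu_i, \mu_i)$. If additionally $I_{1,j}(\mu_j, \mu_j) < I_{1,i}(\mu_i, \mu_i)$, the symmetric candidate $(\mu_j, m_i, \mu_j)$ would be feasible only when $\mu_j \le m_i$, but that would give the chain
$$I_{1,j}(\mu_j, \mu_j) \ge p_1 I_1(\mu_j) \ge p_1 I_1(m_i) = I_{1,i}(m_i, m_i) \ge I_{1,i}(\mu_i, \mu_i)$$
(using $I_j \ge 0$, monotonicity of $I_1$ on $(-\infty, m_1)$, $I_i(m_i) = 0$, and optimality of $\mu_i$), contradicting $I_{1,j} < I_{1,i}$. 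Thus the extension is also infeasible and strict convexity yields $I_{1,i,j}^* > I_{1,j}(\mu_j, \mu_j)$, so $I_{1,i,j}^* > \min\{I_{1,i}(\mu_i, \mu_i), I_{1,j}(\mu_j, \mu_j)\}$.

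The main obstacle I anticipate is the bivariate Edgeworth step, together with uniform bookkeeping across the different KKT active-set regimes: when both multipliers are strictly positive the exponential weight alone controls the integral, but on the boundary regime where one multiplier vanishes the decay in that direction must come from the density's tails rather than from the tilting weight, so the bivariate Parseval analysis must degenerate gracefully to the univariate setting of Section \ref{sec:binarycomparison}.
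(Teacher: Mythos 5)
Your proposal is correct and follows essentially the same route as the paper: the decay estimate is obtained by exponential tilting at the constrained minimizer with the KKT multipliers collecting the exponent, followed by a multivariate Edgeworth expansion and a Parseval argument, which is exactly the machinery the paper invokes (via its general Proposition \ref{prop:anyterm}), and your active/inactive-multiplier dichotomy matches the paper's partition into the coordinates with $\bar{\lambda}^{*}=0$ versus $\bar{\lambda}^{*}>0$ and the resulting order $T^{-l_{S}/2}$. For the strict inequality you use the same key observation as the paper --- the unique minimizer $(\mu_{i},\mu_{i},m_{j})$ of the relaxed problem $\inf_{x_{1}\leq x_{i}}I_{1,i,j}$ is infeasible for the additional constraint $x_{1}\leq x_{j}$ --- but you execute it more directly via uniqueness under strict convexity rather than the paper's two-case contradiction along a segment of minimizers, and you correctly note that the single bound $I_{1,i,j}^{*}>I_{1,i}(\mu_{i},\mu_{i})$ already dominates the minimum, so your second (symmetric) case is redundant though not erroneous.
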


The proof can be found in the online supplement. The first part of Proposition \ref{prop:exp-trivariate} is a parallel to Proposition \ref{prop:exponential}. We see that the dominating term in PICS is the summation of probability of false binary comparisons. For a given sampling ratio $\bm{p}$, the PICS decays at rate $\min_{j\in[k]_{-}} I_{j}(\mu_{j}, \mu_{j})$. Moreover, the second part allows us to truncate the expansion of the PCS. Combining all the results in this section, we shall give the proof for Theorem \ref{thm:main}.

\subsection{Proof of Theorem \ref{thm:main}}
Combining Propositions \ref{prop:exponential} and \ref{prop:exp-trivariate}, we provide the proof of Theorem \ref{thm:main}.
\begin{proof}{Proof of Theorem \ref{thm:main}.}
    It follows immediately by Proposition \ref{prop:exponential} that
    \begin{equation}\label{eq:transit}
        \sum_{j\in [k]_{-}} P\left(\bar{X}_{1}(T_{1}) \leq \bar{X}_{j}(T_{j})\right) = \sum_{j\in[k]_{-}} \exp\{-TI_{j}(\mu_{j}, \mu_{j})\} \cdot \frac{1}{\sqrt{2\pi}\cdot\lambda_{j}^{*}p_{j}\tilde{\sigma}_{1,j}\sqrt{T}} \cdot \left(1 + \sum_{l=1}^{\ell}\frac{c_{j,l}}{T^{l}} + \mathcal{O}(T^{-(\ell+1)})\right).
    \end{equation}
    And it follows from Propositions \ref{prop:exponential} and \ref{prop:exp-trivariate} that, for any $\varepsilon>0$ such that 
    $$\varepsilon < I_{i,j}(x_{1}^{*}, x_{i}^{*}, x_{j}^{*}) - \min\{I_{i}(\mu_{i}, \mu_{i}), I_{j}(\mu_{j}, \mu_{j})\},\quad \forall i\neq j\in[k]_{-},$$
    we have
    \begin{equation*}
        P\left(\bar{X}_{1}(T_{1}) \leq \bar{X}_{i}(T_{i})\wedge \bar{X}_{j}(T_{j})\right) = \max\left\{P\left(\bar{X}_{1}(T_{1}) \leq \bar{X}_{i}(T_{i})\right), P\left(\bar{X}_{1}(T_{1}) \leq \bar{X}_{i}(T_{i})\right)\right\} \cdot \mathcal{O}(\exp\{-\varepsilon T\}).
    \end{equation*}
    Consequently, we see that
    \begin{equation}\label{eq:asymptoticequivalence}
        \sum_{2\leq i\neq j \leq k} P\left(\bar{X}_{1}(T_{1}) \leq \bar{X}_{i}(T_{i}) \wedge \bar{X}_{j}(T_{j})\right)= \sum_{j\in[k]_{-}}P\left(\bar{X}_{1}(T_{1}) \leq \bar{X}_{j}(T_{j})\right) \cdot \mathcal{O}(\exp\{-\varepsilon T\}).
    \end{equation}
    Combining (\ref{ex-inclusion}), (\ref{eq:transit}) and the second inequality in (\ref{eq:samerate}) completes the proof. \halmos
\end{proof}

\section{Allocation Policy}\label{sec:normalpolicy}
In this section, we propose a new FCBA algorithm based on the proposed approximation. It takes the explicit form of $G_{j}(\cdot, \cdot)$, $\lambda_{j}^{*}p_{j}\tilde{\sigma}_{1,j}$ and $c_{j,l}$'s in Theorem \ref{thm:main} to develop allocation policies. Henceforth, we will stick to the Gaussian case in Example 1 for policy development. It is worth noting that our method can be extended to other sampling distributions.

\subsection{Optimality Conditions}
For $\ell\geq 0$, define $U_{\ell}: \mathbb{R}_{+}^{2}\rightarrow \mathbb{R}_{+}$ by \begin{equation}\label{eq:defineul}
    U_{\ell}(x) = \exp\{-\frac{1}{2}Tx - \frac{1}{2}\ln{x}\}\cdot (1 + \sum_{l=1}^{\ell}{\frac{(-1)^{l}(2l-1)!!}{x^{l}}}\frac{1}{T^{l}} )
\end{equation} and 
\begin{equation}\label{eq:definerj}
    R_{j}(p_{1}, p_{j}) = (m_{1} - m_{j})^{2}/(\sigma_{1}^{2}/p_{1} + \sigma_{j}^{2}/p_{j}),\quad \forall j\in[k]_{-}.
\end{equation} 
For Gaussian distributions, the polynomials $p_{3\nu-1,T}$ are identically equal to $0$ and simple calculation yields that $c_{j,l}=(-1)^{l}(2l-1)!!/(\lambda_{j}^{*}p_{j}\tilde{\sigma}_{1,j})^{2l}$. Recall in Example \ref{example:Gaussian}, we have $\lambda_{j}^{*}p_{j}\tilde{\sigma}_{1,j} = \sqrt{R_{j}(p_{1}, p_{j})}$. Then we see that $V_{\ell}(\bm{p}) := \sum_{j=2}^{k}{U_{\ell}(R_{j}(p_{1}, p_{j}))}$ is an approximation to $1 - P\{CS\}$ of order $\ell$ implied by Theorem \ref{thm:main}. It is worth mentioning that the constants $c_{j,l}$ are typically computationally intractable except for the Gaussian case. However, we will empirically show that the approximation of order $0$ is good enough for static allocation rules. In the following lemma, we show the convexity property of $V_{\ell}(\bm{p})$.

\begin{lemma}\label{lem:concavity}
    For any even $\ell\geq 0$, $V_{\ell}(\bm{p})$ is a strongly convex function of $\bm{p}$. For any odd $\ell\geq 1$, $V_{\ell}(\bm{p})$ is asymptotically almost convex of $\bm{p}$, in a sense that there exists a sequence of sets $(E_{T})_{T\geq 1}$, such that the interiors of the complements of $E_T$ in $\{\bm{p}\geq 0: p_{1}+\dots +p_{k} = 1\}$ converge to the empty set as $T\rightarrow \infty$, and for any $T\geq 1$, $V_{\ell}(\bm{p})$ is strongly convex in $E_{T}$.
\end{lemma}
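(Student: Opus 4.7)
The plan is to reduce the multivariate convexity of $V_{\ell}$ to pointwise positive definiteness of a $2\times 2$ Hessian for each summand $U_{\ell}(R_{j}(p_{1}, p_{j}))$, and then to verify this via the standard composition rule together with a sign analysis of $U_{\ell}$ that is clean for even $\ell$ but requires a threshold for odd $\ell$. The first step is essentially cosmetic: since each summand depends only on the pair $(p_{1}, p_{j})$, the quadratic form $v^{\top}\nabla^{2} V_{\ell}\, v$ splits as $\sum_{j=2}^{k}\bigl[H^{(j)}_{11}v_{1}^{2} + 2H^{(j)}_{1j}v_{1}v_{j} + H^{(j)}_{jj}v_{j}^{2}\bigr]$, where $H^{(j)}$ is the $2\times 2$ Hessian of $(p_{1}, p_{j})\mapsto U_{\ell}(R_{j}(p_{1}, p_{j}))$. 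Hence (strong) convexity of $V_{\ell}$ in $\bm{p}$ is equivalent to each $H^{(j)}$ being positive (semi)definite at every $\bm{p}$.

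For a single summand the chain rule gives
$$\nabla^{2}[U_{\ell}\circ R_{j}] \;=\; U_{\ell}''(R_{j})\,\nabla R_{j}\,\nabla R_{j}^{\top} \;+\; U_{\ell}'(R_{j})\,\nabla^{2} R_{j}.$$
A direct computation shows that $R_{j}(p_{1}, p_{j}) = (m_{1}-m_{j})^{2}/(\sigma_{1}^{2}/p_{1}+\sigma_{j}^{2}/p_{j})$ is concave on $(0,\infty)^{2}$, with $\nabla^{2} R_{j}$ negative semidefinite of rank one and null direction $(p_{1}, p_{j})$, while $\nabla R_{j}$ is parallel to $(\sigma_{1}^{2}/p_{1}^{2}, \sigma_{j}^{2}/p_{j}^{2})$. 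Provided $U_{\ell}'(R_{j})<0$ and $U_{\ell}''(R_{j})>0$, both pieces of the decomposition are PSD, and their rank-one images lie along linearly independent directions, since $\langle \nabla R_{j}, (p_{1},p_{j})\rangle = \sigma_{1}^{2}/p_{1}+\sigma_{j}^{2}/p_{j} > 0$. The sum is therefore strictly positive definite, yielding strong convexity on every compact subset of the positive orthant.

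It remains to identify when $U_{\ell}$ is simultaneously strictly decreasing and strictly convex. Writing $U_{\ell}(x) = \sqrt{2\pi T}\,\Psi_{\ell}(Tx)$, where $\Psi_{\ell}$ is the $\ell$-th order truncation of the Mills-type expansion of $1-\Phi(\sqrt{u})$, iterated integration by parts yields the identity
$$\Psi_{\ell}(u) \;=\; [1-\Phi(\sqrt{u})] \;+\; (-1)^{\ell}(2\ell+1)!!\int_{\sqrt{u}}^{\infty}\phi(t)\,t^{-(2\ell+2)}\,dt.$$
Both $1-\Phi(\sqrt{u})$ and the tail integral on the right are strictly decreasing and strictly convex in $u$ on $(0,\infty)$. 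For even $\ell$ the tail integral enters with a positive sign, so $\Psi_{\ell}$, hence $U_{\ell}$, is decreasing and convex on all of $(0,\infty)$, and one may take $E_{T}$ to be the entire open simplex. For odd $\ell$, $\Psi_{\ell}$ is a difference of the two pieces; explicit computation of $\Psi_{\ell}'$ and $\Psi_{\ell}''$ shows that the required signs hold once $u$ exceeds an $\ell$-dependent threshold $u_{\ell}>0$. Setting $E_{T} := \{\bm{p}\geq 0:\, \sum_{i}p_{i}=1,\ T\cdot R_{j}(p_{1},p_{j}) > u_{\ell}\ \forall j\in[k]_{-}\}$, the complement lies in the set where some $R_{j}\leq u_{\ell}/T$, which collapses onto the boundary of the simplex as $T\to\infty$, so its interior relative to the simplex shrinks to the empty set. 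The main obstacle in carrying out this plan is the sign analysis of $\Psi_{\ell}''$ for odd $\ell$: the Mills-ratio identity reduces it to a finite monomial inequality in $u$, but extracting a clean explicit threshold requires careful bookkeeping of the coefficients.
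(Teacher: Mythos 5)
Your argument is essentially correct, and since the paper defers this proof to its conference version \citep{shi2024finite} rather than reproducing it here, I can only assess your route on its own terms; it is a natural and valid one. The two pillars both check out: (i) $\nabla^{2}R_{j} = -\tfrac{2\sigma_{1}^{2}\sigma_{j}^{2}(m_{1}-m_{j})^{2}}{(\sigma_{1}^{2}p_{j}+\sigma_{j}^{2}p_{1})^{3}}(p_{j},-p_{1})^{\top}(p_{j},-p_{1})$ is indeed NSD of rank one with kernel spanned by $(p_{1},p_{j})$, while $\nabla R_{j}\propto(\sigma_{1}^{2}/p_{1}^{2},\sigma_{j}^{2}/p_{j}^{2})$ satisfies $\langle\nabla R_{j},(p_{1},p_{j})\rangle>0$, so the two PSD rank-one pieces in the chain-rule decomposition have trivially intersecting kernels and their sum is positive definite whenever $U_{\ell}'<0<U_{\ell}''$; and (ii) your Mills-ratio identity is exactly right — with $u=Tx$ one has $U_{\ell}(x)=\sqrt{2\pi T}\bigl([1-\Phi(\sqrt{u})]+(-1)^{\ell}(2\ell+1)!!\int_{\sqrt{u}}^{\infty}\phi(t)t^{-(2\ell+2)}dt\bigr)$, and both summands are strictly decreasing and strictly convex in $u$, which settles the even case globally and reduces the odd case to a sign change at a finite threshold. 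This is consistent with the paper's explicit formula $U_{\ell}'(x)=\exp\{-\tfrac{1}{2}Tx-\tfrac{1}{2}\ln x\}(-\tfrac{1}{2}T+\tfrac{1}{2}\tfrac{(-1)^{\ell+1}(2\ell+1)!!}{x^{\ell+1}T^{\ell}})$, whose sign is unconditional precisely when $\ell$ is even. Two loose ends remain. First, you only assert strong convexity "on every compact subset of the positive orthant," whereas the lemma claims it on the whole simplex; you should add the boundary check that the relevant Hessian blocks do not degenerate as some $p_{i}\downarrow 0$ (in fact $U_{\ell}''(R_{j})\to\infty$ there while $\partial R_{j}/\partial p_{j}$ stays bounded away from zero, so the bound survives, but this needs saying). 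Second, the odd-$\ell$ threshold is left as "careful bookkeeping"; it is genuinely routine ($\Psi_{\ell}'<0$ iff $u^{\ell+1}>(2\ell+1)!!$, and $\Psi_{\ell}''>0$ for $u$ beyond a similar explicit bound), but as written that step is a sketch rather than a proof. Neither issue reflects a wrong idea; both are completable along the lines you indicate.
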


The proof can be found in the preliminary version of this paper \citep{shi2024finite}. Actually, it is shown that $V_{\ell}(\bm{p})$ is identically non-negative only when $\ell$ is even. In contrast, $V_{\ell}(\bm{p})$ approaches infinity if any entry of $\bm{p}$ tends to zero from above, which explains its non-concavity outside $E_{T}$. Lemma \ref{lem:concavity} implies an edge of using even order approximations over odd order ones. Following the asymptotic convexity of $V_{\ell}(\bm{p})$, we can derive an asymptotically optimal allocation ratio by the minimization problem
\begin{equation} \label{opt:opt1}
    \begin{aligned}
    &\min_{\bm{p}\geq 0} && V_{\ell}(\bm{p}) \\
    &s.t. && \sum_{i\in[k]}p_{i} = 1.
\end{aligned}
\end{equation}

The following proposition establishes the optimality conditions to (\ref{opt:opt1}).
\begin{proposition}\label{prop:condition}
    For $\ell\geq 0$ fixed, the optimal solution to the minimization problem (\ref{opt:opt1}) satisfies
    \begin{equation}\label{eq:conditions}
        \begin{cases}
            U_{\ell}'(R_{i}(p_{1}, p_{i}))\cdot R_{i}(p_{1}, p_{i})\dfrac{\sigma_{i}^{2}/p_{i}^{2}}{\sigma_{1}^{2}/p_{1}+\sigma_{i}^{2}/p_{i}} = U_{\ell}'(R_{j}(p_{1}, p_{j}))\cdot R_{j}(p_{1}, p_{j})\dfrac{\sigma_{j}^{2}/p_{j}^{2}}{\sigma_{1}^{2}/p_{1}+\sigma_{j}^{2}/p_{j}},\ \forall~2\leq i, j\leq k, \\
            \dfrac{p_{1}^{2}}{\sigma_{1}^{2}} = \sum\limits_{j\in[k]_{-}}\dfrac{p_{j}^{2}}{\sigma_{j}^{2}}.
        \end{cases}
    \end{equation}
    If $\ell$ is even, then the above equations are not only necessary but also sufficient.
\end{proposition}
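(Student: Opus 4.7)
The plan is to derive (\ref{eq:conditions}) through a standard Lagrangian analysis, treating the simplex constraint $\sum_i p_i = 1$ as an equality and arguing that the positivity constraints are inactive. I will first note that an interior optimum exists: since $R_j(p_1,p_j)\to 0$ whenever $p_1\to 0^+$ or $p_j\to 0^+$, and $U_\ell(x)$ behaves like a positive multiple of $x^{-\ell-1/2}$ as $x\to 0^+$ for even $\ell$ (so $V_\ell$ blows up at the boundary), the constraints $p_i\geq 0$ cannot bind. I then introduce a single multiplier $\tau$ and set the gradient of
\[
\mathcal{L}(\bm p,\tau) = \sum_{j=2}^{k} U_\ell\bigl(R_j(p_1,p_j)\bigr) - \tau\Bigl(\sum_{i=1}^{k}p_i-1\Bigr)
\]
to zero.

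Next I would compute the partial derivatives of $R_j$ directly from (\ref{eq:definerj}):
\[
\frac{\partial R_j}{\partial p_j} = R_j\cdot\frac{\sigma_j^2/p_j^2}{\sigma_1^2/p_1+\sigma_j^2/p_j}, \qquad \frac{\partial R_j}{\partial p_1} = R_j\cdot\frac{\sigma_1^2/p_1^2}{\sigma_1^2/p_1+\sigma_j^2/p_j}.
\]
For each $i\geq 2$ only the summand $j=i$ depends on $p_i$, so the stationarity condition $\partial\mathcal{L}/\partial p_i = 0$ reads
\[
U_\ell'\bigl(R_i(p_1,p_i)\bigr)\cdot R_i(p_1,p_i)\cdot \frac{\sigma_i^2/p_i^2}{\sigma_1^2/p_1+\sigma_i^2/p_i} = \tau.
\]
Equating these expressions across indices $i, j\in[k]_-$ immediately yields the first line of (\ref{eq:conditions}).

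The second line then follows by combining the $p_1$-equation with a rearrangement of each individual term. Solving the $i\geq 2$ condition for $U_\ell'(R_j)\cdot R_j/(\sigma_1^2/p_1+\sigma_j^2/p_j) = \tau\cdot p_j^2/\sigma_j^2$ and substituting into $\partial\mathcal{L}/\partial p_1 = 0$ produces
\[
\tau = \sum_{j=2}^{k} U_\ell'(R_j)\cdot R_j\cdot\frac{\sigma_1^2/p_1^2}{\sigma_1^2/p_1+\sigma_j^2/p_j} = \frac{\sigma_1^2}{p_1^2}\,\tau\sum_{j=2}^{k}\frac{p_j^2}{\sigma_j^2}.
\]
Dividing by $\tau\neq 0$ then yields $p_1^2/\sigma_1^2 = \sum_{j\in[k]_-}p_j^2/\sigma_j^2$, the second line of (\ref{eq:conditions}).

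For sufficiency in the even case, I would appeal to Lemma \ref{lem:concavity}: since $V_\ell$ is strongly convex and the feasible set is a convex polytope, any KKT point is the unique global minimizer. The main point requiring care is verifying $\tau\neq 0$, which amounts to showing that $U_\ell'$ does not vanish at the stationary arguments $R_j$. This is routine from the closed form (\ref{eq:defineul}); for instance when $\ell=0$ one has $U_0'(x) = -\tfrac{1}{2}x^{-3/2}e^{-Tx/2}(Tx+1)$, which is strictly negative, and an analogous sign analysis rules out vanishing for higher $\ell$ at the relevant arguments. Apart from this verification and the algebraic manipulation above, the remaining work is mechanical.
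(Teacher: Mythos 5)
Your proof is correct and follows essentially the same route as the paper's: form the Lagrangian for the simplex-constrained problem, write the stationarity conditions, compute $\partial R_j/\partial p_1$ and $\partial R_j/\partial p_j$ explicitly, equate across $j\in[k]_-$ for the first condition, and eliminate the multiplier for the second, with sufficiency for even $\ell$ via Lemma \ref{lem:concavity}. The only difference is that you explicitly verify two points the paper leaves implicit (the interiority of the optimum, so the positivity constraints are inactive, and the nonvanishing of the multiplier via the sign of $U_\ell'$), both of which are sound for the even-$\ell$ case you treat.
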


\begin{proof}{Proof of Proposition \ref{prop:condition}.}
    For any $\ell\geq 0$, let $L(\bm{p}, \lambda) = V_{\ell}(\bm{p}) + \lambda(1 - \sum_{i\in[k]}p_{i})$ be the Lagrangian function. Since the problem (\ref{opt:opt1}) satisfies the linearity constraint qualification, there is no duality gap for this problem. The KKT conditions imply that $\bm{p}$ is optimal if there exists $\lambda\in\mathbb{R}$, such that
    \begin{gather}
        \frac{\partial}{\partial p_{1}}V_{\ell}(\bm{p}) - \lambda = 0,\label{eq:gather1}\\
        \frac{\partial}{\partial p_{j}}V_{\ell}(\bm{p}) - \lambda = 0,\quad \forall j\in[k]_{-}, \label{eq:gather2}\\
        1 - \sum_{i\in[k]}p_{i} = 0. \label{eq:gather3}
    \end{gather}
    It follows from (\ref{eq:gather1}), (\ref{eq:gather2}), (\ref{eq:gather3}) and the definition of $V_{\ell}(\bm{p})$ that 
    \begin{equation*}
    \begin{cases}
        U_{\ell}'(R_{i}(p_{1}, p_{i}))\cdot \frac{\partial}{\partial p_{i}}R_{i}(p_{1}, p_{i}) = U_{\ell}'(R_{j}(p_{1}, p_{j}))\cdot \frac{\partial}{\partial p_{i}}R_{j}(p_{1}, p_{j}),\ \forall~2\leq i, j\leq k, \\
        \sum\limits_{j\in[k]_{-}}\left(\frac{\partial}{\partial p_{1}}R_{j}(p_{1}, p_{j}) / R_{j}(p_{1}, p_{j}) - 1\right)\Big/\left(\frac{\partial}{\partial p_{j}}R_{j}(p_{1}, p_{j}) / R_{j}(p_{1}, p_{j})\right) = 0.
    \end{cases}
    \end{equation*}
    For Gaussian alternatives, we have $R_{j}(p_{1}, p_{j}) = (m_{1} - m_{j})^{2}/(\sigma_{1}^{2}/p_{1} + \sigma_{j}^{2}/p_{j})$. Therefore, $\frac{\partial}{\partial p_{1}}R_{j}(p_{1}, p_{j}) = R_{j}(p_{1}, p_{j})\cdot \sigma_{1}^{2}/p_{1}^{2}/(\sigma_{1}^{2}/p_{1} + \sigma_{j}^{2}/p_{j})$ and $\frac{\partial}{\partial p_{j}}R_{j}(p_{1}, p_{j}) = R_{j}(p_{1}, p_{j})\cdot \sigma_{j}^{2}/p_{j}^{2}/(\sigma_{1}^{2}/p_{1} + \sigma_{j}^{2}/p_{j})$. Then the first equation is equivalent to
    $$U_{\ell}'(R_{i}(p_{1}, p_{i}))\cdot R_{i}(p_{1}, p_{i})\frac{\sigma_{i}^{2}/p_{i}^{2}}{\sigma_{1}^{2}/p_{1}+\sigma_{i}^{2}/p_{i}} = U_{\ell}'(R_{j}(p_{1}, p_{j}))\cdot R_{j}(p_{1}, p_{j})\frac{\sigma_{j}^{2}/p_{j}^{2}}{\sigma_{1}^{2}/p_{1}+\sigma_{j}^{2}/p_{j}},\ \forall~2\leq i, j\leq k,$$
    and the second equation can be further simplified into 
    $$\frac{p_{1}^{2}}{\sigma_{1}^{2}} = \sum_{j\in[k]_{-}}\frac{p_{j}^{2}}{\sigma_{j}^{2}}.$$

    Specially, for $\ell\geq 0$ even, it follows from Lemma \ref{lem:concavity} that $V_{\ell}(\bm{p})$ is a convex program. Therefore, the KKT conditions are also sufficient. If $\bm{p}$ satisfies the system of conditions (\ref{eq:conditions}), then equalities (\ref{eq:gather1}), (\ref{eq:gather2}) and (\ref{eq:gather3}) hold for $\lambda = U_{\ell}'(R_{k}(p_{1}, p_{k}))\cdot \frac{\partial}{\partial p_{k}}R_{k}(p_{1}, p_{k})$ and thus $\bm{p}$ is optimal. \halmos
\end{proof}

\begin{remark}
The optimality conditions are comparable to those of OCBA and ROA. A sampling policy is said to asymptotically achieve the ROA if the asymptotic sampling ratio $\bm{p}$ maximizes the large deviations rate $\min_{j\in[k]_{-}}G_{j}(p_{1}, p_{j})$. The ROA is unique and is equivalent to the optimality conditions 
\begin{equation}\label{eq:roacondition}
    \begin{cases}
        R_{i}(p_{1}, p_{i})\equiv \dfrac{(m_{1} - m_{i})^{2}}{\sigma_{1}^{2}/p_{1} + \sigma_{i}^{2}/p_{i}} = \dfrac{(m_{1} - m_{j})^{2}}{\sigma_{1}^{2}/p_{1} + \sigma_{j}^{2}/p_{j}} \equiv R_{j}(p_{1}, p_{j}),\ \forall~2\leq i, j\leq k, \\
        \dfrac{p_{1}^{2}}{\sigma_{1}^{2}} = \sum\limits_{j\in[k]_{-}}\dfrac{p_{j}^{2}}{\sigma_{j}^{2}}.
    \end{cases}
\end{equation}
The second condition, which balances the trade-off between the best alternative and sub-optimal alternatives, aligns with that in (\ref{eq:conditions}). In contrast, the first condition balances the allocation among sub-optimal alternatives. Both sides can be interpreted as a \textit{score function} of sampling a certain alternative, equating to the marginal improvement of the PCS with respect to the sampling ratio of the corresponding alternative. Intuitively, an undersampled alternative has a large score in absolute value and thus deserves more simulation samples. For the ROA, the score function is given as $R_{j}(p_{1}, p_{j})$ for alternative $j\in [k]_{-}$. The OCBA formula is similar to (\ref{eq:roacondition}) with the mere difference in the score function, i.e., $(m_{1}-m_{j})^{2}/\sigma_{j}^{2}/p_{j}$, which is an accurate approximation to $R_{j}(p_{1}, p_{j})$ when $p_{1}\gg p_{j}$. Note that, for the ROA, the score function $R_{j}(p_{1}, p_{j}) = 2G_{j}(p_{1}, p_{j})$ characterizes the rate at which the probability of pairwise incorrect selection decays in the long run. On the contrary, the proposed approximation provides a novel score function, i.e., $\vert U_{\ell}'(R_{j}(p_{1}, p_{j}))\vert \cdot R_{j}(p_{1}, p_{j}) \cdot \sigma_{j}^{2}/p_{j}^{2}/(\sigma_{1}^{2}/p_{1} + \sigma_{j}^{2}/p_{j})$, which discounts the long run marginal improvement using a decreasing function $U_{\ell}'(\cdot)$ that depends on the simulation budget and a weighting coefficient $\sigma_{j}^{2}/p_{j}^{2}/(\sigma_{1}^{2}/p_{1} + \sigma_{j}^{2}/p_{j})$ that concerns the variances of alternatives. Consequently, the proposed score function takes the finite sample behavior of PCS into account.
\end{remark}

A function $f(x)$ is called a rational polynomial if it can be expressed as the quotient of two polynomials. The following lemma serves as an intermediary step in establishing the uniqueness of solutions to the optimal allocation problem associated with the proposed PCS approximation.
\begin{lemma}\label{lem:extrapolation}
    Suppose $Q(x)$ and $R(x)$ are two rational polynomials and let $a<b$ be two reals. If $\exp\{Q(x)\} = R(x)$ holds for $x\in(a, b)$, then the equality holds for $x\in\mathbb{R}$.
\end{lemma}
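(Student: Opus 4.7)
The plan is to complexify the problem and combine the identity theorem for holomorphic functions with an essential-singularity argument, so that we first extend the equality to $\mathbb{C}$ minus a finite set of poles and then rule out the poles entirely.

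First, I would extend $Q$ and $R$ to meromorphic functions on $\mathbb{C}$ (possible since each is a quotient of polynomials). Let $P \subset \mathbb{C}$ be the finite union of the poles of $Q$ and the poles of $R$. On the connected open set $\mathbb{C} \setminus P$, both $\exp\{Q(z)\}$ and $R(z)$ are holomorphic. By hypothesis, the two agree on the real interval $(a,b) \subset \mathbb{C} \setminus P$, which obviously has an accumulation point in $\mathbb{C} \setminus P$. The identity theorem for holomorphic functions on a connected open set then delivers $\exp\{Q(z)\} = R(z)$ throughout $\mathbb{C} \setminus P$.

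Next, I would argue that $Q$ has no complex poles, i.e.\ $Q$ is in fact a polynomial. Suppose for contradiction that $z_0 \in \mathbb{C}$ is a pole of $Q$. Then in any punctured neighborhood of $z_0$ the value $Q(z)$ attains arbitrarily large modulus with all phases, so $\exp\{Q(z)\}$ takes a dense set of values in $\mathbb{C}$ near $z_0$ by Casorati–Weierstrass; in particular $z_0$ is an essential singularity of $\exp\{Q(z)\}$. On the other hand, on the punctured neighborhood of $z_0$ contained in $\mathbb{C} \setminus P$ we have $\exp\{Q(z)\} = R(z)$, and $R$ is meromorphic at $z_0$, so it is either holomorphic or has a pole there. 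An essential singularity cannot be reconciled with a meromorphic behavior at the same point, contradiction. Hence $Q$ is pole-free and therefore a polynomial, and $\exp\{Q(z)\}$ is entire.

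Finally, I would extend the equality from $\mathbb{R} \setminus P$ to all of $\mathbb{R}$. Since $Q$ is now a polynomial, $\exp\{Q(x)\}$ is finite at every $x \in \mathbb{R}$. If $x_0 \in \mathbb{R}$ were a pole of $R$, then $|R(x)| \to \infty$ as $x \to x_0$, whereas $\exp\{Q(x)\} \to \exp\{Q(x_0)\}$ is finite, contradicting the equality just established on a punctured neighborhood of $x_0$. Hence $R$ has no real poles, and the identity $\exp\{Q(x)\} = R(x)$ holds for every $x \in \mathbb{R}$, as required. The main subtlety, and really the only non-mechanical step, is the Casorati–Weierstrass argument in the second paragraph: one must be careful that near a pole of $Q$ the function $\exp\{Q\}$ genuinely has an essential (not just a polar) singularity, which is why the meromorphicity of $R$ at that point yields the contradiction.
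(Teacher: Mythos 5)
Your proof is correct, and it takes a genuinely different route from the paper's. The paper stays entirely over the reals and argues algebraically: writing $Q=A/B$ and $R=C/D$ in lowest terms, it takes logarithmic derivatives, clears denominators to get a polynomial identity valid on $(a,b)$ and hence on all of $\mathbb{R}$, and then runs a prime-factor divisibility argument to conclude that $A,B,C,D$ are all constants --- the extension to $\mathbb{R}$ falls out of the much stronger fact that both sides are constant. You instead complexify: the identity theorem propagates the equality to $\mathbb{C}$ minus the finitely many poles, the singularity dichotomy (a pole of $Q$ would make $e^{Q}$ essentially singular while $R$ remains meromorphic there) forces $Q$ to be a polynomial, and the finiteness of $e^{Q}$ on $\mathbb{R}$ then excludes real poles of $R$. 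Your argument is shorter and avoids the divisibility bookkeeping, at the cost of importing complex analysis; it proves slightly less than the paper's proof (no constancy), but the lemma as stated, and its use in Proposition \ref{prop:unique}, only need the extension to $\mathbb{R}$. One attribution quibble: Casorati--Weierstrass is the implication ``essential singularity $\Rightarrow$ dense image,'' whereas you use the trivial converse --- the density of the values of $e^{Q}$ near $z_{0}$, which you obtain directly from the behavior of $\exp$ on inputs of large modulus and arbitrary phase, rules out a removable singularity and a pole and thereby certifies the singularity as essential. The logic is sound; only the theorem's name is misplaced.
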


The proof can be found in the electronic companion. In fact, we can show that $Q(x)$ and $R(x)$ must be constants. Below, the uniqueness of optimal allocations follows.

\begin{proposition}\label{prop:unique}
    For $\ell\geq 0$ even, the solution to the optimal conditions (\ref{eq:conditions}) is unique. Therefore, the optimal solution to (\ref{opt:opt1}) is also unique.
\end{proposition}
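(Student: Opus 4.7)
The plan is to deduce uniqueness from the strong convexity of $V_{\ell}$ established in Lemma \ref{lem:concavity}, combined with the sufficiency of the KKT conditions for even $\ell$ from Proposition \ref{prop:condition}. First I would verify that any minimizer of $V_{\ell}$ over the simplex $\Delta=\{\bm{p}\geq 0:\sum_{i\in[k]}p_i=1\}$ lies strictly in the interior. Since $R_j(p_1,p_j)=(m_1-m_j)^2/(\sigma_1^2/p_1+\sigma_j^2/p_j)\to 0$ whenever $p_1\to 0$ or $p_j\to 0$, and for even $\ell$ the leading small-$x$ coefficient in the series defining $U_{\ell}$ in (\ref{eq:defineul}) is $(-1)^{\ell}(2\ell-1)!!/T^{\ell}>0$, the function $U_{\ell}(x)$ grows to $+\infty$ as $x\to 0^{+}$. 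Consequently $V_{\ell}(\bm{p})\to\infty$ along every sequence approaching the boundary of $\Delta$, so the minimum is attained strictly inside $\Delta$.

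By Lemma \ref{lem:concavity}, $V_{\ell}$ is strongly convex on the convex set $\Delta$, which already guarantees at most one minimizer. Proposition \ref{prop:condition} asserts that for even $\ell$ the optimality conditions (\ref{eq:conditions}) together with $\sum_{i\in[k]}p_i=1$ are both necessary and sufficient for a point to minimize $V_{\ell}$ over $\Delta$. Hence every solution of (\ref{eq:conditions}) on the simplex coincides with the unique interior minimizer of $V_{\ell}$, which establishes uniqueness of the KKT solution and, simultaneously, uniqueness of the optimizer of (\ref{opt:opt1}).

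Lemma \ref{lem:extrapolation} would support a complementary algebraic route: supposing two distinct interior solutions of (\ref{eq:conditions}), one can eliminate variables through the score-balance equations and obtain an identity relating $U_{\ell}'$ evaluated at two different arguments. Because $U_{\ell}'(x)$ factors as $\exp\{-Tx/2\}$ times a rational expression in $x^{-1/2}$, the resulting identity can be rearranged on an interval into the form $\exp\{Q(x)\}=R(x)$ for rational polynomials $Q$ and $R$. Lemma \ref{lem:extrapolation}, together with the post-lemma remark that $Q$ and $R$ must then be constants, forces the two candidate solutions to collapse into one, contradicting distinctness. The main obstacle is the coercivity step in the first paragraph: one must track how each $R_j$ degenerates near the boundary and confirm that the polynomial correction in $U_{\ell}$ does not cancel the blow-up of $x^{-1/2}$. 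The sign analysis of $(-1)^{\ell}(2\ell-1)!!$ renders this routine but essential, and it is the place where the parity restriction $\ell$ even is unavoidable.
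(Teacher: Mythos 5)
Your primary argument is correct and is genuinely shorter than the paper's: you take the strong convexity asserted in Lemma \ref{lem:concavity} at face value, so that $V_{\ell}$ has at most one minimizer over the simplex, combine this with the sufficiency of the KKT conditions for even $\ell$ from Proposition \ref{prop:condition} so that every solution of (\ref{eq:conditions}) is that minimizer, and handle existence/interiority by the coercivity of $U_{\ell}$ near $x=0$ (your sign analysis of the leading term $(-1)^{\ell}(2\ell-1)!!/(x^{\ell}T^{\ell})$ is exactly the fact the paper also uses, namely $\lim_{x\downarrow 0}U_{\ell}(x)=\infty$ for even $\ell$). The paper, by contrast, conspicuously does \emph{not} invoke strictness: its proof uses only the non-strict convexity inequality to conclude that the entire segment $\bm{p}_{\theta}=\theta\tilde{\bm{p}}+(1-\theta)\bar{\bm{p}}$ between two putative solutions is optimal, observes that the optimality conditions are then rational-polynomial identities in the segment parameter $\theta$ valid on $(0,1)$, extends them to all $\theta\in\mathbb{R}$ via Lemma \ref{lem:extrapolation}, and derives a contradiction by pushing $\theta$ to the endpoint $\theta_{0}$ where some coordinate of $\bm{p}_{\theta}$ vanishes and $V_{\ell}$ blows up. Two remarks. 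First, if strong convexity of $V_{\ell}$ really is global on the open simplex, your route makes the paper's machinery (in particular Lemma \ref{lem:extrapolation}) unnecessary for this proposition; the fact that the authors avoid using strictness and instead write the longer argument is worth flagging, since it may indicate that only plain convexity is actually being relied upon. Second, your sketch of the ``complementary algebraic route'' does not quite match the paper's mechanism: the extrapolation is applied to identities in the convex-combination parameter $\theta$ (with the contradiction coming from the boundary blow-up at $\theta_{0}$), not to an identity obtained by eliminating variables across the score-balance equations; as written, your version of that route would need the parametrization by $\theta$ to actually produce the $\exp\{Q\}=R$ form on an interval that Lemma \ref{lem:extrapolation} requires.
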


For $\ell\geq 0$ even, define $\bm{p}^{(\ell,T)}$ as the solution to (\ref{eq:conditions}) with a simulation budget of $T$. Proposition \ref{prop:unique} guarantees that $\bm{p}^{(\ell,T)}$ is well-defined. Note that the optimality condition (\ref{eq:conditions}), if taken logarithm on both sides, converge uniformly with regard to $\bm{p}$ to (\ref{eq:roacondition}) on compact sets within the interior of the feasible region. By showing that each entry of $\bm{p}^{(\ell, T)}$ is bounded away from $0$, we formalizes the following result.
\begin{theorem}\label{prop:asympallocation}
    Given $\ell\geq 0$ even, the approximate optimal allocation in (\ref{eq:conditions}) is asymptotically rate optimal as the simulation budget increases, i.e., 
    $$\lim_{T\rightarrow\infty}\bm{p}^{(\ell, T)} = \bm{p}^{*},$$
    where $\bm{p}^{*}$ is the unique ROA that solves (\ref{eq:roacondition}).
\end{theorem}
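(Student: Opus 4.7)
The plan is to show that $\{\bm{p}^{(\ell,T)}\}_{T\geq 1}$ stays in a compact subset of the open simplex, argue that every limit point must satisfy the ROA system (\ref{eq:roacondition}), and conclude via the known uniqueness of the ROA. For the compactness step, the second equation in (\ref{eq:conditions}), namely $p_1^2/\sigma_1^2 = \sum_{j\in[k]_-} p_j^2/\sigma_j^2$, combined with the Cauchy--Schwarz inequality yields $p_1^2/\sigma_1^2 \geq (1-p_1)^2/\sum_{j\in[k]_-}\sigma_j^2$, so $p_1^{(\ell,T)}$ is bounded below by a positive constant independent of $T$. For the sub-optimal coordinates $p_j^{(\ell,T)}$ with $j\in[k]_-$, I would argue by contradiction: if $p_j^{(\ell,T_n)}\to 0$ along a subsequence, then $R_j(p_1^{(\ell,T_n)},p_j^{(\ell,T_n)})\to 0$ while $R_i$ stays bounded away from $0$ for some other $i\in[k]_-\setminus\{j\}$ (using $\sum_i p_i = 1$ and the lower bound on $p_1$), and comparing both sides of the first equation in (\ref{eq:conditions}) would yield a contradiction using the asymptotic behavior of $U_\ell'$ extracted from (\ref{eq:defineul}).

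Once compactness is in place, the next step is to pass to the limit in (\ref{eq:conditions}). Taking logarithms of the first family of equations and dividing by $T$, a direct calculation from (\ref{eq:defineul}) gives
\begin{equation*}
\tfrac{1}{T}\log\bigl|U_\ell'(x)\bigr| = -\tfrac{1}{2}x + \mathcal{O}\!\left(\tfrac{\log T}{T}\right),\quad T\to\infty,
\end{equation*}
uniformly on compact subsets of $(0,\infty)$. The remaining factors $\log(R_k W_k)$, where $W_k = (\sigma_k^2/p_k^2)/(\sigma_1^2/p_1+\sigma_k^2/p_k)$, contribute only $\mathcal{O}(1)$ and vanish after dividing by $T$. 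Passing to any convergent subsequence $\bm{p}^{(\ell,T_n)}\to\bm{p}^\infty$ with strictly positive coordinates, the first family of equations reduces in the limit to $R_i(p_1^\infty,p_i^\infty) = R_j(p_1^\infty,p_j^\infty)$ for all $2\leq i,j\leq k$. The second equation in (\ref{eq:conditions}) passes to the limit without rescaling and coincides with the second equation in (\ref{eq:roacondition}), so $\bm{p}^\infty$ solves the ROA system.

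Finally, since the ROA is the unique solution of (\ref{eq:roacondition}) \citep{glynn2004large}, $\bm{p}^\infty = \bm{p}^*$. Because every subsequence of $\{\bm{p}^{(\ell,T)}\}$ admits a sub-subsequence converging to the same limit $\bm{p}^*$, the full sequence converges to $\bm{p}^*$. The main technical obstacle is the boundedness step for the sub-optimal coordinates: the delicate regime is when $T_n R_j\to\infty$ but $R_j\to 0$, where both $|U_\ell'(R_j)|$ and the weight $R_j W_j = (m_1-m_j)^2\sigma_j^2 / \bigl(p_j^2(\sigma_1^2/p_1+\sigma_j^2/p_j)^2\bigr)$ must be tracked carefully to identify the dominant term in (\ref{eq:conditions}) and derive the contradiction; the other regime $T_n R_j \to 0$ is simpler because the polynomial expansion of $U_\ell$ directly blows up.
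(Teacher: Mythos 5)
Your overall architecture coincides with the paper's: establish that $\bm{p}^{(\ell,T)}$ stays in a compact subset of the open simplex, extract convergent subsequences, show any limit point satisfies (\ref{eq:roacondition}) by a logarithmic rate argument on the stationarity conditions, and invoke uniqueness of the ROA. Your limit-passing step is essentially the paper's: the paper rearranges the first-order condition into $\exp\{-T(R_i-R_j)\}$ equal to a rational function of $T$ bounded above and below, then takes $-\frac{1}{T}\log$ of both sides to get $R_i - R_j \to 0$; your uniform expansion $\frac{1}{T}\log|U_\ell'(x)| = -\frac{1}{2}x + \mathcal{O}(\log T/T)$ on compacts is the same computation in different clothing. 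The one place you genuinely diverge is the boundedness of the sub-optimal coordinates $p_j^{(\ell,T)}$, $j\in[k]_{-}$, away from zero. The paper (Lemma \ref{lem:theoreticalallocationbound}) does not use the first-order conditions for this at all: it exploits global optimality, comparing $V_\ell(\bm{p}^{(\ell,T)})$ with the value at the equal allocation to get $U_\ell(R_j(p_1^{(T)},p_j^{(T)})) \leq (k-1)U_\ell(\min_q R_q(\frac{1}{k},\frac{1}{k}))$, then absorbs the factor $k-1$ into the argument of the decreasing function $U_\ell$ for large $T$, yielding a uniform positive lower bound on $R_j$ and hence on $p_j$. Your contradiction argument from the stationarity conditions is completable --- in each of the regimes $T_nR_j\to 0$, $T_nR_j\to c\in(0,\infty)$, and $T_nR_j\to\infty$ with $R_j\to 0$, the $j$-side of the first equation dominates the exponentially decaying $i$-side --- but it requires exactly the case analysis you flag and leave open, whereas the value-comparison sidesteps it entirely. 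One small fix your sketch needs: to guarantee that some $R_i$ with $i\in[k]_{-}\setminus\{j\}$ stays bounded away from zero, a lower bound on $p_1$ is not enough; you also need $p_1$ bounded away from $1$ (so that $\sum_{i\in[k]_{-}}p_i$ is bounded below). This follows from the same second equation via $\sum_{i\in[k]_{-}}p_i^2/\sigma_i^2 \leq (1-p_1)^2/\min_i\sigma_i^2$, but it must be stated.
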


% \subsection{A Novel Optimal Computing Budget Allocation}
Based on Proposition \ref{prop:condition}, we propose the FCBA in Algorithm \ref{alg:ocba}. We name it FCBA($\ell$) since it utilizes the PCS approximation which retains the $\ell$-th order term in the expansion in Proposition \ref{prop:exponential}. In each step, FCBA determines a sub-optimal alternative with the largest score as a candidate for sampling. Then FCBA allocates a sample either to this candidate or to the estimated best alternative according to the second condition in (\ref{eq:conditions}).

\begin{algorithm}
\caption{Finite Computing Budget Allocation of Order $\ell$ (FCBA($\ell$))}
\label{alg:ocba}
\begin{algorithmic}[1]
\STATE Initialize: Set budget $T$, number of alternatives $k$, initial replications $T_{0}$.
\STATE Initialize replications $T_{i} \leftarrow T_{0}$ for $i = 1, 2, \dots, k$.
\STATE Simulate $T_{i}$ replications for each alternative $i$.
\STATE Total cost $N \leftarrow k\times T_{0}$.
\WHILE{Total cost $< T$}
    \STATE Estimate the best alternative $j^{*} \leftarrow \argmax_{j\in[k]}\hat{m}_{j}$.
    \STATE Use plug-in estimation of (\ref{eq:definerj}) based on sample means and sample variances to calculate \\ 
    $j' \leftarrow \argmax_{j\in[k]\backslash\{j^{*}\}} U'_{\ell}(\hat{R}_{j}(T_{j^{*}}/N, T_{j}/N))\cdot \hat{R}_{j}(T_{j^{*}}/N, T_{j}/N) \cdot \dfrac{\hat{\sigma}_{j}^{2}/T_{j}^{2}}{\hat{\sigma}_{j^{*}}^{2}/T_{j^{*}}+\hat{\sigma}_{j}^{2}/T_{j}}$. 
    \IF{$T_{j^{*}}^{2}/\hat{\sigma}_{j^{*}}^{2} > \sum_{j\in[k]\backslash\{j^{*}\}}T_{j}^{2}/\hat{\sigma}_{j}^{2}$}
    \STATE Simulate one additional replication for alternative $j^{\prime}$.
    \STATE Update $T_{j^{\prime}} \leftarrow T_{j^{\prime}} + 1$.
    \ELSE
    \STATE Simulate one additional replication for alternative $j^{*}$.
    \STATE Update $T_{j^{*}} \leftarrow T_{j^{*}} + 1$.
    \ENDIF
    \STATE Update $N\leftarrow N+1$.
\ENDWHILE
\STATE Select best-performing alternative based on sample means.
\end{algorithmic}
\end{algorithm}

\section{Experiments}
\label{sec:experiment}
In Section \ref{sec:experiment}, we present numerical experiments to evaluate the performance of FCBA policies, assuming Gaussian distributions for simulation outputs. Two different configurations for the means of the alternatives are tested: (\MakeUppercase{i}) the \textbf{stepping} configuration where $m_{i} = 0.1 * (k + 1 - i)$ for $i=1,2,\dots,k$; and (\MakeUppercase{ii}) the \textbf{noisy} configuration where $m_{i} \overset{i.i.d.}{\sim} 0.1 * k * \mathrm{Uniform}([0, 1])$. For the variances of alternatives, we consider three settings: ({i}) the \textbf{equal variance} setting where all variances are fixed, unless otherwise stated, at $4$; ({ii}) the \textbf{increasing variance} setting where the alternatives are ranked by means and divided equally into five groups, and the variances of the top 20\% of alternatives to the bottom 20\% of alternatives are 2, 3, 4, 5, and 6, respectively; and ({iii}) the \textbf{decreasing variance} setting where variances are assigned in reverse order with values of 6, 5, 4, 3, and 2 from the top 20\% of alternatives to the bottom 20\% of alternatives, respectively. We will choose a combination of configurations of means and of variances.

\subsection{FCBA($\ell$) versus OCBA}\label{sec:empproperty}
\begin{figure}
    \centering
    \includegraphics[width = .455\linewidth]{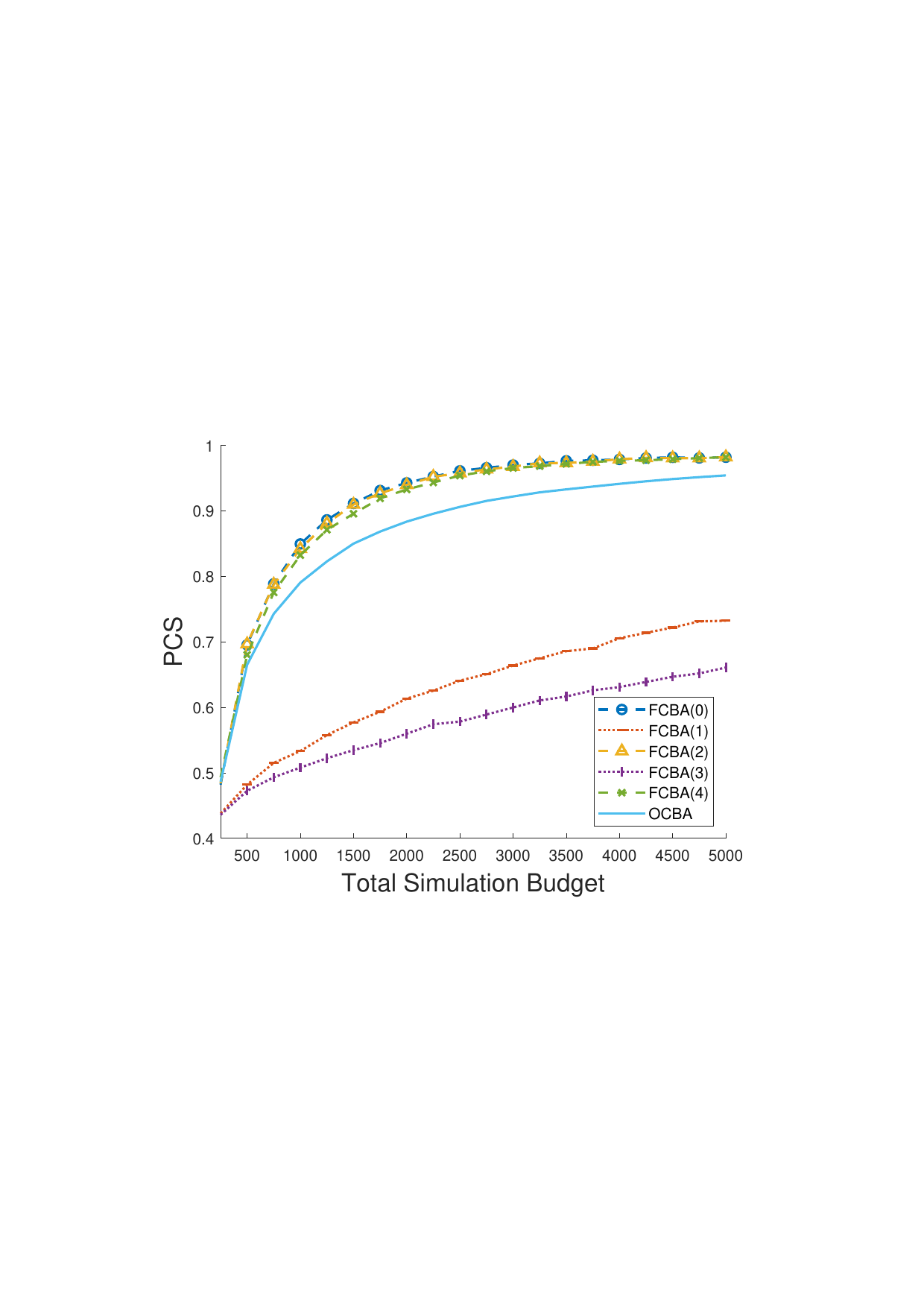}
    \includegraphics[width = .515\linewidth]{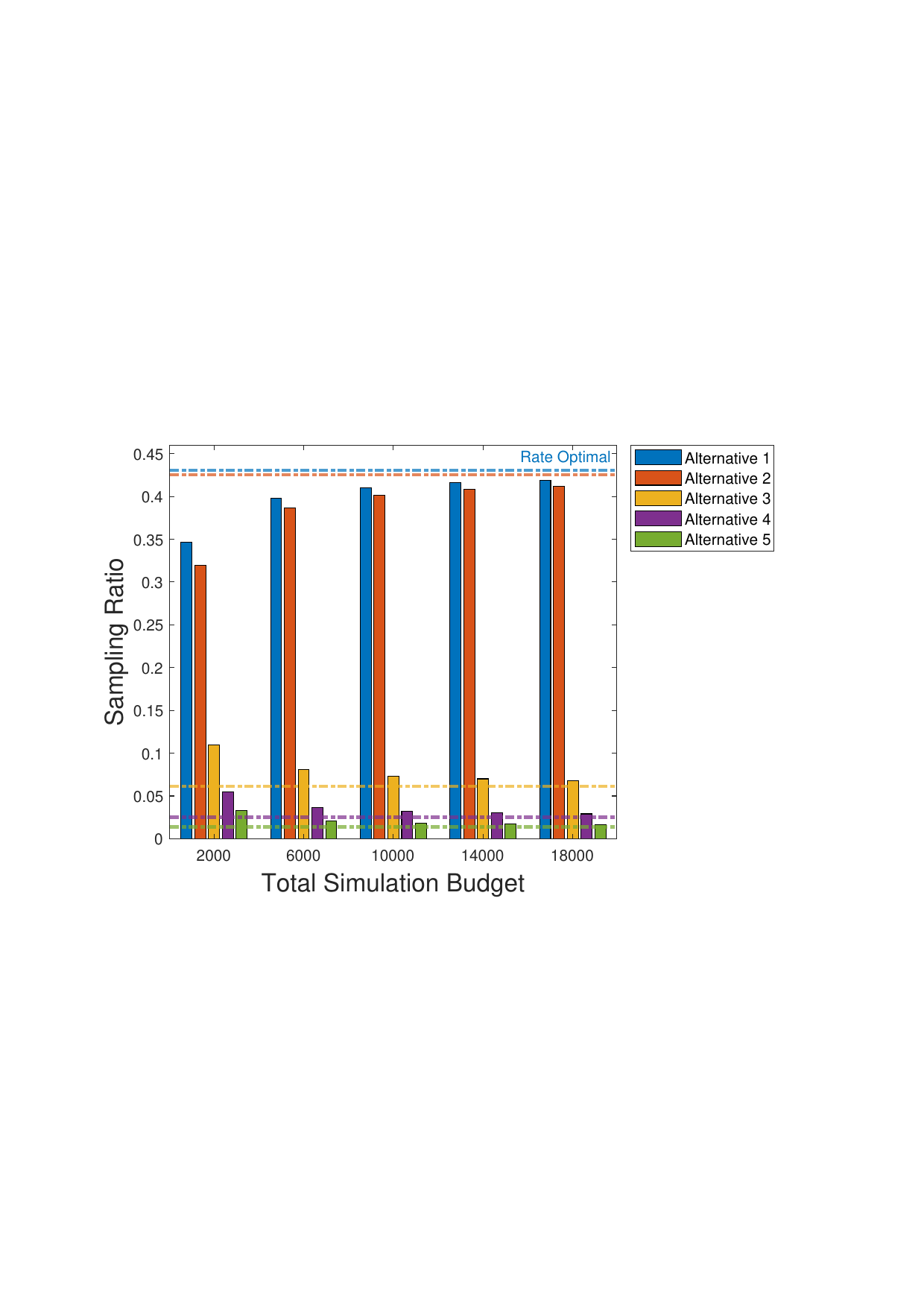}
    \caption{Left: PCS of FCBA($\ell$) versus OCBA under varying number of total simulation budget based on $5,000$ macro-replications. Right: Theoretically optimal allocation based on $V_{0}(\bm{p})$.}
    \label{fig:ex1_1}
\end{figure}

We conducted initial comparisons between FCBA($\ell$) policies and OCBA using a test case with $k=50$ alternatives. The means follow the stepping setting, and variances are set at an equal value of 1. The left panel of Figure \ref{fig:ex1_1} demonstrates the robust advantage of FCBA($\ell$) with even values of $\ell$ over OCBA. However, FCBA($\ell$) policies with odd values of $\ell$ show significantly lower performance, likely due to the non-convex nature of $V_{\ell}(\bm{p})$ when $\ell$ is odd. Since FCBA($\ell$) allocates samples by following the descending gradient of non-convex function $V_{\ell}(\bm{p})$, there is no guarantee of convergence of the sampling ratios to a global minimum. Among the policies of even orders tested, FCBA($0$) and FCBA($2$) perform similarly, with both slightly outperforming FCBA($4$) when total budgets are limited. These results suggest that FCBA($0$) is sufficient for practical applications. Hence, we will focus on the FCBA of order $\ell = 0$ in subsequent analysis. It is worth noting that, according to Proposition \ref{prop:exp-trivariate}, FCBA ignores the probability of simultaneous incorrect selection in (\ref{ex-inclusion}). However, the ignored terms, for example, $P(\bar{X}_{1}(T_{1})\leq \bar{X}_{i}(T_{i})\wedge \bar{X}_{j}(T_{j}))$, can be larger than the higher-order correction terms in absolute values under limited budgets, and it explains the superiority of low-order FCBA($\ell$) policies. Nevertheless, our approximation remains asymptotically valid. Deriving higher-order algorithms would require further characterization of this ignored probability, which we defer to future work.

We proceed by examining the behavior of FCBA($0$) through its corresponding theoretically optimal allocation, which is defined as the sampling ratio that maximizes $V_{0}(\bm{p})$ under the assumption of known distributional parameters. The right panel of Figure \ref{fig:ex1_1} presents the theoretically optimal allocation for the first five alternatives, where dashed lines indicate the rate-optimal allocation that maximizes the LDR rate. With a limited budget of $T$, FCBA($0$) adopts a more conservative approach, allocating fewer samples to the best alternative compared to OCBA. However, as $T$ tends toward infinity, the allocation strategy of FCBA($0$) converges to that of OCBA, thereby empirically confirming the asymptotic rate-optimality of FCBA($0$).

\subsection{Finite Sample Performances}\label{sec:finitesample}
The finite sample performance of FCBA($0$) consistently surpasses that of all modern R\&S procedures in comparison, including equal allocation (EA), OCBA, ROA, AOAP \citep{peng2016dynamic}, and modified complete expected improvement (mCEI; \citealp{chen2019complete}), a variant of EI tailored for rate optimality. Table \ref{tab:ex1} displays the final PCS for each combination of policies and problem instances. Each instance contains $k = 50$ alternatives, with a simulation budget set at $T = 1,000$.

\begin{table}[htbp]
\centering
\caption{PCS estimated by $100,000$ macro-replications for all instances with $k = 50$ and $T = 1,000$.}\label{tab:ex1}
\begin{tabular}{ccccccc}
\toprule
Instances & \textbf{FCBA($0$)} & AOAP   & ROA    & OCBA   & mCEI   & EA     \\
\midrule
\textbf{Stepping} + \textbf{Equal}                      & \textbf{0.5767}    & 0.5537 & 0.5487 & 0.5641 & 0.5652 & 0.3027 \\
\textbf{Stepping} + \textbf{Increasing}                & \textbf{0.7042}    & 0.6702 & 0.6659 & 0.6847 & 0.6669 & 0.3805 \\
\textbf{Stepping} + \textbf{Decreasing}                 & \textbf{0.5050}    & 0.4902 & 0.4851 & 0.4975 & 0.5045 & 0.2641 \\
\textbf{Noisy} + \textbf{Equal}                        & \textbf{0.9155}    & 0.8140 & 0.7899 & 0.8129 & 0.8744 & 0.4897 \\
\textbf{Noisy} + \textbf{Increasing}                   & \textbf{0.9698}    & 0.8965 & 0.8838 & 0.8993 & 0.9308 & 0.6819 \\
\textbf{Noisy} + \textbf{Decreasing}                   & \textbf{0.8468}    & 0.7502 & 0.7206 & 0.7459 & 0.8194 & 0.3971 \\
\bottomrule
\end{tabular}
\end{table}

It is important to note that FCBA($0$) is designed to maximize the final PCS and thus is dependent on the total budget. The following experiment evaluates the PCS of FCBA($0$) against the competing policies in an instance with means in the \textbf{stepping} and with variances in the \textbf{equal} setting, under the condition that the best alternative is identified before the simulation budget is fully utilized. The effectiveness of FCBA($0$) is illustrated in Figure \ref{fig:ex_2}, with the only exception being its performance compared to AOAP when the simulation budget is much limited. This difference arises because AOAP aims for a nearly optimal dynamic policy, whereas FCBA($0$) focuses solely on maximizing the final PCS. The good performance of FCBA in scenarios where the budget is nearly exhausted supports the robust extension of FCBA policies to applications where the total budget is not entirely predetermined, such as in fixed-precision settings. Figure \ref{fig:ex_2} also highlights the scalability of FCBA. In this experiment, three initial simulation replications are allocated to each alternative. Even with a fixed simulation budget, the difference in PCS between FCBA($0$) and other benchmarks grows as the problem size increases. To be specific, the gaps in the final PCS are $0.26\%$, $0.62\%$, $1.19\%$, $2.10\%$ for $k = 50, 100, 200, 400$, respectively. 

\begin{figure}
    \centering
    \includegraphics[width = .45\linewidth]{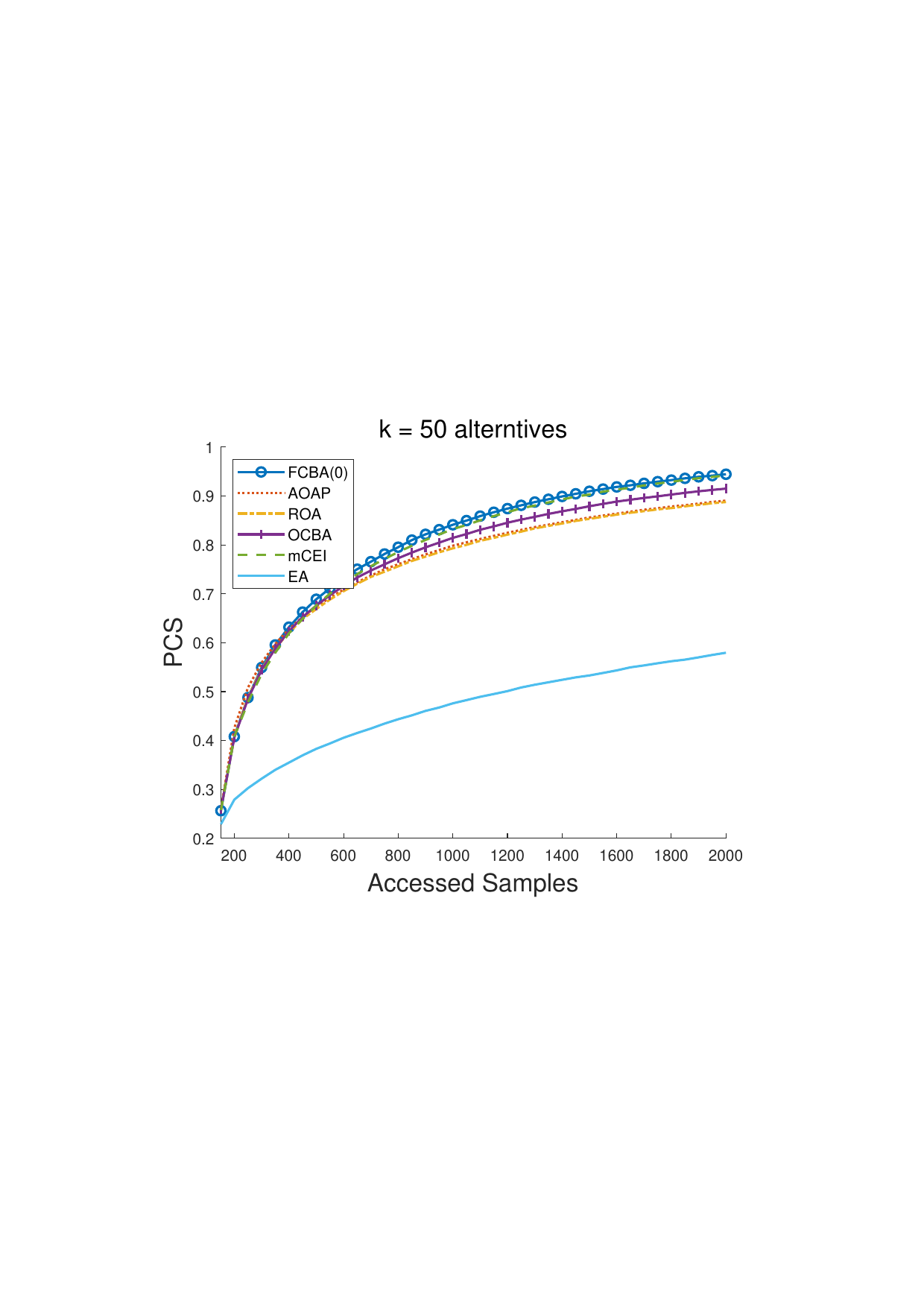}
    \includegraphics[width = .45\linewidth]{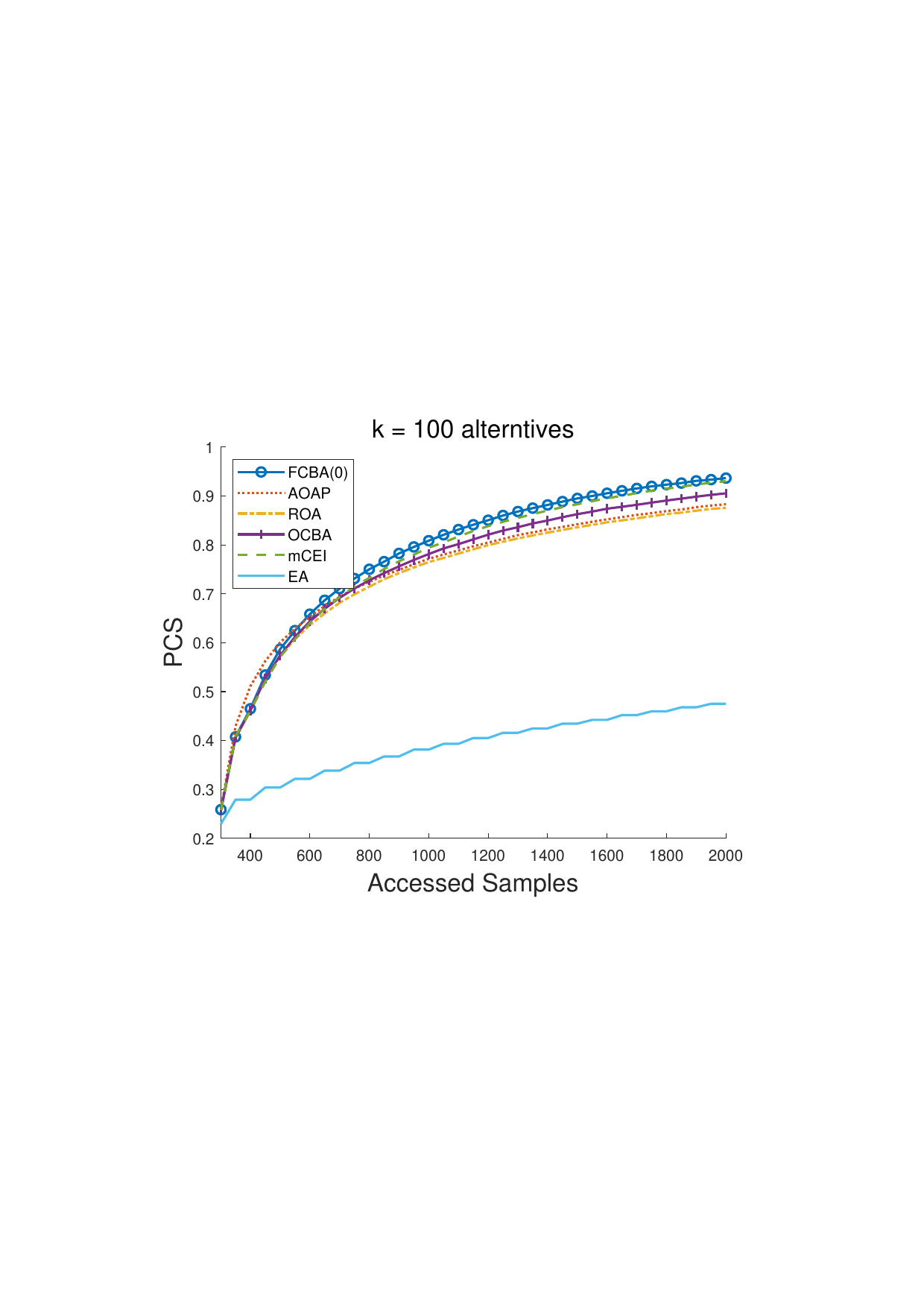}
    \includegraphics[width = .45\linewidth]{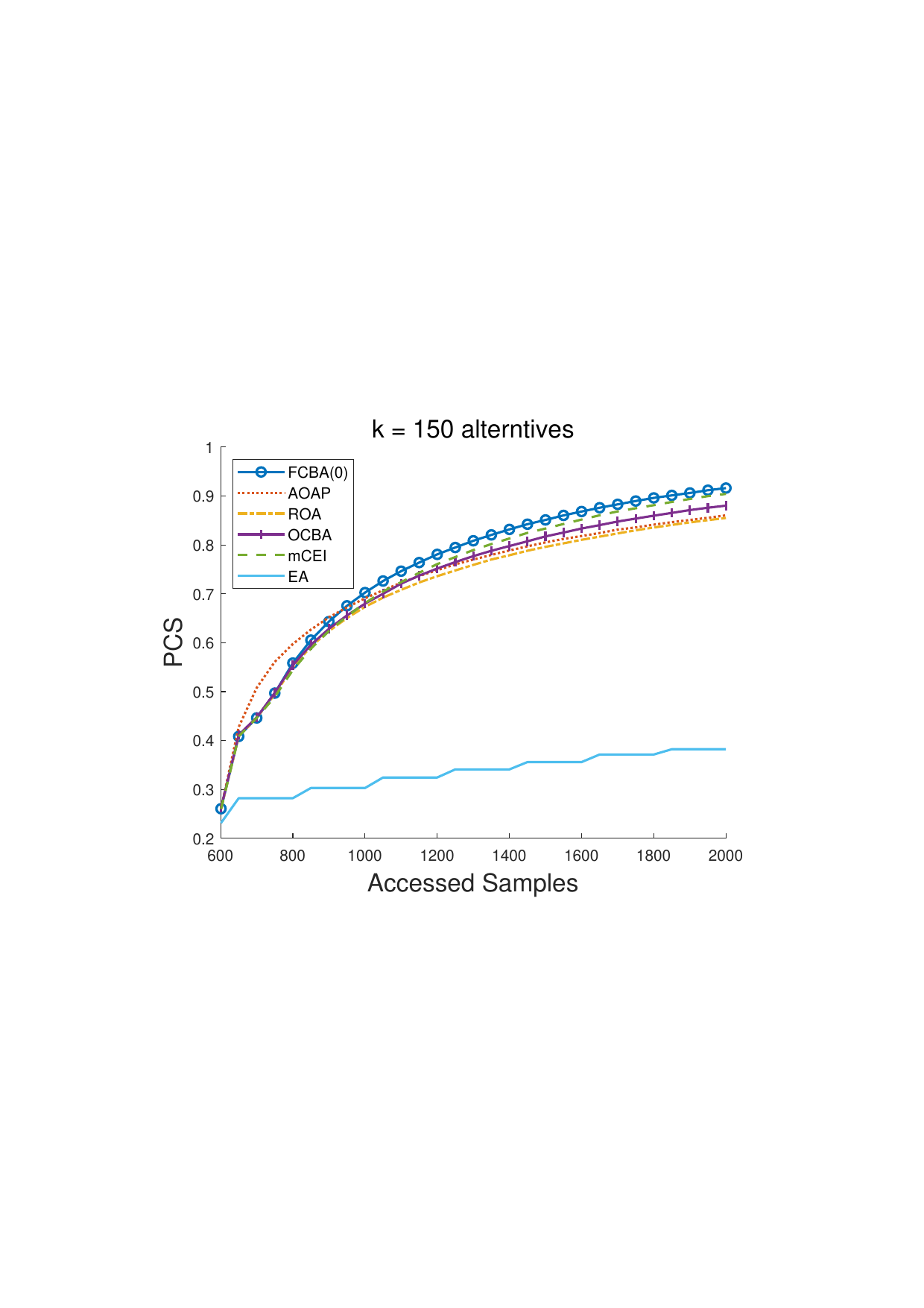}
    \includegraphics[width = .45\linewidth]{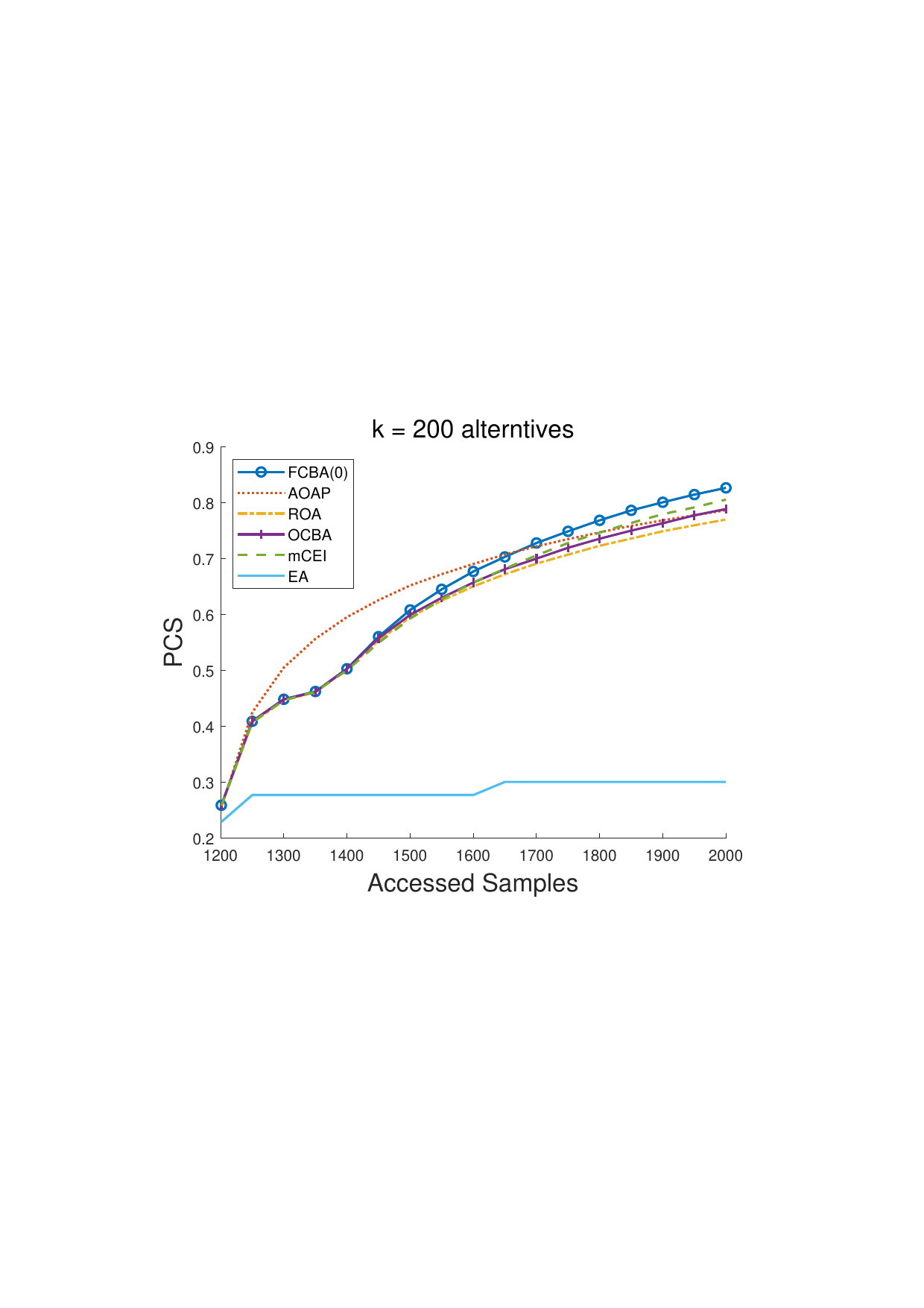}
    \caption{PCS before the budget is exhausted estimated by $100,000$ macro-replications in an instance with means in the \textbf{stepping} and with variances in the \textbf{equal} setting.}
    \label{fig:ex_2}
\end{figure}

Finally, we demonstrate the potential of our approximation as a rough estimate for PCS. In the instance with $k = 100$ mentioned earlier, we implement FCBA($0$) across various total budgets, ranging from $T = 1,000$ to $5,000$. Upon exhausting the simulation budget, we construct a post hoc estimation for the PCS using $1 - \hat{V}_{0}(\bm{p})$, where $\hat{V}_{0}(\bm{p}):= \sum_{j'\neq j^{*}}{\exp\{-\frac{1}{2}\hat{R}_{j'}(p_{j^{*}}, p_{j'})T - \frac{1}{2}\ln{\hat{R}_{j'}(p_{j^{*}}, p_{j'})}\}}$. Here, $j^{*} := \argmax_{1\leq j\leq k}\hat{m}_{i}$, $\hat{R}_{j'}(p_{j^{*}}, p_{j'}) := (m_{j^{*}} - m_{j'})^{2} / (\hat{\sigma}^{2}_{j^{*}}/p_{j^{*}} + \hat{\sigma}^{2}_{j'}/p_{j'})$. The values $\hat{m}_{i}$ and $\hat{\sigma}_{i}^{2}$ represent sample means and variances, respectively. Consistent with our findings from the initial experiment, the results presented in Table \ref{tab:estPCS} indicate that $\hat{V}_{0}(\bm{p})$ tends to be conservative, generally underestimating the PCS. However, the accuracy of this estimation improves as the sample size grows. In contrast, the estimation using LDR, i.e., $1-\hat{V}_{\mathrm{LDR}}(\bm{p}):=1-\exp\{-\frac{1}{2}\min_{j'\neq j^{*}}\hat{R}_{j'}(p_{j^{*}},p_{j'})\}$, exhibits greater mean error and higher variance than $1 - \hat{V}_{0}(\bm{p})$ for $T\geq 2,000$. For a small budget, such as $T=1,000$, $1-\hat{V}_{0}(\bm{p})$ still shows a smaller mean error but a slightly higher variance. However, for $T = 1,000$, the true PCS is not covered by the interval centered at the mean of $1 - \hat{V}_{\mathrm{LDR}}(\bm{p})$ with a radius equal to one standard error, implying a large bias of $1 - \hat{V}_{\mathrm{LDR}}(\bm{p})$. Conversely, for all tested budgets $T$, the true PCS is consistently covered by the interval centered at the mean of $1 - \hat{V}_{0}(\bm{p})$ with a radius equal to one standard error.

\begin{table}[htbp]
\centering
\caption{The true PCS and the statistical characteristics of the estimation $\hat{V}_{0}(\bm{p})$, estimated via $100,000$ macro-replications.}\label{tab:estPCS}
\begin{tabular}{cccccc}
    \toprule
    $T$ & 1,000 & 2,000 & 3,000 & 4,000 & 5,000 \\
    \midrule
    True PCS & 0.8485    & 0.9437    & 0.9706    & 0.9789    & 0.9826 \\
    Mean of $1-\hat{V}_{0}(\bm{p})$ & 0.6381    & 0.8336    & 0.9205    & 0.9605    & 0.9801 \\
    $\pm$ Standard Error  & (0.3586)  & (0.2667)  & (0.1799)  & (0.1199)  & (0.0789) \\
    Mean of $1-\hat{V}_{\mathrm{LDR}}(\bm{p})$  & 0.5119    & 0.7087    & 0.8282    & 0.8992    & 0.9415 \\
    $\pm$ Standard Error    & (0.3282)    & (0.2999)    & (0.2378)    & (0.1797)    & (0.1314) \\
    \bottomrule
\end{tabular}
\end{table}

\section{Extensions of FCBA}\label{sec:extensions}
In this section, we first discuss a specific problem setting known as the low-confidence scenario where the budget is relatively inadequate. In Section \ref{sec:lowcon}, we propose a refined approximation to PCS and design a tailored algorithm for these scenarios. In Section \ref{sec:explowcon}, we present numerical experiments to demonstrate the performance of the refined policy. Thereafter, we mention the possibility of leveraging the proposed theory to design fully sequential allocation policies by characterizing a conditional probability in Section \ref{sec:conditionalprob}. Detailed proofs can be found in the online supplement.

\subsection{Low-Confidence Scenarios}\label{sec:lowcon}
For years R\&S literature has been focused on high-confidence scenarios, where the distribution of alternatives are well-separated and the sample size is relatively large. The experience in the literature indicates the best alternative should receive a lot more samples than others. For example, the LDP yields that, for Gaussian distributions, an asymptotically optimal allocation satisfies $p_{1}^{2} = \sum_{i=2}^{k}p_{i}^{2}$ and thus $p_{1}\gg p_{i}$ for any $i\geq2$. However, the optimal sampling ratio that maximizes $P\{CS\}$ for finite sample size may not necessarily have this property. \cite{peng2015nonmonotonicity} and \cite{peng2018gradient} have brought attention to a low-confidence scenario, which is qualitatively described by three characteristics: the differences between the means of competing alternatives are small, the variances are large, and the simulation budget is small. Loosely speaking, the low-confidence scenario occurs when the budget is relatively small, which is common in the case of a fixed finite budget. \cite{peng2015nonmonotonicity} demonstrates a counterintuitive phenomenon that $P\{CS\}$ may not be monotonically increasing as more samples are given to the best alternative in these cases, implying that the best alternative should receive fewer samples than the LDP-based optimal allocation.

\subsubsection{A Refined Expansion}
Although the optimal allocation based on (\ref{eq:roacondition}) also suffers from the same property that $p_{1}^{2} = \sum_{i=2}^{k}p_{i}^{2}$, we can refine our approximation expansion to avoid it using the complete inclusion-exclusion principle. To be specific, recall that we truncate equation (\ref{ex-inclusion}) till all terms involving binary comparisons by Proposition \ref{prop:exp-trivariate}. As a result, the approximation (\ref{eq:transit}) is too conservative as it neglects the probability of simultaneous incorrect binary comparisons. To circumvent this issue, we introduce a proposition which extends Proposition \ref{prop:exp-trivariate}. Suppose $S\subseteq [k]$ is a subset of alternatives and $\bm{x}_{S}=(x_{i})_{i\in S}$ is a real vector. For subsets satisfying $1\in S$, define $I_{S}(\bm{x}_{S}) = \sum_{i\in S}{p_{i}I_{i}(x_{i})}$ and let $\bm{x}_{S}^{*}$ be given by $I_{S}(\bm{x}_{S}^{*}) = \inf_{\{\bm{x}_{S}\in\mathbb{R}^{\vert S\vert}: x_{1}\leq x_{j}, \forall j\in S\}}I_{S}(\bm{x}_{S})$.
\begin{lemma}\label{lem:critical}
    Suppose $n\geq 1$ and $2\leq i_{1}<i_{2}<\dots<i_{n}\leq k$ and let $S = \{1,i_{1}, i_{2}, \dots, i_{n}\}$. Under Assumptions \ref{ass:light} and \ref{ass:btv}, the infimum of $I_{S}(\bm{x}_{S})$ over $\{\bm{x}_{S} \in \mathbb{R}^{\vert S \vert}: x_{1} \leq x_{j},~\forall j \in S\}$ is uniquely achieved at $\bm{x}_{S}^{*}=(x_{i}^{*})_{i\in S}$, which satisfies $x_{j}^{*} = x_{1}^{*} \vee m_{j}$ for all $j \in S$.
\end{lemma}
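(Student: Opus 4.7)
The plan is to establish existence and uniqueness of the minimizer by a convex-analytic argument, and then characterize its structure by decoupling the constraints coordinate-wise.

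First I would use Assumption \ref{ass:light} to record the key regularity of the rate functions: each $I_i$ is the Legendre-Fenchel transform of a smooth, strictly convex CGF $\Lambda_i$ whose derivative sweeps out an open set containing $[m_k, m_1]$, and so $I_i$ is itself finite, smooth, and strictly convex on an open neighborhood of $[m_k, m_1]$, attains its unique zero at $m_i$, and is coercive. Consequently $I_S(\bm{x}_S) = \sum_{i \in S} p_i I_i(x_i)$ is strictly convex and coercive on $\mathbb{R}^{|S|}$, the feasible set $\{\bm{x}_S : x_1 \le x_j,\ \forall j \in S\setminus\{1\}\}$ is closed and convex, and hence a unique minimizer $\bm{x}_S^*$ exists.

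Next I would exploit the fact that the constraints couple $x_1$ with each $x_j$ separately but never couple the $x_j$'s with one another. For fixed $x_1$, the $j$-th subproblem $\min_{x_j \ge x_1} p_j I_j(x_j)$ has a closed-form solution: if $x_1 \le m_j$ then the unconstrained minimizer $m_j$ is feasible and hence optimal, whereas if $x_1 > m_j$ the constraint binds and the strict monotonicity of $I_j$ on $[m_j, \infty)$ forces $x_j = x_1$. In both cases the minimizer equals $\max(x_1, m_j)$, matching the relation claimed by the lemma. Substituting back reduces the original program to the one-dimensional problem of minimizing
$$f(x_1) \;=\; p_1 I_1(x_1) + \sum_{j \in S \setminus \{1\}} p_j\, I_j\!\bigl(\max(x_1, m_j)\bigr),$$
a strictly convex (thanks to the $p_1 I_1$ term) and coercive function of $x_1 \in \mathbb{R}$ with a unique minimizer $x_1^*$. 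The same conclusion can be reached via KKT: with multiplier $\mu_j \ge 0$ for the constraint $x_1 \le x_j$, stationarity in $x_j$ gives $\mu_j = p_j I_j'(x_j^*)$, which vanishes precisely when $x_j^* = m_j \ge x_1^*$ and is strictly positive when $x_j^* = x_1^* > m_j$, so that complementary slackness is automatic, while stationarity in $x_1$ becomes $p_1 I_1'(x_1^*) + \sum_{j : m_j < x_1^*} p_j I_j'(x_1^*) = 0$, which pins down $x_1^*$ uniquely by the strict monotonicity of $I_1'$.

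The hard part will not be the optimization step itself, which reduces to elementary one-dimensional convex analysis, but rather the careful invocation of the convex-analytic regularity of each $I_j$ on a neighborhood of $[m_k, m_1]$. Specifically, strict convexity, smoothness, and coercivity of $I_j$ must be extracted from Assumption \ref{ass:light} (in particular from $0 \in \operatorname{int}\mathscr{D}(\Lambda_i)$ and $[m_k, m_1] \subseteq \operatorname{int}\mathscr{R}(\Lambda_i')$) before the fix-and-optimize and KKT arguments can be legitimately applied. Once that regularity is in place, the proof proceeds directly from the decoupling above.
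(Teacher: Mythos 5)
Your proposal is correct and follows essentially the same route as the paper: both arguments rest on the strict convexity of each $I_i$ with unique zero at $m_i$ (hence monotone decrease on $[m_k,m_i]$ and increase on $[m_i,m_1]$) to get existence and uniqueness of the minimizer over the convex feasible set, and both then use that monotonicity to show the optimal $x_j$ must equal $x_1 \vee m_j$ — the paper via a perturbation-at-the-optimum contradiction, you via explicitly solving the decoupled inner problem $\min_{x_j \ge x_1} p_j I_j(x_j)$, which is the same observation packaged differently. The reduction to the one-dimensional problem in $x_1$ that you describe is exactly what the paper records immediately after the lemma as the definition of $J_S$.
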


Lemma \ref{lem:critical} indicates that the problem of finding the minimum of $I_{S}(\bm{x})$ over $\{\bm{x}_{S} \in \mathbb{R}^{\vert S \vert}: x_{1} \leq x_{j},~\forall j \in S\}$ can be reformulated as finding the minimum of $J_{S}(x_{1}) := I_{S}(x_{1}, x_{1}\vee m_{i_{1}}, \dots, x_{1}\vee m_{i_{n}})$ within the domain $x_{1}\in [m_{k}, m_{1}]$ and we have $J_{S}(x_{1}^{*})=I_{S}(\bm{x}_{S}^{*})$. On the other hand, for $j\neq 1$, $j\in S$, if $m_{j}\geq x_{1}^{*}$, then $I_{j}(x_{j}^{*})=I_{j}(x_{1}^{*}\vee m_{j})=I_{j}(m_{j})=0$ and thus the term $p_{j}I_{j}(x_{j})$ does not contribute to the rate function $I_{S}(\bm{x}_{S})$. Hence, $x_{1}^{*}$ can be reckoned a \textit{critical point}, and roughly speaking, a small critical point implies a small objective $I_{S}(\bm{x}_{S}^{*}) = \sum_{i\in S}{p_{i}I_{i}(x_{i}^{*})}$ since the number of positive terms is small.
\begin{proposition}\label{prop:anyterm}
    Suppose $n\geq 1$ and $2\leq i_{1}<i_{2}<\dots<i_{n}\leq k$ and let $S = \{1,i_{1}, i_{2}, \dots, i_{n}\}$. Under Assumptions \ref{ass:light} and \ref{ass:btv}, the probability of simultaneous incorrect binary comparisons for multiple alternatives decays exponentially, i.e., there exist an integer $l_{S}\geq 1$ and a real $c_{S}>0$ such that
    \begin{equation*}
        P\left(\bar{X}_{1}(T_{1}) \leq \bar{X}_{i_{1}}(T_{i_{1}})\wedge \bar{X}_{i_{2}}(T_{i_{2}})\wedge \dots\wedge \bar{X}_{i_{n}}(T_{i_{n}})\right) = \exp\{-TJ_{S}(x_{1}^{*})\} \cdot \frac{c_{S}}{\sqrt{T}^{l_{S}}} \cdot \left(1 + \mathcal{O}(T^{-1/2})\right).
    \end{equation*}
\end{proposition}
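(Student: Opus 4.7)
The plan is to generalize the exponential-tilting scheme of Propositions \ref{prop:exponential} and \ref{prop:exp-trivariate} to $n$ simultaneous comparisons. I first use Lemma \ref{lem:critical} to partition $S \setminus \{1\}$ into an active set $\tilde{S} := \{j : m_j < x_1^*\}$, for which $x_j^* = x_1^*$, and its complement. For inactive $j$ we have $x_j^* = m_j \geq x_1^*$, and after the tilting introduced below the event $\{\bar{X}_1(T_1) \leq \bar{X}_j(T_j)\}$ holds with probability $1 - \mathcal{O}(\exp\{-\delta T\})$ for some $\delta > 0$ by a standard Chernoff estimate; these slack constraints contribute only a negligible multiplicative factor.

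On the active alternatives together with alternative $1$, I apply a joint exponential tilting in direct analogy with (\ref{eq:rescale}), replacing $X_j$ by $Z_j$ with density $\exp\{I_j(x_1^*) + \lambda_j^*(x - x_1^*)\}\,dF_{X_j}(x)$ so that $\mathbb{E}[Z_j] = x_1^*$. The first-order optimality of $x_1^*$ as the minimizer of $J_S$ yields the Lagrangian relation $p_1 \lambda_1^* + \sum_{j \in \tilde{S}} p_j \lambda_j^* = 0$, with $\lambda_1^* < 0$ and $\lambda_j^* > 0$ for $j \in \tilde{S}$. The change of measure extracts the dominant exponential factor $\exp\{-T J_S(x_1^*)\}$ and reduces the claim to evaluating $\mathbb{E}[\mathbf{1}\{\bar{Z}_1 \leq \bar{Z}_j,\, \forall j \in \tilde{S}\} \exp\{-\sum_{j \in \{1\} \cup \tilde{S}} \lambda_j^* T_j (\bar{Z}_j - x_1^*)\}]$.

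Next, I rescale by $D_j := \sqrt{T}(\bar{Z}_j - \bar{Z}_1)$ for $j \in \tilde{S}$. Using the Lagrangian relation to eliminate the $\bar{Z}_1$ contribution, the exponent collapses to $-\sqrt{T}\sum_{j \in \tilde{S}} \lambda_j^* p_j D_j$, a strictly positive linear combination of the non-negative $D_j$'s on the orthant $\{D_j \geq 0\}$. The vector $(D_j)_{j \in \tilde{S}}$ is asymptotically centered Gaussian with positive-definite covariance $\Sigma$ by the multivariate CLT, and under Assumption \ref{ass:btv} its joint density admits a multivariate Edgeworth expansion whose leading term is the Gaussian density. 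Changing variables $u_j = \sqrt{T} d_j$ extracts a factor $T^{-|\tilde{S}|/2}$ and leaves an integral over $\mathbb{R}_+^{|\tilde{S}|}$ whose integrand converges pointwise to $\phi_\Sigma(0) \exp\{-\sum_j \lambda_j^* p_j u_j\}$; dominated convergence combined with the Edgeworth error bound yields the claim with $l_S = |\tilde{S}|$ and $c_S = \phi_\Sigma(0) \prod_{j \in \tilde{S}} (\lambda_j^* p_j)^{-1}$.

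The principal obstacle I anticipate is the multivariate refinement of Lemma \ref{lemma:expand}, namely, establishing a uniform Edgeworth expansion for the joint density of the rescaled tilted sample means in dimension $|\tilde{S}| + 1$ under the BTV assumption. The required tool is available, e.g., in \citep{bhattacharya1978validity}, but transferring it to our setting with sampling ratios bounded away from zero and with the tilted distributions $Z_j$ requires care, particularly in verifying a Cramér-type smoothness condition uniformly across all alternatives in $\tilde{S}$. Once this uniform control is in place, the remaining steps are routine multivariate analogues of the Fourier/Parseval arguments behind Proposition \ref{prop:exponential}.
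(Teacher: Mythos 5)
Your proposal follows essentially the same route as the paper's proof: tilt each distribution in $S$ to the minimizer $\bm{x}_{S}^{*}$ of Lemma \ref{lem:critical}, extract $\exp\{-TJ_{S}(x_{1}^{*})\}$, use the stationarity relation $p_{1}\bar{\lambda}_{1}^{*}+\sum_{j\in S\backslash\{1\}}p_{j}\bar{\lambda}_{j}^{*}=0$ to collapse the weight onto the differences $\sqrt{T}(\bar{Z}_{j}-\bar{Z}_{1})$, and read off $l_{S}$ as the number of alternatives with $m_{j}<x_{1}^{*}$ from a multivariate local-CLT/Edgeworth argument; the technical ingredient you flag as outstanding is supplied by the paper as Lemma \ref{lem:multivarexpand}, quoting \cite{bhattacharya1978validity}. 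The one genuine divergence is the treatment of the alternatives with $m_{j}\geq x_{1}^{*}$ (whose tilting parameter vanishes): you discard their constraints by a Chernoff bound, whereas the paper retains them as an indicator over that block and evaluates the tilted block conditionally on it via Parseval, so that these constraints survive inside $c_{S}$. Your shortcut is sound, and yields the same $l_{S}$ and the same constant, whenever $m_{j}>x_{1}^{*}$ strictly for every such $j$: the exponential weight is bounded by one on the integration orthant, so dropping the constraints costs only $\mathcal{O}(e^{-\delta T})$ against a main term of order $T^{-l_{S}/2}$. It breaks down, however, when $m_{j}=x_{1}^{*}$ for some $j\notin\tilde{S}$: then $\delta=0$, the constraint is binding at the Gaussian scale, and it contributes an $\mathcal{O}(1)$ orthant-probability factor to $c_{S}$ (this retained indicator is precisely what produces the extra factor $1/2$ in the paper's three-alternative illustration). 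You should either exclude that boundary configuration or handle it by keeping those constraints in the limiting expectation as the paper does. Finally, since the proposition asserts $l_{S}\geq 1$, state explicitly that $\tilde{S}\neq\emptyset$: if $m_{j}\geq x_{1}^{*}$ for all $j\in S\backslash\{1\}$ then all $\bar{\lambda}_{j}^{*}=0$, the stationarity relation forces $\bar{\lambda}_{1}^{*}=0$ and hence $x_{1}^{*}=m_{1}$, contradicting $m_{j}<m_{1}$.
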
%

Proposition \ref{prop:anyterm} extends Proposition \ref{prop:exp-trivariate} by characterizing Bahadur-Rao type expansions of probabilities of SIBC for multiple alternatives. We will see in the following proposition that the optimal objective value $I_{S}(\bm{x}_{S}^{*})$ is the exponential rate at which the probability of SIBC decays.
\begin{remark}
    In general, $c_{S}$ can be expressed as the expectation of an exponentiated quadratic form of a Gaussian vector. Although $c_{S}$ is not mathematically tractable, it can be efficiently evaluated using numeric methods. 
\end{remark}
\begin{remark}
    In the low-confidence scenario, where the $T$ budget is relatively small, $\exp\{-TJ_{S}(x_{1}^{*})\}$ is approximately equal to 1. Consequently, the probability of SIBC behaves similarly to the polynomial term $1/\sqrt{T}^{l_{S}}$. In fact, the polynomial order is given by $l_{S} = n - \#\{i\in S\backslash\{1\}: m_{i} \geq x_{1}^{*}\}$. In the case of $m_{i} \leq x_{1}^{*}$ for all $i\in S\backslash\{1\}$, the probability of SIBC is of order $\mathcal{O}(T^{-n/2})$. This order decreases exponentially as the number of alternatives increases, aligning with the intuition that more alternatives reduce the likelihood of simultaneous incorrect comparisons. Furthermore, since each alternative with a mean greater than or equal to the critical point $x_{1}^{*}$ reduces the order $l_{S}$ by 1, only alternatives with means less than $x_{1}^{*}$ contribute to the order. Roughly speaking, if $m_{i}\geq x_{1}^{*}$, then there is a relatively high likelihood that alternative $i$ will be mistakenly identified as superior to alternative 1 within the set $S$.
\end{remark}

According to the inclusion-exclusion principle, we have the following expansion of the PICS:
\begin{equation}\label{eq:in-ex}
    1 - P\{CS\} = \sum_{n=1}^{k-1}(-1)^{n-1}\sum_{2\leq i_{1}<\dots<i_{n}\leq k}P\left(\bar{X}_{1}(T_{1}) \leq \bar{X}_{i_{1}}(T_{i_{1}})\wedge \bar{X}_{i_{2}}(T_{i_{2}})\wedge \dots\wedge \bar{X}_{i_{n}}(T_{i_{n}})\right).
\end{equation}
According to Proposition \ref{prop:anyterm}, each term in the summation can be asymptotically equivalently approximated by an exponential function in $T$ and a polynomial in $T^{-1/2}$. When the budget is relatively small, the polynomial term is predominant. The following result indicates that if we retain all of the terms in the approximation with polynomials in $T^{-1/2}$ of order $\mathcal{O}(T^{-1/2})$ and truncate the remainders, the approximation is still asymptotically equivalent.%

\begin{theorem}\label{thm:extended}
    Under Assumptions \ref{ass:light} and \ref{ass:btv}, for any $p_{i}>0$, $\forall i\in[k]$, satisfying $\sum_{i\in[k]}{p_{i}} = 1$, we have the following expansion
    \begin{equation*}
        1-P\{CS\} = \left(\sum_{n=1}^{k-1}(-1)^{n+1}\sum_{S\in \mathcal{S}_{n}}\exp\{-TJ_{S}^{*}(x_{1}^{*})\}\cdot\frac{c_{S}}{\sqrt{T}}\right)\cdot\left(1 + \mathcal{O}(T^{-1/2})\right),
    \end{equation*}
    where $\mathcal{S}_{n} := \{S = \{1,i_{1},\dots, i_{n}\}: 2\leq i_{1} < \dots<i_{n}\leq k,~l_{S} = 1\}$ is a family of subsets of alternatives.
\end{theorem}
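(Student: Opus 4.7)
\textbf{Proof plan for Theorem \ref{thm:extended}.} My plan is to start from the exact inclusion--exclusion expansion (\ref{eq:in-ex}), apply Proposition \ref{prop:anyterm} to every probability of simultaneous incorrect binary comparison, and then separate the resulting terms according to the polynomial order $1/\sqrt{T}^{\,l_S}$. Concretely, for each $S=\{1,i_1,\dots,i_n\}$ with $2\leq i_1<\dots<i_n\leq k$, Proposition \ref{prop:anyterm} gives
$$P\!\left(\bar{X}_1(T_1)\leq \bar{X}_{i_1}(T_{i_1})\wedge\dots\wedge \bar{X}_{i_n}(T_{i_n})\right)=\exp\{-TJ_S(x_1^*)\}\cdot\frac{c_S}{\sqrt{T}^{\,l_S}}\cdot\bigl(1+\mathcal{O}(T^{-1/2})\bigr),$$
so substitution into (\ref{eq:in-ex}) decomposes $1-P\{CS\}$ into a sum over all $S$ with $1\in S$ and $|S|\geq 2$. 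I would then split this into the contribution from subsets with $l_S=1$, which by definition form $\bigcup_n\mathcal{S}_n$, and the remainder coming from subsets with $l_S\geq 2$.

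The second step is to bound the $l_S\geq 2$ contribution. The polynomial factor satisfies $1/\sqrt{T}^{\,l_S}\leq 1/T$ for any such $S$, which already gives an extra $1/\sqrt{T}$ compared with the $\mathcal{S}_n$ terms. To also control the exponential factors, I would use the monotonicity of the constrained infimum: since $I_S(\bm{x}_S)=\sum_{i\in S}p_i I_i(x_i)$ only adds non-negative terms as $S$ grows, $J_S(x_1^*)$ is non-decreasing in $S$, so for any $S$ with $l_S\geq 2$ one can find a binary subset $S'=\{1,j\}\subseteq S$ with $m_j<x_1^*$ satisfying $J_S(x_1^*)\geq J_{S'}(x_1^*)$, and thus $\exp\{-TJ_S(x_1^*)\}\leq \exp\{-TJ_{S'}(x_1^*)\}$. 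Combining the two bounds shows that each $l_S\geq 2$ term is majorized by $\mathcal{O}(T^{-1/2})$ times an $\mathcal{S}_1$ term, so that the total $l_S\geq 2$ contribution is absorbed into the multiplicative error $(1+\mathcal{O}(T^{-1/2}))$ of the leading sum.

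The third step is to verify that the leading sum is itself not made degenerate by the alternating signs $(-1)^{n+1}$. This is the most delicate point: one must rule out asymptotic cancellation among same-order contributions. I would argue this by isolating the $n=1$ subsum, which is strictly positive and, by Proposition \ref{prop:exponential}, of order $\exp\{-T\min_{j}G_j(p_1,p_j)\}/\sqrt{T}$; since $J_S(x_1^*)\geq \min_j G_j(p_1,p_j)$ for any larger $S\in \mathcal{S}_n$ with $n\geq 2$, the higher-$n$ terms in the leading sum cannot dominate the $n=1$ term, so no total cancellation can occur. Once the leading sum is bounded below by a multiple of its $n=1$ part, the multiplicative form of the $(1+\mathcal{O}(T^{-1/2}))$ error is legitimate, and combining with the two previous steps completes the proof.

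The main obstacle is this third step, i.e., ensuring that the alternating inclusion--exclusion signs do not produce cancellations that reduce the order of the leading sum below $1/\sqrt{T}$ times the largest surviving exponential. The first two steps are essentially bookkeeping once Proposition \ref{prop:anyterm} and the monotonicity of $J_S$ in $S$ are in hand; the delicate work is showing that the positive $n=1$ term truly dominates uniformly, and that grouping subsets sharing a common $J_S$ (which happens when added alternatives have $m_i\geq x_1^*$ and contribute $0$ to the rate) leaves behind a combinatorial factor with a definite sign so that the leading sum remains non-vanishing up to the claimed order.
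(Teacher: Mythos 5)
Your first two steps reproduce the paper's proof: apply the inclusion--exclusion identity (\ref{eq:in-ex}), expand each term via Proposition \ref{prop:anyterm}, and discard every $S$ with $l_{S}\geq 2$ by combining the extra factor $\sqrt{T}^{\,1-l_{S}}$ with the rate comparison $J_{S}(x_{1}^{*})\geq J_{\{1,i_{1}\}}(x_{1}^{*})$, which follows from $I_{i}(\cdot)\geq 0$ exactly as you describe. The only divergence is your third step. The concern you raise is legitimate --- the paper factors $(1+\mathcal{O}(T^{-1/2}))$ out of the alternating sum without comment --- but your proposed resolution does not close it: showing that the $n\geq 2$ terms of the leading sum \emph{cannot dominate} the positive $n=1$ subsum does not preclude their cancelling it, because a set $S\in\mathcal{S}_{n}$ with $n\geq 2$ can have $J_{S}(x_{1}^{*})$ exactly equal to $\min_{j}G_{j}(p_{1},p_{j})$ (this happens precisely when the $n-1$ added alternatives satisfy $m_{i}\geq x_{1}^{*}$, so that they contribute nothing to the rate and $x_{1}^{*}$ coincides with the pairwise $\mu_{i_{1}}$); such terms enter at the same exponential and polynomial order with a negative sign. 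A cleaner way to legitimize the multiplicative error is to bypass any sign analysis of the leading sum: by (\ref{eq:twosidebound}) and Proposition \ref{prop:exponential}, $1-P\{CS\}\geq \max_{j}P(\bar{X}_{1}(T_{1})\leq\bar{X}_{j}(T_{j}))$ is itself bounded below by a constant times $\exp\{-T\min_{j}G_{j}(p_{1},p_{j})\}/\sqrt{T}$, while the total additive error incurred by stripping the $(1+\mathcal{O}(T^{-1/2}))$ factors from the finitely many terms is $\mathcal{O}(T^{-1/2})$ times that same quantity; hence the additive error is $\mathcal{O}(T^{-1/2})$ \emph{relative to} $1-P\{CS\}$, which yields the claimed multiplicative form directly.
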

\begin{proof}{Proof.}
    It follows from (\ref{eq:in-ex}) and Proposition \ref{prop:anyterm} that
    $$1 - P\{CS\} = \sum_{n=1}^{k-1}(-1)^{n-1}\sum_{\substack{2\leq i_{1}<\dots<i_{n}\leq k\\ S=\{1,i_{1},\dots,i_{n}\}}}\exp\{-TJ_{S}(x_{1}^{*}) \} \cdot \frac{c_{S}}{\sqrt{T}^{l_{S}}}\cdot \left(1 + \mathcal{O}(T^{-1/2})\right).$$
    According to Proposition \ref{prop:exponential}, for $j\in[k]_{-}$ and $S = \{1, j\}$, we have $l_{S} = 1$ and thus $S\in \mathcal{S}_{1}$. Suppose that for some $n\geq 2$, $S=\{1,i_{1},\dots,i_{n}\}\notin \mathcal{S}_{n}$, i.e., $l_{S} > 1$. We note that 
    $$J_{S}(x_{1}^{*}) = I_{S}(\bm{x}_{S}^{*}) = \inf_{\bm{x}:x_{1}\leq x_{i_{1}}\wedge \dots\wedge x_{i_{n}}}I_{S}(\bm{x}) \geq \inf_{ \bm{x}:x_{1}\leq x_{i_{1}} }I_{S}(\bm{x}).$$
    Since $I_{S}(\bm{x}) = p_{1}I_{1}(x_{1}) + p_{i_{1}}I_{i_{1}}(x_{i_{1}}) + p_{i_{2}}I_{i_{2}}(x_{i_{2}}) + \cdots + p_{i_{n}}I_{i_{n}}(x_{i_{n}})$, it follows from $I_{i}(x_{i}) \geq I_{i}(m_{i}) = 0$, $\forall i\in\{i_{2},\dots,i_{n}\}$ that 
    \begin{equation*}
        \inf_{ \bm{x}:x_{1}\leq x_{i_{1}} }I_{S}(\bm{x}) = \inf_{ (x_{1}, x_{i_{1}}):x_{1}\leq x_{i_{1}} }p_{1}I_{1}(x_{1}) + p_{i_{1}}I_{i_{1}}(x_{i_{1}}) = J_{1,i_{1}}(x_{1}^{*}).
    \end{equation*}
    In fact, $J_{1,i_{1}}(x_{1}^{*})$ is identical to the rate $I_{1, i_{1}}(\mu_{i_{1}}, \mu_{i_{1}})$ in Proposition \ref{prop:exponential}. Therefore, $J_{S}(x_{1}^{*}) \geq J_{1,i_{1}}(x_{1}^{*})$. Consequently,
    \begin{align*}
        \dfrac{\exp\{-TJ_{S}(x_{1}^{*})\}\cdot \frac{c_{S}}{\sqrt{T}^{l_{S}}} \cdot \left(1+\mathcal{O}(T^{-1/2})\right)}{\exp\{-TJ_{1,i_{1}}(x_{1}^{*})\}\cdot \frac{c_{1,i_{1}}}{\sqrt{T}}} 
        = & \exp\{-T(J_{S}(x_{1}^{*}) - J_{1,i_{1}}(x_{1}^{*}))\}\cdot \sqrt{T}^{1 - l_{S}} \cdot \mathcal{O}(1) \\
        = & \mathcal{O}(T^{-1/2}),
    \end{align*}
    where the last line follows from $J_{S}(x_{1}^{*}) \geq J_{1,i_{1}}(x_{1}^{*})$ and $l_{S}\geq 2$. As a result,
    \begin{align*}
        1 - P\{CS\} = & \sum_{n=1}^{k-1}(-1)^{n-1}\sum_{S\in\mathcal{S}_{n}}\exp\{-TJ_{S}(x_{1}^{*}) \} \cdot \frac{c_{S}}{\sqrt{T}^{l_{S}}}\cdot \left(1 + \mathcal{O}(T^{-1/2})\right) \\
         & \qquad + \sum_{n=2}^{k-1}(-1)^{n-1}\sum_{\substack{S=\{1,i_{1},\dots,i_{n}\}\\S\notin\mathcal{S}_{n}}}\exp\{-TJ_{1,i_{1}}(x_{1}^{*}) \} \cdot \frac{c_{1,i_{1}}}{\sqrt{T}}\cdot \mathcal{O}(T^{-1/2}) \\
        = & \sum_{n=1}^{k-1}(-1)^{n-1}\sum_{S\in\mathcal{S}_{n}}\exp\{-TJ_{S}(x_{1}^{*}) \} \cdot \frac{c_{S}}{\sqrt{T}^{l_{S}}}\cdot \left(1 + \mathcal{O}(T^{-1/2})\right) \\
        = & \left(\sum_{n=1}^{k-1}(-1)^{n-1}\sum_{S\in\mathcal{S}_{n}}\exp\{-TJ_{S}(x_{1}^{*}) \} \cdot \frac{c_{S}}{\sqrt{T}^{l_{S}}} \right)\cdot \left(1 + \mathcal{O}(T^{-1/2})\right).
    \end{align*}
    \halmos
\end{proof}

For illustrative purpose, consider an instance with $k=3$ alternatives following Gaussian distribution as Example \ref{ex:Gaussian}. In this case, we have the following identity
\begin{equation}\label{eq:ex-inclusion}
    1-P\{CS\} \equiv P\left(\bar{X}_{1}(T_{1}) \leq \bar{X}_{2}(T_{2})\right) + P\left(\bar{X}_{1}(T_{1}) \leq \bar{X}_{3}(T_{3})\right) - P\left(\bar{X}_{1}(T_{1}) \leq \bar{X}_{2}(T_{2}) \wedge \bar{X}_{3}(T_{3})\right).
\end{equation}
By definition, $\bm{x}_{1,2,3} = (x_{1}^{*}, x_{2}^{*}, x_{3}^{*})$ minimizes $$I_{1,2,3}(\bm{x}_{1,2,3}) = p_{1}\cdot\frac{1}{2}\cdot\frac{(x_{1}-m_{1})^{2}}{\sigma_{1}^{2}} + p_{2}\cdot\frac{1}{2}\cdot\frac{(x_{2}-m_{2})^{2}}{\sigma_{2}^{2}} + p_{3}\cdot\frac{1}{2}\cdot\frac{(x_{3}-m_{3})^{2}}{\sigma_{3}^{2}}$$
over $\{\bm{x}_{1,2,3}\in\mathbb{R}^{3}: x_{1}\leq x_{2}\wedge x_{3}\}$. Simple calculation yields that
\begin{equation*}
    (x_{1}^{*}, x_{2}^{*}, x_{3}^{*}) = \begin{cases}
        (x_{1}^{*}, x_{1}^{*}, x_{1}^{*}) & \text{ if }\frac{\sigma_{3}^{2}}{p_{3}}(m_{1} - m_{2}) > \frac{\sigma_{1}^{2}}{p_{1}}(m_{2} - m_{3}), \\
        (\mu_{3}, m_{2}, \mu_{3}) & \text{ otherwise, }
    \end{cases}
\end{equation*}
where $\mu_{3}$ is as in Proposition \ref{prop:exponential} and
$$x_{1}^{*} = \left(\frac{p_{1}}{\sigma_{1}^{2}} + \frac{p_{2}}{\sigma_{2}^{2}} + \frac{p_{3}}{\sigma_{3}^{2}}\right)^{-1}\left(\frac{p_{1}m_{1}}{\sigma_{1}^{2}} + \frac{p_{2}m_{2}}{\sigma_{2}^{2}} + \frac{p_{3}m_{3}}{\sigma_{3}^{2}}\right).$$
Furthermore, we have
\begin{equation*}
    I_{1,2,3}(x_{1}^{*}, x_{2}^{*}, x_{3}^{*}) = \begin{cases}
        \frac{1}{2}\cdot\frac{ \frac{\sigma_{1}^{2}}{p_{1}}(m_{2} - m_{3})^{2} + \frac{\sigma_{2}^{2}}{p_{2}}(m_{3} - m_{1})^{2} + \frac{\sigma_{3}^{2}}{p_{3}}(m_{1} - m_{2})^{2} }{ \frac{\sigma_{1}^{2}}{p_{1}}\frac{\sigma_{2}^{2}}{p_{2}} + \frac{\sigma_{2}^{2}}{p_{2}}\frac{\sigma_{3}^{2}}{p_{3}} + \frac{\sigma_{3}^{2}}{p_{3}}\frac{\sigma_{1}^{2}}{p_{1}} } & \text{ if }\frac{\sigma_{3}^{2}}{p_{3}}(m_{1} - m_{2}) > \frac{\sigma_{1}^{2}}{p_{1}}(m_{2} - m_{3}), \\
        \frac{1}{2} \cdot \frac{(m_{1} - m_{3})^{2}}{ \frac{\sigma_{1}^{2}}{p_{1}} + \frac{\sigma_{3}^{2}}{p_{3}} } & \text{ otherwise, }
    \end{cases}
\end{equation*}
and
\begin{equation*}
    (\bar{\lambda}_{1}^{*}, \bar{\lambda}_{2}^{*}, \bar{\lambda}_{3}^{*}) = \begin{cases}
        \left(\frac{(x^{*} - m_{1})^{2}}{\sigma_{1}^{2}}, \frac{(x^{*} - m_{2})^{2}}{\sigma_{2}^{2}}, \frac{(x^{*} - m_{3})^{2}}{\sigma_{3}^{2}}\right) & \text{ if }\frac{\sigma_{3}^{2}}{p_{3}}(m_{1} - m_{2}) > \frac{\sigma_{1}^{2}}{p_{1}}(m_{2} - m_{3}), \\
        \left(\frac{p_{3}(m_{3} - m_{1})}{p_{1}\sigma_{3}^{2}+p_{3}\sigma_{1}^{2}}, 0, \frac{p_{1}(m_{1} - m_{3})}{p_{1}\sigma_{3}^{2}+p_{3}\sigma_{1}^{2}}\right) & \text{ otherwise. }
    \end{cases}
\end{equation*}
If $\frac{\sigma_{3}^{2}}{p_{3}}(m_{1} - m_{2}) > \frac{\sigma_{1}^{2}}{p_{1}}(m_{2} - m_{3})$, then we have
\begin{equation*}
     P\left(\bar{X}_{1}(T_{1}) \leq \bar{X}_{2}(T_{2}) \wedge \bar{X}_{3}(T_{3})\right) = \exp\{-TI_{1,2,3}(x_{1}^{*}, x_{2}^{*}, x_{3}^{*})\} \cdot \frac{1}{\bar{\lambda}_{2}^{*}\bar{\lambda}_{3}^{*}p_{2}p_{3}\sqrt{\operatorname{det}(\Sigma)}T} (1 + o(1)). 
\end{equation*}
Otherwise, we have 
\begin{equation*}
    P\left(\bar{X}_{1}(T_{1}) \leq \bar{X}_{2}(T_{2}) \wedge \bar{X}_{3}(T_{3})\right) = \exp\{-TI_{1,3}(\mu_{3},\mu_{3})\} \cdot \frac{1}{2\sqrt{2\pi}\lambda_{3}^{*}p_{3}\tilde{\sigma}_{1,3}\sqrt{T}}(1 + o(1)).
\end{equation*}

After some calculations, we have the following approximation of $1 - P\{CS\}$
$$\begin{cases}
    \begin{aligned}
        &\frac{1}{\sqrt{2\pi T}}\exp\left\{ -\frac{1}{2}T R_{2}(p_{1}, p_{2}) - \frac{1}{2}\ln{R_{2}(p_{1}, p_{2})} \right\} \\
        & \qquad + \frac{1}{\sqrt{2\pi T}}\exp\left\{ -\frac{1}{2}T R_{3}(p_{1}, p_{3}) - \frac{1}{2}\ln{R_{3}(p_{1}, p_{3})} \right\}
    \end{aligned}, & \text{ if }\frac{\sigma_{3}^{2}}{p_{3}}(m_{1} - m_{2}) > \frac{\sigma_{1}^{2}}{p_{1}}(m_{2} - m_{3}), \\
    \begin{aligned}
        &\frac{1}{\sqrt{2\pi T}}\exp\left\{ -\frac{1}{2}T R_{2}(p_{1}, p_{2}) - \frac{1}{2}\ln{R_{2}(p_{1}, p_{2})} \right\} \\
        & \qquad + \frac{1/2}{\sqrt{2\pi T}}\exp\left\{ -\frac{1}{2}T R_{3}(p_{1}, p_{3}) - \frac{1}{2}\ln{R_{3}(p_{1}, p_{3})} \right\}
    \end{aligned}, & \text{ otherwise. }
\end{cases}$$

As demonstrated in the equation above, our approximation to PICS exhibits discontinuities in sampling ratios $\bm{p}$ along the \textit{critical line} determined by the inequality: $\frac{\sigma_{3}^{2}}{p_{3}}(m_{1} - m_{2}) > \frac{\sigma_{1}^{2}}{p_{1}}(m_{2} - m_{3})$. Compared to Theorem \ref{thm:main}, the new approximation takes into account the probability of SIBC and has a potential to tackle the so-called monotonicity issue in the low-confidence scenario. To be specific, for two vectors of sampling ratios near the critical line but on opposite sides, the one with a larger $p_{1}$ undergoes a sharp decrease in the approximate PCS. Therefore, our approximation may suggest decreasing $p_{1}$ in order to improve the PCS. Moreover, the critical line $\frac{\sigma_{3}^{2}}{p_{3}}(m_{1} - m_{2}) > \frac{\sigma_{1}^{2}}{p_{1}}(m_{2} - m_{3})$ may serve as a quantitative criterion for distinguishing low- and high-confidence scenarios.

However, it is worth mentioning that although Theorem \ref{thm:extended} provides a closed-form formula, it is numerically difficult to evaluate the approximation. For one thing, in general, there are exponentially many terms in the approximation as the number of alternative increases. For another, calculating the constant $c_{S}$ involving the evaluation of the expectation of a high-dimensional Gaussian random variable with a non-trivial variance matrix. Even though we can evaluate the approximation efficiently in special cases such as Gaussian sampling distributions, the approximation as a function of sampling ratios $\bm{p}$ is no longer convex and is even not continuous, making it difficult to develop OCBA-type policies based on gradients of the approximation.

\begin{algorithm}[t]
\caption{Refined FCBA(0) for Low-Confidence Scenarios (LC-FCBA(0))}
\label{alg:modified}
\begin{algorithmic}[1]
\STATE Initialize: Set budget $T$, number of alternatives $k$, initial replications $T_{0}$.
\STATE Initialize replications $T_{i} \leftarrow T_{0}$ for $i = 1, 2, \dots, k$.
\STATE Simulate $T_{i}$ replications for each alternative $i$.
\STATE Total cost $N \leftarrow k\times T_{0}$.
\WHILE{Total cost $< T$}
    \STATE Estimate the best alternative $j^{*} \leftarrow \argmax_{j\in[k]}\hat{m}_{j}$.
    \STATE Estimate the second best alternative $j^{**} \leftarrow \argmax_{j\in[k]\backslash\{j^{*}\}}\hat{m}_{j}$.
    \STATE Use plug-in estimation of (\ref{eq:definerj}) based on sample means and sample variances to calculate \\ 
    $j' \leftarrow \argmax_{j\in[k]\backslash\{j^{*}\}} U'_{\ell}(\hat{R}_{j}(T_{j^{*}}/N, T_{j}/N))\cdot \hat{R}_{j}(T_{j^{*}}/N, T_{j}/N) \cdot \dfrac{\hat{\sigma}_{j}^{2}/T_{j}^{2}}{\hat{\sigma}_{j^{*}}^{2}/T_{j^{*}}+\hat{\sigma}_{j}^{2}/T_{j}}.$
    \IF{$T_{j^{*}}^{2}/\hat{\sigma}_{j^{*}}^{2} > \sum_{j\in[k]\backslash\{j^{*}\}}T_{j}^{2}/\hat{\sigma}_{j}^{2}$ \textbf{or} for any $j\in[k]\backslash\{j^{*}, j^{**}\}$
    $$\frac{\hat{\sigma}_{j}^{2}}{p_{j}}(\hat{m}_{j^{*}} - \hat{m}_{j^{**}}) > \frac{\hat{\sigma}_{j^{*}}^{2}}{p_{j^{*}}}(\hat{m}_{j^{**}} - \hat{m}_{j}).$$}
    \STATE Simulate one additional replication for alternative $j^{\prime}$.
    \STATE Update $T_{j^{\prime}} \leftarrow T_{j^{\prime}} + 1$.
    \ELSE
    \STATE Simulate one additional replication for alternative $j^{*}$.
    \STATE Update $T_{j^{*}} \leftarrow T_{j^{*}} + 1$.
    \ENDIF
    \STATE Update $N\leftarrow N+1$.
\ENDWHILE
\STATE Select best-performing alternative based on sample means.
\end{algorithmic}
\end{algorithm}

Nevertheless, we can intuitively take advantage of Lemma \ref{lem:critical} and Proposition \ref{prop:anyterm} to develop an efficient R\&S policy for low-confidence scenarios. Roughly speaking, Theorem \ref{thm:extended} refines Theorem \ref{thm:main} by characterizing the probability of SIBC and indicating whether such a probability is significantly large based on the order of $\sqrt{T}^{-1/2}$ in the approximation of Proposition \ref{prop:anyterm}. Although the asymptotic equivalent approximation per se is hard to evaluate, the significance of the probability of SIBC can be easily assessed using the critical point. Recall that $l_{S} = n - s$ where $s = \#\{i\in S\backslash\{1\}: m_{j} \geq x_{1}^{*}\}$. Therefore, a small critical point $x_{1}^{*}$ implies that $s$ is large and $l_{S}$ is small, which further suggests a high probability of SIBC and is indicative of the occurrence of a low-confidence scenario based on the current allocation. Under Gaussian sampling distributions, we introduce a novel refined FCBA(0) for low-confidence scenarios (LC-FCBA(0)), which simply refines FCBA(0) with a judgment of the low-confidence scenario by the critical points.

To be specific, line 9 in Algorithm \ref{alg:modified} judges whether $m_{j}$ is larger than or equal to the critical points using plug-in estimation for $j\in[k]\backslash\{j^{*}, j^{**}\}$, where $j^{*}$ and $j^{**}$ are the estimates of the best and the second best alternative, respectively. If the inequality holds for any $j$, then Algorithm \ref{alg:modified} simply stops allocating samples to the estimated best, preventing the non-monotonicity issue in the low-confidence scenario. Intuitively, the second best alternative roughly has the largest mean except for the best alternative, which will lead to a significant probability of SIBC when the critical point for three alternatives exceeds its mean. It could be reasonable to take the probability of SIBC for more than three alternatives into account when there are a few alternatives, which may cause high computational cost though if the number of alternatives is large. Thanks to Proposition \ref{prop:anyterm}, we can restrict the algorithm to only consider SIBC for three alternatives due to the fact that $J_{S}(x_{1}^{*}) \leq J_{S^{\prime}}(x_{1}^{*})$ if $S\subseteq S^{\prime}$ as shown in the proof of Theorem \ref{thm:extended}. %

\subsubsection{Experiment in Low-Confidence Scenarios}\label{sec:explowcon}
We present an experiment in a low-confidence scenario comparing ROA, OCBA, EA and FCBA(0). We fix $k=10$ alternatives following Gaussian distributions. The means and variances are given by $m_{i} = 0.001 * (k+1-i)$ and $\sigma_{i}^{2} = 1 + \bm{1}\{i\leq 5\}$, for $i=1,2,\dots,k$, respectively. We fix a total sampling budget $T=100$ aside from an initial sample of size $10$ for each alternative. The means are smaller by two orders than those in Section \ref{sec:experiment}, and the average budget per alternative is half as that in Section \ref{sec:experiment}. Moreover, the PCS using EA is only around 13\% at the end, which is slightly higher than the PCS of a random guess, i.e., 10\%. Therefore, we may think of this problem setting as a low-confidence scenario.

\begin{figure}[t]
    \centering
    \includegraphics[width = .45\linewidth]{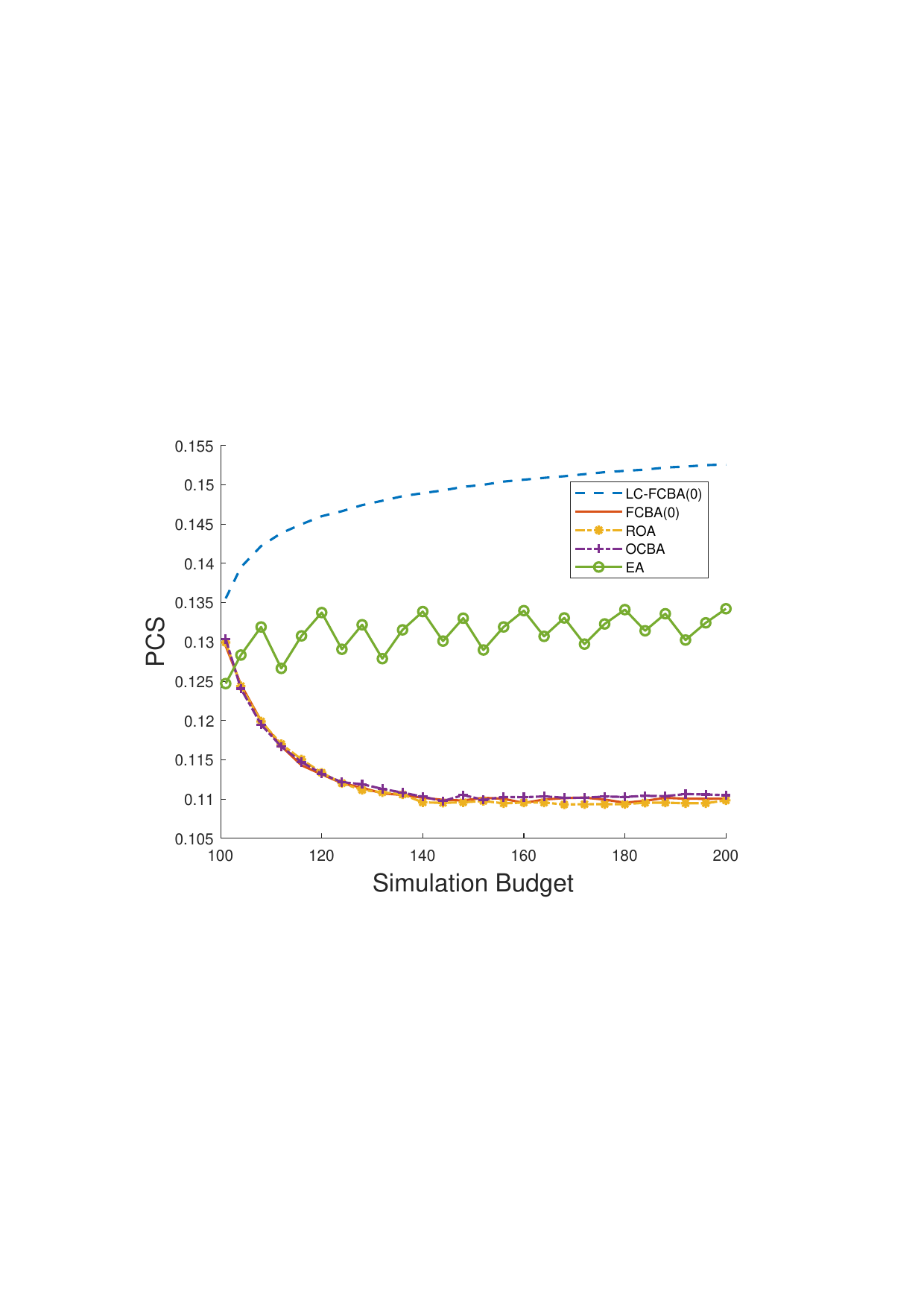}
    \includegraphics[width = .45\linewidth]{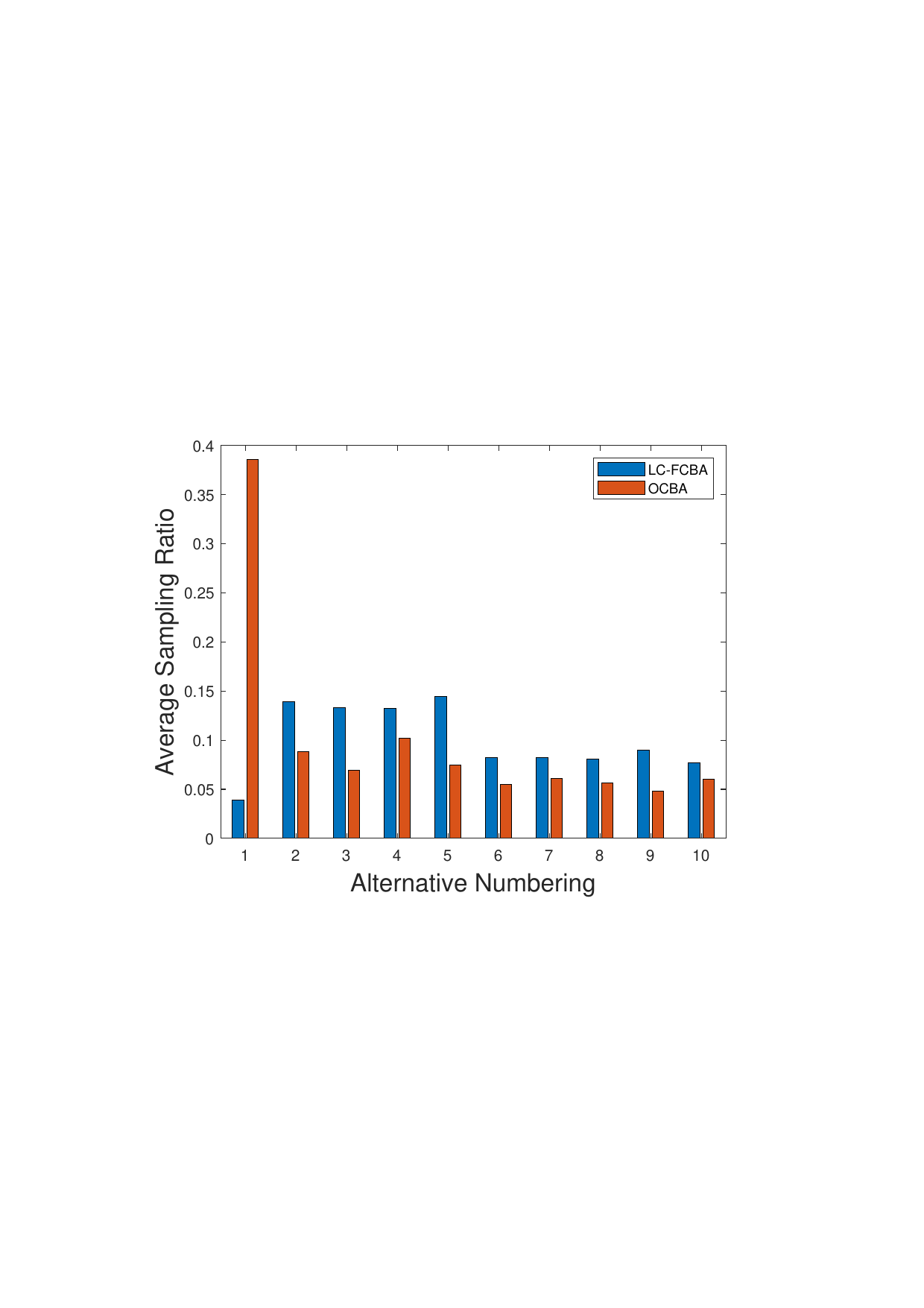}
    \caption{Left: PCS of LC-FCBA(0) versus compared algorithms based on $1,000,000$ macro-replications. Right: Average sample allocation when the best alternative is correctly identified.}
    \label{fig:ex-low}
\end{figure}

The left panel in Figure \ref{fig:ex-low} demonstrates the ability of LC-FCBA(0) tackling the non-monotonicity issue. While all other algorithms experience a decrease in PCS as samples accumulate, LC-FCBA(0) is the only one that shows an increase. The right panel depicts the allocation ratios of LC-FCBA and OCBA. Consistent with our reasoning, LC-FCBA(0) allocates less than 5\% of the budget to the best alternative, while other compared algorithms, such as OCBA, allocate more than one-third of the budget to the best alternative which results from underestimating the probability of SIBC.

\subsection{Conditional Probability}\label{sec:conditionalprob}
When making allocation decision, it is also of interest to take into account the influence of the observed data to the final PCS. To be specific, we will discuss an approximation of the PCS, i.e., $P\{CS\vert \mathcal{F}_{t}\}$, conditioned on observed samples. This approximation has a potential to instruct the sample allocation process dynamically. We shall make a clarification on notations. Now, let $t$ be the total number of samples that have been allocated and $t_{j}$ be the number of samples allocated to $j$. Hence $\sum_{j\in[k]}t_{j} = t$. Let $\mathcal{F}_{t}$ denote the natural filtration generated by observations till time $t$. Henceforth, we will let $T$ and $T_{j}$ be the total number of remaining samples and the number of samples therein to be allocated to sample $j$. We have
\begin{equation*}
    1 - P\{CS\vert \mathcal{F}_{t}\} \approx \sum_{j\in[k]_{-}} P\left(\frac{t_{1}\bar{X}_{1}(t_{1}) + T_{1}\bar{X}_{1}'(T_{1})}{t_{1} + T_{1}} \leq \frac{t_{j}\bar{X}_{j}(t_{j}) + T_{j}\bar{X}_{j}'(T_{j})}{t_{j} + T_{j}}\middle\vert \mathcal{F}_{t}\right),
\end{equation*}
where $\bar{X}_{i}(t_{i})$ is the sample mean of alternative $i$ based on $t$ allocated samples, which can be viewed as constants conditioned on $\mathcal{F}_{t}$, and $\bar{X}'_{i}(T_{i})$ is the sample mean of alternative $i$ based on $T_{i}$ samples to be allocated. This probability represents the ultimate PCS if the best alternative is estimated using sample means based on data both before and after time $t$. The large deviation rate of $P\{CS\vert \mathcal{F}_{t}\}$ is still given by $\inf_{x_{1}\leq x_{j}}I(x_{1},x_{j})$. By the same token, we have
\begin{equation}\label{eq:rescale2}\begin{aligned}
    &P\left(\frac{t_{1}\bar{X}_{1}(t_{1}) + T_{1}\bar{X}_{1}'(T_{1})}{t_{1} + T_{1}} \leq \frac{t_{j}\bar{X}_{j}(t_{j}) + T_{j}\bar{X}_{j}'(T_{j})}{t_{j} + T_{j}}\middle\vert \mathcal{F}_{t}\right) \\
    = &\int{dF_{X_{1}^{(t_{1} + 1)},\dots,X_{1}^{(t_{1} + T_{1})}}(x_{1}^{(t_{1} + 1)},\dots,x_{1}^{(t_{1} + T_{1})}) dF_{X_{j}^{(t_{j} + 1)},\dots,X_{j}^{(t_{j} + T_{j})}}(x_{j}^{(t_{j} + 1)},\dots,x_{j}^{(t_{j} + T_{j})})} \\
    &\quad \cdot \bm{1}\left\{\frac{t_{1}\bar{X}_{1}(t_{1}) + \sum_{\tau = 1}^{T_{1}}x_{1}^{(t_{1}+\tau)}}{t_{1} + T_{1}} \leq \frac{t_{j}\bar{X}_{j}(t_{j}) + \sum_{\tau = 1}^{T_{j}}x_{j}^{(t_{j}+\tau)}}{t_{j} + T_{j}}\right\}.
\end{aligned}\end{equation}
We shall change a little bit on the definition of $Z_{1}^{(\tau)}$'s and $Z_{j}^{(\tau)}$'s, and then the probability above can be rewritten as
\begin{equation}\label{eq:rescale3}\begin{aligned}
    &\exp\left\{-T\cdot\left( p_{1}\tilde{I}_{1}(\Lambda_{1}'(\tilde{\lambda}_{1}^{*})) + p_{j}\tilde{I}_{j}(\Lambda_{j}'(\tilde{\lambda}_{j}^{*}))\right)\right\}\cdot \\
    &\quad \int{dF_{Z_{1}^{(t_{1} + 1)},\dots,Z_{1}^{(t_{1} + T_{1})}}(z_{1}^{(t_{1} + 1)},\dots,z_{1}^{(t_{1} + T_{1})}) dF_{Z_{j}^{(t_{j} + 1)},\dots,Z_{j}^{(t_{j} + T_{j})}}(z_{j}^{(t_{j} + 1)},\dots,z_{j}^{(t_{j} + T_{j})})} \\
    &\quad \cdot \bm{1}\left\{\frac{t_{1}\bar{X}_{1}(t_{1}) + \sum_{\tau = 1}^{T_{1}}z_{1}^{(t_{1}+\tau)}}{t_{1} + T_{1}} \leq \frac{t_{j}\bar{X}_{j}(t_{j}) + \sum_{\tau = 1}^{T_{j}}z_{j}^{(t_{j}+\tau)}}{t_{j} + T_{j}}\right\} \\
    &\quad \cdot\exp\left\{- \lambda_{1}^{*}p_{1}T\left(\frac{1}{t_{1}+T_{1}}\sum_{i=1}^{T_{1}}\left(z_{1}^{(i)} - \Lambda_{1}'(\tilde{\lambda}_{1}^{*})\right) - \frac{1}{t_{j}+T_{j}}\sum_{i=1}^{T_{j}}\left(z_{j}^{(i)} - \Lambda_{j}'(\tilde{\lambda}_{j}^{*})\right)\right)\right\},
\end{aligned}\end{equation}
where $\tilde{\lambda}_{i}^{*} = \lambda_{i}^{*}\frac{T_{i}}{t_{i} + T_{i}}$ and $\tilde{I}_{i}(x) = \tilde{\lambda}_{i}^{*}x - \Lambda_{i}(\tilde{\lambda}_{i}^{*})$. Therein, we have $dF_{Z_{i}}/dF_{X_{i}}(x) = \exp\{-\Lambda_{i}(\tilde{\lambda}_{i}^{*}) + \tilde{\lambda}_{i}^{*}x\}$.

We use the same technique applied to binary comparisons by defining $\tilde{\Gamma}_{j} := \sqrt{T}\left(\sum_{i=1}^{T_{1}}\frac{z_{1}^{(i)} - \Lambda_{1}'(\tilde{\lambda}_{1}^{*})}{t_{1}+T_{1}} - \sum_{i=1}^{T_{j}}\frac{z_{j}^{(i)} - \Lambda_{j}'(\tilde{\lambda}_{j}^{*})}{t_{j}+T_{j}}\right)$. Then, the conditional probability in (\ref{eq:rescale3}) equals
\begin{equation}\label{eq:concentrate3}
    \exp\left\{-T\cdot\left( p_{1}\tilde{I}_{1}(\Lambda_{1}'(\tilde{\lambda}_{1}^{*})) + p_{j}\tilde{I}_{j}(\Lambda_{j}'(\tilde{\lambda}_{j}^{*}))\right)\right\}\times \mathbb{E}\left[\bm{1}\{\tilde{\Gamma}_{j}\leq \Xi_{T}\}\cdot\exp\{\Upsilon_{T}\tilde{\Gamma}_{j}\}\right],
\end{equation}
where $\Xi_{T} = \sqrt{T}(t_{j}\bar{X}_{j}(t_{j})+T_{j}\Lambda_{j}'(\tilde{\lambda}_{j}^{*})) / (t_{j} + T_{j}) - \sqrt{T}(t_{1}\bar{X}_{1}(t_{1})+T_{1}\Lambda_{1}'(\tilde{\lambda}_{1}^{*}))/(t_{1} + T_{1})$ and $\Upsilon_{T} = -\lambda_{1}^{*}p_{1}\sqrt{T} > 0$. Moreover, it follows from the Taylor's expansion of $\Lambda_{j}'(\lambda)$ at $\tilde{\lambda}_{j}^{*}$
\begin{gather*}
    \Lambda_{1}'(\tilde{\lambda}_{1}^{*}) = \mu_{j} - \Lambda_{1}''(\lambda_{1}^{*})\frac{t_{1}}{t_{1} + T_{1}}\lambda_{1}^{*} + \mathcal{O}(T^{-2}), \\
    \Lambda_{j}'(\tilde{\lambda}_{j}^{*}) = \mu_{j} - \Lambda_{j}''(\lambda_{j}^{*})\frac{t_{j}}{t_{j} + T_{j}}\lambda_{j}^{*} + \mathcal{O}(T^{-2}),
\end{gather*}
that 
\begin{equation*}
    \Xi_{T} = \sqrt{T}\left(\frac{t_{j}\bar{X}_{j}(t_{j})}{T_{j}} - \frac{t_{1}\bar{X}_{1}(t_{1})}{T_{1}}\right) + \sqrt{T}\left(\frac{t_{1}}{T_{1}}(\Lambda_{1}''(\lambda_{1}^{*})\lambda_{1}^{*} + \mu_{j}) - \frac{t_{j}}{T_{j}}(\Lambda_{j}''(\lambda_{j}^{*})\lambda_{j}^{*} + \mu_{j}) + \mathcal{O}(T^{-2})\right).
\end{equation*}
Consequently, we have
\begin{equation*}
\begin{aligned}
    &\exp\{(-i\lambda + \Upsilon_{T})\Xi_{T}\} = \left(1 + \mathcal{O}(T^{-1/2})\right)\times \\ 
    &\qquad \exp\left\{t_{j}\lambda_{j}^{*}\left(\bar{X}_{j}(t_{j}) - \mu_{j} - \Lambda_{j}''(\lambda_{j}^{*})\lambda_{j}^{*}\right) + t_{1}\lambda_{1}^{*}\left(\bar{X}_{1}(t_{1}) - \mu_{j} - \Lambda_{1}''(\lambda_{1}^{*})\lambda_{1}^{*}\right)\right\}.
\end{aligned}
\end{equation*}

Denote the exponential term above as $\exp\{A\}$. Following the Parseval's identity and the Taylor's expansion again, we come to the conclusion that the expectation in (\ref{eq:concentrate3}) takes the form
\begin{equation}\label{eq:parseval3}\begin{aligned}
     & \frac{1}{2\pi}\int_{-\infty}^{\infty}{\frac{1}{-i\lambda + \Upsilon_{T}}\exp\{(-i\lambda + \Upsilon_{T})\Xi_{T}\} \times \exp(\Lambda_{\tilde{\Gamma}_{j}}(i\lambda)) d\lambda} \\
    = & \frac{1}{2\pi\cdot\Upsilon_{T}}\int_{-\infty}^{\infty}{\left(1 + \mathcal{O}(T^{-1/2})\right)\cdot \exp\{A\} \cdot \exp\left\{-\frac{1}{2}\sigma_{1,j}^{2}\lambda^{2}\right\} d\lambda}.
\end{aligned}\end{equation}
Therefore, we have 
$$\begin{aligned}
     & 1 - P\{CS\vert\mathcal{F}_{t}\} \approx \sum_{j\in[k]_{-}}\bigg(\frac{1}{\sqrt{2\pi}\cdot \lambda_{j}^{*}p_{j}\sigma_{1,j}\sqrt{T}} \\
    &\qquad \times \exp\left\{-T\cdot\left( p_{1}\tilde{I}_{1}(\Lambda_{1}'(\tilde{\lambda}_{1}^{*})) + p_{j}\tilde{I}_{j}(\Lambda_{j}'(\tilde{\lambda}_{j}^{*}))\right)\right\} \\
    &\qquad \times \exp\left\{\frac{t_{j}}{p_{j}}\left(\bar{X}_{j}(t_{j}) - \mu_{j} - \Lambda_{j}''(\lambda_{j}^{*})\lambda_{j}^{*}\right) - \frac{t_{1}}{p_{1}}\left(\bar{X}_{1}(t_{1}) - \mu_{j} - \Lambda_{1}''(\lambda_{1}^{*})\lambda_{1}^{*}\right)\right\}\bigg).
\end{aligned}$$

In a Gaussian setting, after some tedious calculations, we see that
\begin{equation*}\begin{aligned}
    &1 - P\{CS\vert\mathcal{F}_{t}\} \approx \sum_{j\in[k]_{-}}\bigg(\frac{1}{\sqrt{2\pi T}}\exp\left\{-\frac{1}{2}\ln R_{j}(p_{1}, p_{j})\right\} \\
    &\qquad \times \exp\left\{-\frac{T}{2} R_{j}(p_{1}, p_{j}) \cdot \left(\frac{\frac{\sigma_{1}^{2}}{p_{1}}\cdot\frac{T_{1}^{2}}{(t_{1}+T_{1})^{2}} + \frac{\sigma_{j}^{2}}{p_{j}}\cdot\frac{T_{j}^{2}}{(t_{j}+T_{j})^{2}}}{\frac{\sigma_{1}^{2}}{p_{1}} + \frac{\sigma_{j}^{2}}{p_{j}}}\right)\right\} \\
    &\qquad \times \exp\left\{t_{j}\lambda_{j}^{*}\left(\bar{X}_{j}(t_{j}) + m_{j} - 2\mu_{j}\right) + t_{1}\lambda_{1}^{*}\left(\bar{X}_{1}(t_{1}) + m_{1} - 2\mu_{1}\right)\right\}\bigg).
\end{aligned}\end{equation*}
With mean and variance parameters replaced with sample means and sample variances, the approximation equals to 
\begin{equation*}\begin{aligned}
    &1 - P\{CS\vert\mathcal{F}_{t}\} \approx \sum_{j\in[k]_{-}}\Bigg(\frac{1}{\sqrt{2\pi T}}\exp\left\{-\frac{1}{2}\ln R_{j}(p_{1}, p_{j})\right\} \\
    &\qquad \times \exp\left\{-\frac{T}{2} R_{j}(p_{1}, p_{j}) \cdot \left(\frac{\frac{\sigma_{1}^{2}}{p_{1}}\cdot\frac{T_{1}^{2}}{(t_{1}+T_{1})^{2}} + \frac{\sigma_{j}^{2}}{p_{j}}\cdot\frac{T_{j}^{2}}{(t_{j}+T_{j})^{2}} + 4\frac{\sigma_{1}^{2}\sigma_{j}^{2}}{p_{1}p_{j}}\left(\frac{t_{1}}{T_{1}} + \frac{t_{j}}{T_{j}}\right)}{\frac{\sigma_{1}^{2}}{p_{1}} + \frac{\sigma_{j}^{2}}{p_{j}}} \right)\right\}\Bigg).
\end{aligned}\end{equation*}
Therefore, the above approximation reduces to the former results when $t_{i}=0$. However, if $t_{i}>0$, then the large deviations rates are re-weighted. It is worth mentioning that although our approximation is asymptotically equivalent and provides more information than LDP, the approximation error could be high if $T$ is extremely small. Therefore, in order to design a fully sequential algorithm using the approximate conditional PCS above, we will have to tackle this issue for large time steps, which is deferred to future work.

\section{Conclusions}\label{sec:conclusion}
In summary, our work introduces an advanced Bahadur-Rao type expansion for PCS that significantly enhances the finite sample performance of R\&S algorithms. By extending beyond large deviation principles, this expansion provides a precise characterization of PCS, integrating finite sample behavior that classic asymptotic metrics overlook. Through our framework, the exploration of the optimal sampling ratios and their impact on PCS allows for an efficient allocation of simulation resources, balancing between exploration and exploitation under limited budgets. Our findings show that this approach can address scenarios with low-confidence selection, thereby solving previously observed non-monotonicity in PCS as sampling budgets increase. We also discuss the possibility of leveraging the proposed theory to develop dynamic R\&S policies.

Our novel allocation policy, grounded in the proposed PCS approximation, demonstrates substantial improvements in computational efficiency and accuracy. Particularly, empirical results indicate that even lower-order approximations within our framework enable accurate selection outcomes across varied simulation budgets. By providing a theoretical foundation that accommodates both asymptotic and finite sample behavior, this approach bridges the gap between theoretical and practical applications of PCS, optimizing performance for both large and small budgets without relying on complex dynamic programming methods.

Our novel theory provides a step forward concerning finite budget allocation in the R\&S research. Future research can expand on these findings by applying our framework to non-Gaussian distributions and exploring dynamic adjustments of the sampling ratios in real-time applications. With a versatile foundation and promising results, the proposed approach offers a robust tool for advancing finite sample optimization in simulation-based decision-making, ensuring that PCS remains a reliable metric even as computational constraints shift.

\section*{Acknowledgments.}
% Enter the text of acknowledgments here
This work was supported in part by the National Natural Science Foundation of China (NSFC) under Grants 72325007 and 72250065, and by Xiangjiang Laboratory Key Project under Grant 23XJ02004. This work was also supported in part by Wuhan East Lake High-Tech Development Zone (also known as the Optics Valley of China, or OVC) National Comprehensive Experimental Base for Governance of Intelligent Society.

% references
\bibliographystyle{informs2014}
\bibliography{myrefs.bib}

\begin{APPENDICES}
\section{Proofs of Lemmas and Theorems}
\begin{proof}{Proof of Example \ref{example:Gaussian}.}
    Suppose $X_{i}\sim N(m_{i}, \sigma_{i}^{2})$, $\forall i\in[k]$. The CGF equates to $\Lambda_{i}(\lambda) = m_{i}\lambda + \sigma_{i}^{2}\lambda^{2}/2 < \infty$, $\forall\lambda\in\mathbb{R}$. Moreover, $\Lambda_{i}'(\lambda) = m_{i} + \sigma_{i}^{2}\lambda$ and thus $\mathscr{R}(\Lambda_{i}') = \mathbb{R}$. This justifies Assumption \ref{ass:light}. On the other hand, the density function of a normal random variable is unimodal and bounded, and therefore has bounded total variation. This justifies Assumption \ref{ass:btv}. To partially sum up, the main result holds for the Gaussian case.
    
    For $j\in[k]_{-}$, we see that $I_{j}(x) = \frac{1}{2}\cdot (x-m_{j})^{2} /\sigma_{j}^{2}$ and $\lambda_{j}^{*} = (x- m_{j})/\sigma_{j}^{2}$ takes the maximum in its definition. Recall that $p_{1}I_{1}'(\mu_{j}) + p_{j}I_{j}'(\mu_{j}) = 0$, leading to $\mu_{j} = (p_{1}m_{1}/\sigma_{1}^{2} + p_{j}m_{j}/\sigma_{j}^{2}) / (p_{1}/\sigma_{1}^{2} + p_{j}/\sigma_{j}^{2})$. Immediately, we have $I_{j}(\mu_{j}, \mu_{j}) = \frac{1}{2}(m_{1} - m_{j})^{2} / \left(\sigma_{1}^{2}/p_{1} + \sigma_{j}^{2}/p_{j}\right)$, $\lambda_{1,j}^{*} = p_{j}(m_{j} - m_{1})/(p_{1}\sigma_{j}^{2} + p_{j}\sigma_{1}^{2})$ and $\lambda_{j}^{*} = p_{1}(m_{1} - m_{j})/(p_{1}\sigma_{j}^{2} + p_{j}\sigma_{1}^{2})$. Since Gaussian distribution has any order cumulants, we can expand the series to any term. Using the first order approximation completes the proof. \halmos
\end{proof}

\begin{proof}{Proof of Example \ref{example:exp}.}
    Suppose $X_{i}\sim \operatorname{Exp}(\beta_{i})$, $\forall i\in[k]$. The CDF of $X_{i}$ is $F_{X_{i}}(x) = 1 - \exp(-x/\beta_{i})$. It follows immediately that the CGF of $X_{i}$ is $\Lambda_{i}(\lambda) = -\log(1 - \beta_{i} \lambda)$, $\forall \lambda < \beta_{i}^{-1}$. For $\lambda \geq \beta_{i}^{-1}$, the CGF diverges. Since $\Lambda_{i}'(\lambda) = \beta_{i}/(1 - \beta_{i}\lambda)$ and the domain of $\Lambda_{i}'$ is $(-\infty, \beta_{i}^{-1})$, we see that $\mathscr{R}(\Lambda_{i}') = (0, \infty)$. This justifies Assumption \ref{ass:light}. And Assumption \ref{ass:btv} holds because the density function of an exponential random variable is bounded and monotonic.\\
    \indent For $j\in[k]_{-}$ and $x>0$, we have $I_{j}(x) = \sup_{\lambda<\beta_{i}^{-1}}\lambda x + \log(1 - \beta_{j}\lambda) = \beta_{j}^{-1}x - \log(\beta_{j}^{-1}x) - 1$ and $\lambda_{j}^{*} = \beta_{j}^{-1} - x^{-1}$. If follows from the definition that $p_{1}I_{1}'(\mu_{j}) + p_{j}I_{j}'(\mu_{j}) = p_{1}(\beta_{1}^{-1} - \mu_{j}^{-1}) + p_{j}(\beta_{j}^{-1} - \mu_{j}^{-1}) = 0$. Therefore, $\mu_{j} = (p_{1} + p_{j})/(p_{1}\beta_{1}^{-1} + p_{j}\beta_{j}^{-1})$ and thus 
    $$\begin{aligned}
        I_{j}(\mu_{j}, \mu_{j}) = & p_{1}I_{1}(\mu_{j}) + p_{j}I_{j}(\mu_{j}) = p_{1}\log\frac{p_{1}\beta_{1}^{-1} + p_{j}\beta_{1}^{-1}}{p_{1}\beta_{1}^{-1} + p_{j}\beta_{j}^{-1}} + p_{j}\log\frac{p_{1}\beta_{j}^{-1} + p_{j}\beta_{j}^{-1}}{p_{1}\beta_{1}^{-1} + p_{j}\beta_{j}^{-1}} \\
        = & (p_{1} + p_{j})\log\frac{p_{1}+p_{j}}{p_{1}\beta_{1}^{-1} + p_{j}\beta_{j}^{-1}} - p_{1}\log\beta_{1} - p_{j}\log\beta_{j}.
    \end{aligned}$$
    Moreover, we see that $\lambda_{1,j}^{*} = p_{j}(\beta_{1}^{-1} - \beta_{j}^{-1})/(p_{1} + p_{j})$ and $\lambda_{j}^{*} = p_{1}(\beta_{j}^{-1} - \beta_{1}^{-1})/(p_{1} + p_{j})$. Recall that the variance of $X_{i}$ is $\beta_{i}^{2}$. Hence, $\tilde{\sigma}_{1,j}^{2} = \beta_{1}^{2}/p_{1} + \beta_{j}^{2}/p_{j}$, $\forall j\in[k]_{-}$. It turns out that
    $$\lambda_{j}^{*}p_{j}\tilde{\sigma}_{1,j} = \frac{\beta_{1}/\beta_{j} - 1}{p_{1}^{-1} + p_{j}^{-1}} \cdot \left(p_{1}^{-1} + p_{j}^{-1}\cdot\beta_{j}^{2}/\beta_{1}^{2}\right)^{\frac{1}{2}}.$$
    Using the first order approximation completes the proof. \halmos
\end{proof}

\begin{proof}{Proof of Lemma \ref{lemma:expand}.}
The proof is standard. Suppose $p_{j}>\varepsilon$, $\forall j\in[k]$. By assumption, $Z_{1}-\mu_{j}$ and $Z_{j}-\mu_{j}$ has finite absolute moments up to $q$-th order, $\forall q>3$. And let $\gamma_{1,\nu} := \frac{d^{\nu}}{d\lambda^{\nu}} \left.\log\mathbb{E}[\exp\{\lambda(Z_{1}-\mu_{j})\}]\right\vert_{\lambda=0} = \Lambda_{1}^{(\nu)}(\lambda_{1}^{*})$ and $\gamma_{j,\nu} := \frac{d^{\nu}}{d\lambda^{\nu}} \left.\log\mathbb{E}[\exp\{\lambda(Z_{j}-\mu_{j})\}]\right\vert_{\lambda=0} = \Lambda_{j}^{(\nu)}(\lambda_{j}^{*})$ denote their $\nu$-th cumulants, respectively. Then the CGF of $\tilde{H}_{j}$ is 
\begin{align*}
    \Lambda_{\tilde{H}_{j}}(\lambda) &= \sum_{\nu=2}^{q-1}{\frac{1}{\nu!}T_{1}\gamma_{1,\nu}\left(\frac{\lambda}{T_{1}\sigma_{1,j}}\right)^{\nu} + \frac{1}{\nu!}T_{j}\gamma_{j,\nu}\left(\frac{\lambda}{T_{j}\sigma_{1,j}}\right)^{\nu}} + \mathcal{O}(\vert\lambda\vert^{q}T^{-(q-1)/2}) \\
    &= \frac{1}{2}\lambda^{2} + \sum_{\nu=3}^{q-1}{\frac{1}{\nu!}\left(\frac{\gamma_{1,\nu}}{p_{1}^{\nu-1}} + \frac{\gamma_{j,\nu}}{p_{j}^{\nu-1}}\right)\frac{T^{-(\nu-2)/2}}{\tilde{\sigma}_{1,j}^{\nu}}\lambda^{\nu}} + \mathcal{O}(\vert\lambda\vert^{q}T^{-(q-2)/2}),
\end{align*}
where $\tilde{\sigma}_{1,j} = \sigma_{1,j}\sqrt{T}$. Specifically, we have $\lambda_{1,2} = \sigma_{1}^{2}$ and $\lambda_{j,2} = \sigma_{j}^{2}$ and it follows that the leading term equals $\frac{1}{2}\lambda^{2}$. The moment generating function $\Psi_{\tilde{H}_{j}}(\lambda):= \exp\{\Lambda_{\tilde{H}_{j}}(\lambda)\}$ of $\tilde{H}_{j}$ is consequently expanded by replacing $y$ with $\Lambda_{\tilde{H}_{j}}(\lambda) - \frac{1}{2}\lambda^{2}$ in $e^{y} = \sum_{i=0}^{q-3}\frac{1}{i!}y^{i} + \mathcal{O}(y^{q-2}e^{\vert y\vert})$ as
$$\Psi_{\tilde{H}_{j}}(\lambda)e^{-\lambda^{2}/2} = 1 + \sum_{\nu=1}^{q-3}\frac{P_{\nu,T}(\lambda)}{T^{\nu/2}} + \mathcal{O}((\vert\lambda\vert^{q} + \vert\lambda\vert^{3(q-2)})T^{-(q-2)/2}),$$
which holds for $\vert\lambda\vert \leq \mathcal{O}(\sqrt{T})$. Therein, $P_{\nu,T}(\lambda)$ is a polynomial of order $3\nu$ in $\lambda$. 
Notice that the polynomials $P_{\nu,T}(\lambda)$ depend on $T$ since the sampling ratios $p_{1} = T_{1}/T$ and $p_{j} = T_{j}/T$ may vary as sample size $T$ grows. It can also be checked that $P_{\nu,T}(\lambda)$ is an odd (even) function whenever $\nu$ is odd (even). 

Let $-\frac{d}{dx}$ denote the negation of the derivative operator of analytic functions. By assumption, $F_{Z_{1}}(x)$ and $F_{Z_{j}}(x)$ are absolute continuous with regard to the Lebesgue measure and have bounded total variations. Then the distribution function of $\tilde{H}_{j}$ can be expanded, using the inverse Fourier transformation of the characteristic function $\Psi_{\tilde{H}_{j}}(i\lambda)$, as follows:
\begin{equation}\label{eq:originpoly}
    F_{\tilde{H}_{j}} = \Phi + \sum_{\nu=1}^{q-3}\frac{P_{\nu,T}(-\frac{d}{dx})}{T^{\nu/2}}\Phi + R_{q,T},
\end{equation}
where $\Phi$ is the density function of a standard Gaussian, and $R_{q,n}(x)$ is a remainder function with a uniform bound $\sup_{x}\vert R_{q,T}(x)\vert \leq MT^{-(q-2)/2}$ for some positive number $M$ that is independent of $q$ and $T$, but may depend on the $q$-th cumulants of $Z_{1}$ and $Z_{j}$ and the sampling ratios. The uniform bound of the error term can be justified using Assumption \ref{ass:btv} and the same argument in \cite{cramer1970error} and is omitted. For the general case where the distribution of $Z_{1}$ or $Z_{j}$ includes a discrete or singular part, see Theorem 26 in \cite{cramer1970error}. In this expansion, $P_{\nu,T}(-\frac{d}{dx})\Phi$ denotes an operator $P_{\nu,T}(-\frac{d}{dx})$ functioning on $\Phi$. Each power $\lambda^{\tau}$ in the polynomial $P_{\nu,T}$ is replaced by $(-1)^{\tau}\frac{d^{\tau}}{dx^{\tau}}$ and the expansion can be simplified into
$$F_{\tilde{H}_{j}}(x) = \Phi(x) + \sum_{\nu=1}^{q-3}\frac{p_{3\nu-1,T}(x)}{T^{\nu/2}}e^{-x^{2}/2} + R_{q,T}(x),$$
where $p_{3\nu-1,T}(x)$ is a polynomial in $x$ of order $(3\nu-1)$. % To exemplify this, the first polynomial where $\nu = 1$ is $p_{2,T}(x) = -\tilde{\gamma}_{3}\Phi^{(3)}(x)e^{x^{2}/2} = \tilde{\gamma}_{3}/\sqrt{2\pi} \cdot (1 - x^{2})$. 
\halmos
\end{proof}

\begin{proof}{Proof of Proposition \ref{prop:exponential}.}
    Denote $q = 2\ell + 4$. It suffices to approximate (\ref{eq:parseval}) with residual at most $\mathcal{O}(T^{-(q-2)/2})$. Proceeding with $K_{q,T} = \Phi + \sum_{\nu=1}^{q-3}\frac{P_{\nu,T}(-\frac{d}{dx})}{T^{\nu/2}}\Phi$, we have $K_{q,T}^{'}(x) = \phi(x) + \sum_{\nu=1}^{q-3}\frac{P_{\nu,T}(-\frac{d}{dx})}{T^{\nu/2}}\phi(x)$ where $\phi(x) = \Phi'(x)$ since $\Phi$ is apparently analytical. It follows from the linearity of the Fourier transform operator $\mathcal{F}$ by $\mathcal{F}\left(-\frac{d}{dx}\right)^{n}\phi(\lambda) = (i\lambda)^{n}\mathcal{F}\phi(\lambda) = (i\lambda)^{n}e^{-\lambda^{2}/2}$ for all $n\in\mathbb{N}$ that
    \begin{equation*}
        \mathcal{F}K_{q,T}'(\lambda) = \left(1 + \sum_{\nu=1}^{q-3}\frac{P_{\nu,T}(i\lambda)}{T^{\nu/2}}\right)e^{-\lambda^{2}/2}.
    \end{equation*}
    For another term, for $q>3$, it follows from $(1 + x)^{-1} = 1 + \sum_{\nu = 1}^{q-4}(-x)^{\nu} + \mathcal{O}(\vert x\vert^{q-3})$ that 
    \begin{equation*}
        \left(1 + \frac{i\lambda}{\lambda_{j}^{*}p_{j}\tilde{\sigma}_{1,j}\sqrt{T}}\right)^{-1} = 1 + \sum_{\nu=1}^{q-4}\left(\frac{1}{i\lambda_{j}^{*}p_{j}\tilde{\sigma}_{1,j}}\right)^{\nu}\frac{\lambda^{\nu}}{T^{\nu/2}} + \mathcal{O}(T^{-(q-3)/2}).
    \end{equation*}
    
    Now, we conclude from (\ref{eq:concentrate}), (\ref{eq:parseval}) and the last two equations from the preceding paragraph that 
    \begin{equation}\label{eq:integralremainder}
        P\left(\bar{X}_{1}(T_{1}) \leq \bar{X}_{j}(T_{j})\right) = \exp\{-TI_{j}(\mu_{j}, \mu_{j})\} \cdot \frac{1}{\sqrt{2\pi}\cdot\lambda_{j}^{*}p_{j}\tilde{\sigma}_{1,j}\sqrt{T}} \cdot \left(\int_{-\infty}^{\infty}{Q_{q,T}(\lambda)d\Phi(\lambda)} + \mathcal{O}_{q}(T^{-(q-2)/2})\right),
    \end{equation}
    where 
    \begin{equation*}
        Q_{q,T}(\lambda) = \sum_{0\leq r + s< k-2}P_{r,T}(i\lambda)\cdot \frac{\lambda^{s}}{\left(i\lambda_{j}^{*}p_{j}\tilde{\sigma}_{1,j}\right)^{s}} \cdot T^{-(r+s)/2}.
    \end{equation*}
    For convenience, $P_{0,T}\equiv 1$. Recall that $P_{r,T}(i\lambda)$ involves only odd (even) power of $\lambda$ whenever $r$ is odd (even). Hence, it follows from $\int_{-\infty}^{\infty}\lambda^{i}d\Phi(\lambda) = 0$ for any odd number $i\geq 1$ that $\int_{-\infty}^{\infty}{Q_{q,T}(\lambda)d\Phi(\lambda)} = \int_{-\infty}^{\infty}{\tilde{Q}_{q,T}(\lambda)d\Phi(\lambda)}$ where
    \begin{equation*}
        \tilde{Q}_{q,T}(\lambda) = \sum_{1\leq i <q/2-1}\left(\sum_{r + s = 2i}P_{r,T}(i\lambda)\cdot \frac{\lambda^{s}}{\left(i\lambda_{j}^{*}p_{j}\tilde{\sigma}_{1,j}\right)^{s}}\right)T^{-i}.
    \end{equation*}
    Consequently, defining $c_{j,l} = \sum_{r + s = 2l}\int_{-\infty}^{\infty}{P_{r,T}(i\lambda)\cdot \frac{\lambda^{s}}{\left(i\lambda_{j}^{*}p_{j}\tilde{\sigma}_{1,j}\right)^{s}}d\Phi(\lambda)}$ completes the proof. %To exemplify this, the first few constants can be explicitly expressed as
    \halmos
\end{proof}

\begin{proof}{Proof of Proposition \ref{prop:exp-trivariate}.}
    The first part is direct from the more general Proposition \ref{prop:anyterm} to which we will therefore limit ourselves to prove for space reason. Now, we show the second part. Consider two cases: (i) $I_{j}(\mu_{j}, \mu_{j}) < I_{i}(\mu_{i}, \mu_{i})$; (ii) $I_{i}(\mu_{i}, \mu_{i}) \leq I_{j}(\mu_{j}, \mu_{j})$. In case (i), notice that $I_{i}(\mu_{i}, \mu_{i}) = I_{i,j}(\mu_{i}, \mu_{i}, m_{j})$. It is obvious that
    \begin{equation*}
        \begin{aligned}
            I_{i}(\mu_{i}, \mu_{i}) = & p_{1}I_{1}(\mu_{i}) + p_{i}I_{i}(\mu_{i}) + p_{j}I_{j}(m_{j}) = \inf_{\bm{x}: x_{1}\leq x_{i}}p_{1}I_{1}(x_{1}) + p_{i}I_{i}(x_{i}) + p_{j}I_{j}(x_{j}) \\
            \leq & \inf_{\bm{x}: x_{1}\leq x_{i}\wedge x_{j}}p_{1}I_{1}(x_{1}) + p_{i}I_{i}(x_{i}) + p_{j}I_{j}(x_{j}) = I_{i,j}(x_{1}^{*}, x_{i}^{*}, x_{j}^{*}).
        \end{aligned}
    \end{equation*}
    This in turn implies that $I_{j}(\mu_{j}, \mu_{j}) < I_{i,j}(x_{1}^{*}, x_{i}^{*}, x_{j}^{*})$. 
    
    In case (ii), we again notice that $I_{i}(\mu_{i}, \mu_{i}) = I_{i,j}(\mu_{i}, \mu_{i}, m_{j}) \leq I_{i,j}(x_{1}^{*}, x_{i}^{*}, x_{j}^{*})$. Actually, we claim that a strict inequality holds. Otherwise, assume that the equality holds and hence both $(\mu_{i}, \mu_{i}, m_{j})$ and $(x_{1}^{*}, x_{i}^{*}, x_{j}^{*})$ minimizes $I_{i,j}(x_{1}, x_{i}, x_{j})$ on $\{(x_{1}, x_{i}, x_{j}): x_{1}\leq x_{i}\}$. Denote $\Delta = (\delta_{1}, \delta_{i}, \delta_{j}) := (x_{1}^{*}, x_{i}^{*}, x_{j}^{*}) - (\mu_{i}, \mu_{i}, m_{j})$. Note that $\mu_{i} > m_{i} > m_{j}$ while $x_{1}^{*} \leq x_{j}^{*}$, hence $\Delta \neq \bm{0}$. Now that $I_{i,j}(x_{1}, x_{i}, x_{j})$ is convex on $\mathbb{R}^{3}$, then for any $t\in[0, 1]$,
    \begin{equation*}
        \begin{aligned}
            I_{i,j}(\mu_{i}, \mu_{i}, m_{j}) \leq & I_{i,j}(\mu_{i} + t\delta_{1}, \mu_{i} + t\delta_{i}, m_{j} + t\delta_{j}) \\
            \leq & I_{i,j}(\mu_{i}, \mu_{i}, m_{j}) + t(I_{i,j}(x_{1}^{*}, x_{i}^{*}, x_{j}^{*}) - I_{i,j}(\mu_{i}, \mu_{i}, m_{j})) \\
            = & I_{i,j}(\mu_{i}, \mu_{i}, m_{j}).
        \end{aligned}
    \end{equation*}
    It follows immediately that the equality holds throughout. Taking derivative with respect to $t$ evaluated at $t = 0$, we see that
    $$p_{1}\delta_{1} I_{1}'(\mu_{i}) + p_{i}\delta_{i} I_{i}'(\mu_{i}) = 0.$$
    Also recall that
    $$p_{1} I_{1}'(\mu_{i}) + p_{i} I_{i}'(\mu_{i}) = 0.$$
    Since $[I_{1}'(\mu_{i})\ I_{i}'(\mu_{i})] = [\lambda_{1,i}^{*}\ \lambda_{i}^{*}] \neq 0$ and $p_{1}, p_{i} > 0$, it follows that $\delta_{1} = \delta_{i}$. Therefore, $x_{1}^{*} = \mu_{i} + \delta_{1} = \mu_{i} + \delta_{i} = x_{i}^{*} \equiv x_{1}^{*}\vee m_{i}$. Moreover, it implies $x_{1}^{*} \geq m_{i} > m_{j}$ and thus $x_{j}^{*} \equiv x_{1}^{*}\vee m_{j} = x_{1}^{*} = x_{i}^{*}$. Now, we have $I_{j}(x_{j}^{*}) > 0$. Therefore,
    \begin{equation*}
        \begin{aligned}
            I_{i,j}(x_{1}^{*}, x_{i}^{*}, x_{j}^{*}) = & I_{i,j}(x_{1}^{*}, x_{1}^{*}, x_{1}^{*}) \\
            = & p_{1}I_{1}(x_{1}^{*}) + p_{i}I_{i}(x_{1}^{*}) + p_{j}I_{j}(x_{1}^{*}) \\
            > & p_{1}I_{1}(x_{1}^{*}) + p_{i}I_{i}(x_{1}^{*}) \\
            \geq & p_{1}I_{1}(\mu_{i}) + p_{i}I_{i}(\mu_{i}) = I_{i,j}(\mu_{i}, \mu_{i}, m_{j}),
        \end{aligned}
    \end{equation*}
    a contradiction to $I_{i,j}(x_{1}^{*}, x_{i}^{*}, x_{j}^{*}) = I_{i,j}(\mu_{i}, \mu_{i}, m_{j})$, which completes the proof. \halmos
\end{proof}

\begin{proof}{Proof of Lemma \ref{lem:critical}.}
    Fix any $i\in S$. It follows from \cite{glynn2004large} that $I_{i}(\cdot)$ is a strict convex function on $[m_{k}, m_{1}]$ and $I_{i}(x_{i})\geq I_{i}(m_{i}) = 0$, $\forall x_{i}\in[m_{k}, m_{1}]$. Actually, the inequality attains an equality if and only if $x_{i}=m_{i}$. Otherwise, let $x_{0} \neq m_{i}$ be another zero of $I_{i}(\cdot)$. Then, for $0<\lambda<1$, it follows the convexity that $0\leq I_{i}(\lambda x_{0} + (1-\lambda)m_{i})\leq \lambda f(x_{0}) + (1-\lambda) f(m_{i}) = 0$. Therefore, $I_{i}(x_{i}) = 0$ on a non-degenerate interval, which contradicts the strict convexity. As a result, it follows from the convexity and the fact that $m_{i}$ is a minimum of $I_{i}(\cdot)$ that $I_{i}(\cdot)$ must be strictly monotonically decreasing in $[m_{k}, m_{i}]$ and strictly monotonically increasing in $[m_{i}, m_{1}]$.

    Furthermore, $I_{S}(\bm{x}_{S}) = \sum_{i\in S}p_{i}I_{i}(x_{i})$ is a strictly convex function of $\bm{x}_{S}$ by definition. Note that the feasible region $\{\bm{x}_{S} \in \mathbb{R}^{\vert S \vert}: x_{1} \leq x_{j},~\forall j \in S\} = \bigcap_{j\in S}\{\bm{x}_{S} \in \mathbb{R}^{\vert S \vert}: x_{1} \leq x_{j}\}$ is a convex set. As a result, there exists a unique minimum $\bm{x}_{S}^{*}$ over $\{\bm{x}_{S} \in \mathbb{R}^{\vert S \vert}: x_{1} \leq x_{j},~\forall j \in S\}$. Now it remains to show the last statement.

    Fix any $j\in S$ with $j\neq 1$. By definition, $x_{j}^{*} \geq x_{1}^{*}$. For another, consider $\tilde{\bm{x}}_{S}:=(\tilde{x}_{j})_{j\in S}\in\mathbb{R}^{\vert S\vert}$ with $\tilde{x}_{i}=x_{i}^{*}$ for $i\neq j, i\in S$ and $\tilde{x}_{j} = m_{j}$. If $x_{j}^{*} < m_{j}$, then it follows from the monotonicity of $I_{j}(\cdot)$ on $[m_{k}, m_{j}]$ that $I_{S}(\tilde{\bm{x}}_{S}) = \sum_{i\in S\backslash\{1\}}{p_{i}I_{i}(\tilde{x}_{i})} + I_{j}(m_{j}) < \sum_{i\in S\backslash\{1\}}{p_{i}I_{i}(x^{*}_{i})} + I_{j}(x_{j}^{*}) = I_{S}(\bm{x}_{S}^{*})$. However, since $\tilde{x}_{i}=x_{i}^{*}\geq x_{1}^{*} =\tilde{x}_{1}$ for $i\neq j, i\in S$ and $\tilde{x}_{j} = m_{j} > x_{j}^{*} \geq x_{1}^{*} =\tilde{x}_{1}$, we see that $\tilde{\bm{x}}_{S}$ is feasible and thus $I_{S}(\bm{x}_{S}^{*}) \leq I_{S}(\tilde{\bm{x}}_{S})$ due to the optimality of $\bm{x}_{S}^{*}$. This leads to a contradiction. Therefore, we have $x_{j}^{*}\geq m_{j}$. To sum up, we have shown that $x_{j}^{*}\geq x_{1}^{*} \vee m_{j}$.

    The next goal is to establish the reverse inequality, $x_{j}^{*}\leq x_{1}^{*} \vee m_{j}$, which completes the argument. In fact, since $x_{1}^{*} \vee m_{j} \geq m_{j}$ and $I_{j}(\cdot)$ is strictly increasing on $[m_{j}, m_{1}]$, one can always decrease $x_{j}^{*}$, if greater than $x_{1}^{*}\vee m_{j}$, to this value and keep other coordinates, such that the new solution turns out a feasible solution with a smaller objective. This contradicts the fact the $\bm{x}_{S}^{*}$ is a minimum and we conclude that $x_{j}^{*}\leq x_{1}^{*} \vee m_{j}$. \halmos
\end{proof}

\begin{proof}{Proof of Proposition \ref{prop:anyterm}.}
    Suppose $n\geq 2$ and $2\leq i_{1} < i_{2} < \dots < i_{n} \leq k$ and let $S = \{1,i_{1},i_{2},\dots,i_{n}\}$. Moreover, let $\Omega_{S}(\bm{T}_{S})$ denote the set of observations that lead to SIBC, i.e.,
    $$\Omega_{S}(\bm{T}_{S}):= \left\{ (x_{i}^{(t)})_{\substack{i\in S\\ 1\leq t\leq T_{i}}} \in \mathbb{R}^{\sum_{i\in S}{T_{i}}}:  \bm{x}:\frac{x_{1}^{(1)}+\cdots+x_{1}^{(T_{1})}}{T_{1}} \leq \frac{x_{i_{1}}^{(1)}+\cdots+x_{i_{1}}^{(T_{i_{1}})}}{T_{i_{1}}} \wedge \cdots \wedge \frac{x_{i_{n}}^{(1)}+\cdots+x_{i_{n}}^{(T_{i_{n}})}}{T_{i_{n}}} \right\},$$
    where $\bm{T}_{S}$ stands for the vector $(T_{i})_{i\in S}$.

    Define $\bar{\lambda}_{i}^{*}:=\argmax_{\lambda}{(\lambda x_{i}^{*} - \Lambda_{i}(\lambda))}$ for $i\in S$. Using the exponential tilting similar to (\ref{eq:rescale}), we can rewrite the focal probability as
    \begin{equation}\label{eq:multintegral}\begin{aligned}
        & P\left(\bar{X}_{1}(T_{1}) \leq \bar{X}_{i_{1}}(T_{i_{1}})\wedge \bar{X}_{i_{2}}(T_{i_{2}})\wedge \dots\wedge \bar{X}_{i_{n}}(T_{i_{n}})\right) = \exp\{-TI_{S}(\bm{x}_{S}^{*})\} \cdot \\
        & \qquad \int_{x_{1}\leq x_{i_{1}}\wedge \cdots \wedge x_{i_{n}}}{dF_{\bar{H}_{1}(T_{1})}(x)dF_{\bar{H}_{i_{1}}(T_{i_{1}})}(x_{i_{1}})\cdots dF_{\bar{H}_{i_{n}}(T_{i_{n}})}(x_{i_{n}})\cdot \exp\left\{-\sum_{i\in S}{\bar{\lambda}_{i}^{*}T_{i}x_{i}}\right\}},
    \end{aligned}\end{equation}
    where $\bar{H}_{\ell}(T_{\ell}) = \frac{1}{T_{\ell}}\sum_{\tau = 1}^{T_{\ell}}{(\bar{Z}_{\ell}^{(\tau)} - x_{\ell}^{*})}$ for $\ell\in S$, and $\{\bar{Z}_{\ell}^{(\tau)}\}_{\tau = 1}^{T_{\ell}}$ is an independent and identically distributed sample from the population $\bar{Z}_{\ell}$ with CDF given by $F_{\bar{Z}_{\ell}}(z) = \int_{-\infty}^{z}{e^{I_{\ell}(x_{\ell}^{*})}e^{\bar{\lambda}_{\ell}^{*}(x - x_{\ell}^{*})}dF_{X_{\ell}}(x)}$ for $\ell\in S$. %

    According to Lemma \ref{lem:critical}, $I_{S}(\bm{x}_{S}^{*}) = J_{S}(x_{1}^{*}) = \inf_{x_{1}}J_{S}(x_{1})$. For one thing, we see that $\exp\{-TI_{S}(\bm{x}_{S}^{*})\} = \exp\{-TJ_{S}(x_{1}^{*})\}$ in (\ref{eq:multintegral}) and it remains to treat the integral. For another, since $x_{1}^{*}$ achieves the infimum of $J_{S}(\cdot)$, it follows that $$\begin{aligned}
        J'(x_{1}^{*}) = & p_{1}I_{1}'(x_{1}^{*}) + \sum_{j\in S\backslash\{1\}}{p_{j}I_{j}'(x_{1}^{*}\vee m_{j})\bm{1}_{[m_{j},\infty)}(x_{1}^{*})} \\
        = & p_{1}\bar{\lambda}_{1}^{*} + \sum_{j\in S\backslash\{1\}}{p_{j}\bar{\lambda}_{j}^{*}\bm{1}_{[m_{j},\infty)}(x_{1}^{*})} \\
        = & p_{1}\bar{\lambda}_{1}^{*} + \sum_{j\in S\backslash\{1\}}{p_{j}\bar{\lambda}_{j}^{*}} \\
        = & 0.
    \end{aligned}$$
    The first equality is immediate from the definition of $J_{S}(\cdot)$. To see the second equality, recall that $I_{j}'(x_{1}^{*}\vee m_{j}) = \frac{\partial}{\partial x} \sup_{\lambda}(\lambda x - \Lambda_{i}(\lambda)) \big\vert_{x=x_{1}^{*}\vee m_{j}} = \bar{\lambda}_{j}^{*}$. Finally, if $x_{1}^{*} \geq m_{j}$, then $\bm{1}_{[m_{j},\infty)}(x_{1}^{*}) = 1$ and $\bar{\lambda}_{j}^{*}\bm{1}_{[m_{j},\infty)}(x_{1}^{*}) = \bar{\lambda}_{j}^{*}$. Otherwise, if $x_{1}^{*} < m_{j}$, then $\bar{\lambda}_{j}^{*} = I_{j}'(x_{1}^{*}\vee m_{j}) = I_{j}'(m_{j}) = 0$ and $\bar{\lambda}_{j}^{*}\bm{1}_{[m_{j},\infty)}(x_{1}^{*}) = 0 = \bar{\lambda}_{j}^{*}$. In either case, the third equality holds. Consequently,
    $$\exp\left\{-\sum_{i\in S}{\bar{\lambda}_{i}^{*}T_{i}x_{i}}\right\} = \exp\left\{-\sum_{j\in S\backslash\{1\}}{\bar{\lambda}_{i}^{*}T_{i}(x_{i} - x_{1})}\right\}.$$
    To be succinct, let $w = \left(\bar{\lambda}_{j}^{*}p_{j}\right)_{j\in S\backslash\{1\}}$ be a coefficient vector and $\tilde{\bar{H}}_{S} := \left(\sqrt{T} (\bar{H}_{j}(T_{j}) - \bar{H}_{1}(T_{1}))\right)_{j\in S\backslash\{1\}}$. Then $\tilde{\bar{H}}_{S}$ has variance matrix
    $$\Sigma = \frac{\sigma_{1}^{2}}{p_{1}} \bm{1}_{n}\bm{1}_{n}^{T} + \Lambda,$$
    where $\bm{1}_{n}$ is an $n$-dimensional vector consisting of all ones and $\Lambda:=\operatorname{diag}(\sigma_{i_{1}}^{2}/p_{i_{1}}, \dots, \sigma_{i_{n}}^{2}/p_{i_{n}})$ is a diagonal matrix.
    Concluding from (\ref{eq:multintegral}) and the discussion above, we see that
    \begin{equation}\label{eq:multiexpectation}\begin{aligned}
        & P\left(\bar{X}_{1}(T_{1}) \leq \bar{X}_{i_{1}}(T_{i_{1}})\wedge \bar{X}_{i_{2}}(T_{i_{2}})\wedge \dots\wedge \bar{X}_{i_{n}}(T_{i_{n}})\right) \\
        & \qquad = \exp\{-TJ_{S}(x_{1}^{*})\} \cdot \mathbb{E}\left[\bm{1}\{0\leq \tilde{\bar{H}}_{S}\}\cdot \exp\left\{-\sqrt{T} \cdot w^{T}\tilde{\bar{H}}_{S}\right\}\right].
    \end{aligned}\end{equation}
    To proceed, we need the following Lemma \ref{lem:multivarexpand}, which is a multivariate version of Lemma \ref{lemma:expand}. The proof is similar with cumulants replaced of joint cumulants and thus is omitted. For the validity of the uniform bound, see Theorem 2(b) and Remark 1.1 in \cite{bhattacharya1978validity}.

    \begin{lemma}\label{lem:multivarexpand}
        Under the conditions of Proposition \ref{prop:anyterm}, with $\tilde{\bar{H}}_{S}$ defined as in the proof thereof, for any integer $q\geq3$, if the sampling ratios are bounded away from zero, the density function $F_{\tilde{\bar{H}}_{S}}(\bm{x})$ has the following expansion
        $$F_{\tilde{\bar{H}}_{S}}(\bm{x}) = \Phi(\Sigma^{-1/2}\bm{x}) + \sum_{\nu=1}^{q-3}{\frac{p_{3\nu-1,T}(\bm{x})}{T^{\nu/2}}}e^{-\bm{x}^{T}\Sigma^{-1}\bm{x}/2} + R_{q,T}(\bm{x}),$$
        where, with a bit abuse of notations, $\Phi$ denotes the distribution function of standard multivariate normal variable, $p_{3\nu-1}$ is a multivariate polynomial of order $(3\nu-1)$ in $\bm{x}$ and $\Vert R_{q,T}\Vert_{\infty}=\mathcal{O}(T^{-(q-2)/2})$. 
    \end{lemma}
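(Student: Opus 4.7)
{Proof plan for Lemma \ref{lem:multivarexpand}.}
The plan is to mirror the one-dimensional derivation of Lemma \ref{lemma:expand} step by step, upgrading each ingredient (CGF, inverse Fourier inversion, smoothing bound) to its multivariate counterpart. First, I would write out the joint CGF $\Lambda_{\tilde{\bar{H}}_{S}}(\bm{\lambda})$ of the $(|S|-1)$-dimensional vector $\tilde{\bar{H}}_{S}$ explicitly in terms of the CGFs $\Lambda_{\bar{Z}_{\ell}}$ of the tilted variables. Because, conditionally on $\bm{T}_{S}$, the samples used to form $\bar{H}_{1}(T_{1})$ and the $\bar{H}_{j}(T_{j})$'s are mutually independent, the joint CGF factorizes as a sum, and a multivariate Taylor expansion of each factor around the origin to order $q-1$ yields a leading quadratic form $-\frac{1}{2}\bm{\lambda}^{T}\Sigma\bm{\lambda}$ together with higher joint cumulants of order $\nu\geq 3$ each carrying a factor $T^{-(\nu-2)/2}$, plus a controlled remainder uniform on bounded $\bm{\lambda}$.

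Second, I would form the characteristic function $\Psi_{\tilde{\bar{H}}_{S}}(\bm{\lambda}) = \exp(\Lambda_{\tilde{\bar{H}}_{S}}(i\bm{\lambda}))$ and apply the same analytic trick as in the univariate case: substitute $y = \Lambda_{\tilde{\bar{H}}_{S}}(i\bm{\lambda}) + \tfrac{1}{2}\bm{\lambda}^{T}\Sigma\bm{\lambda}$ into $e^{y}=\sum_{i=0}^{q-3}y^{i}/i! + \mathcal{O}(|y|^{q-2}e^{|y|})$. After collecting powers of $T^{-1/2}$, this yields
\begin{equation*}
    \Psi_{\tilde{\bar{H}}_{S}}(\bm{\lambda})e^{\bm{\lambda}^{T}\Sigma\bm{\lambda}/2} = 1 + \sum_{\nu=1}^{q-3}\frac{P_{\nu,T}(i\bm{\lambda})}{T^{\nu/2}} + \mathcal{O}\bigl((\|\bm{\lambda}\|^{q} + \|\bm{\lambda}\|^{3(q-2)})T^{-(q-2)/2}\bigr),
\end{equation*}
where each $P_{\nu,T}$ is now a multivariate polynomial of total degree $3\nu$ whose coefficients are combinations of joint cumulants of order $\leq \nu+2$. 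Parity under $\bm{\lambda}\mapsto -\bm{\lambda}$ is inherited from the univariate case, which will later justify discarding odd-order contributions when integrating.

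Third, I would perform the formal multivariate inverse Fourier inversion. Since derivatives $(-\partial/\partial x)^{\alpha}$ of the Gaussian density $\phi_{\Sigma}(x):=(2\pi)^{-n/2}|\Sigma|^{-1/2}e^{-x^{T}\Sigma^{-1}x/2}$ pull down factors $(i\bm{\lambda})^{\alpha}$ on the Fourier side, each polynomial $P_{\nu,T}(i\bm{\lambda})$ transforms back into a polynomial of degree $3\nu$ in $x$ (multivariate Hermite polynomials in $\Sigma^{-1/2}x$) multiplied by $\phi_{\Sigma}(x)$. Integrating against $\bm{1}\{y\leq x\}$ reduces the degree by one, producing the stated polynomials $p_{3\nu-1,T}(x)$ of degree $3\nu-1$ and the Gaussian factor $e^{-x^{T}\Sigma^{-1}x/2}$, with $\Phi(\Sigma^{-1/2}x)$ as the leading term.

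The main obstacle is controlling the remainder $R_{q,T}(\bm{x})$ in the $L^{\infty}$ norm, because the formal Fourier inversion only gives a pointwise bound at fixed $\bm{x}$ and breaks down for $\|\bm{\lambda}\|$ large. Here I would invoke Theorem 2(b) and Remark 1.1 of \cite{bhattacharya1978validity}, which provide exactly the uniform Edgeworth bound $\|R_{q,T}\|_{\infty}=\mathcal{O}(T^{-(q-2)/2})$ under a smoothness hypothesis on the distribution of the summands. The delicate point is that the coordinates of $\tilde{\bar{H}}_{S}$ all share the common term $\bar{H}_{1}(T_{1})$ and are therefore not independent; however, after the linear change of variables that separates the $\bar{H}_{1}$ block from the $\bar{H}_{j}$ blocks ($j\in S\setminus\{1\}$), the required joint smoothness condition follows from Assumption \ref{ass:btv} applied separately to each $X_{i}$, $i\in S$, via the tilting relation $dF_{\bar{Z}_{\ell}}/dF_{X_{\ell}}(x)=e^{I_{\ell}(x_{\ell}^{*})+\bar{\lambda}_{\ell}^{*}(x-x_{\ell}^{*})}$ and the fact that exponential tilting preserves bounded total variation of the density. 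Putting these pieces together yields the claimed expansion. \halmos
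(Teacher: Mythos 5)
Your proposal follows essentially the same route the paper takes: the paper omits the proof of this lemma, stating only that it parallels the univariate Lemma \ref{lemma:expand} with cumulants replaced by joint cumulants and citing Theorem 2(b) and Remark 1.1 of \cite{bhattacharya1978validity} for the uniform bound on $R_{q,T}$, which is precisely your strategy. Your additional remarks on the shared $\bar{H}_{1}(T_{1})$ component and on exponential tilting preserving bounded total variation fill in details the paper leaves implicit, and they are correct.
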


    Now we return to the proof of Proposition \ref{prop:anyterm}. It remains to cope with the expectation in (\ref{eq:multiexpectation}). %Since $\bm{1}\{0\leq \bm{x}\}\cdot\exp\left\{-\sqrt{T}\cdot w^{T}\bm{x}\right\}$ is not square integrable, the Parseval's identity does not directly apply. 
    Let $s$ be an integer satisfying $m_{i_{1}} \geq m_{i_{2}} \geq \dots \geq m_{i_{s}} \geq x_{1}^{*} \geq m_{i_{s+1}} \geq \dots \geq m_{i_{n}}$. Recall that, for $i\in\{i_{1},i_{2},\dots,i_{s}\}$, $\bar{\lambda}_{i}^{*} = 0$, and for $i\in\{i_{s+1}, \dots, i_{n}\}$, $\bar{\lambda}_{i}^{*} = I_{i}'(x_{1}^{*}\vee m_{i}) = I_{i}'(x_{1}^{*}) > 0$. Let $\bm{1}_{s}$ and $\bm{1}_{n-s}$ be two vectors with all ones, and $w = (w_{U}^{T}, w_{V}^{T})^{T}$ according to the partition $\{i_{1},i_{2},\dots,i_{s}\}\bigcup\{i_{s+1}, \dots, i_{n}\}$, i.e., $w_{U} = (\bar{\lambda}^{*}_{i_{1}}p_{i_{1}}, \bar{\lambda}^{*}_{i_{2}}p_{i_{2}}, \dots, \bar{\lambda}^{*}_{i_{s}}p_{i_{s}})^{T} = 0$ and $w_{V} = (\bar{\lambda}^{*}_{i_{s+1}}p_{i_{s+1}}, \dots, \bar{\lambda}^{*}_{i_{n}}p_{i_{n}})^{T}$. Similarly, let $\Lambda_{U}$ and $\Lambda_{U}$ be two principal submatrices of $\Lambda$ according to the partition. Then the variance matrix can be decomposed as
    $$\Sigma = \begin{pmatrix}
        \Sigma_{UU} & \Sigma_{UV} \\
        \Sigma_{VU} & \Sigma_{VV}
    \end{pmatrix},$$
    such that $\Sigma_{UU} = \frac{\sigma_{1}^{2}}{p_{1}}\bm{1}_{s}\bm{1}_{s}^{T} + \Lambda_{U}$, $\Sigma_{UV} = \Sigma_{VU}^{T} = \frac{\sigma_{1}^{2}}{p_{1}}\bm{1}_{s}\bm{1}_{n-1}^{T}$, and $\Sigma_{VV} = \frac{\sigma_{1}^{2}}{p_{1}}\bm{1}_{n-s}\bm{1}_{n-1}^{T} + \Lambda_{V}$.  Let $(U^{T}; V^{T})^{T}$ be a centered multivariate normal variable with variance matrix $\Sigma$, such that $\operatorname{Var}(U) = \Sigma_{UU}$, $\operatorname{Cov}(U, V) = \Sigma_{UV} = \Sigma_{VU}^{T}$, and $\operatorname{Var}(V) = \Sigma_{VV}$. 
    We repeat the integration by parts in (\ref{eq:parseval}) as follows and assume that $s = n-s = 1$ for illustration
    \begin{align*}\allowdisplaybreaks
         & \mathbb{E}\left[\bm{1}\{0\leq \tilde{\bar{H}}_{S}\}\cdot \exp\left\{ -\sqrt{T}\cdot w^{T}\tilde{\bar{H}}_{S} \right\}\right] \\
        = & \int_{[0,\infty)^{2}}{\exp\left\{-\sqrt{T}w^{T}(x_{1}, x_{2})^{T}\right\} dF_{\tilde{\bar{H}}_{S}}(x_{1}, x_{2})} \\
        = & \int_{[0,\infty)}{\exp\left\{-\sqrt{T}w_{2}x_{2}\right\} d\left(F_{\tilde{\bar{H}}_{S}}(\infty, x_{2}) - F_{\tilde{\bar{H}}_{S}}(0, x_{2})\right)} \\
        = & \int_{[0,\infty)}{\exp\left\{-\sqrt{T}w_{2}x_{2}\right\} d\left(F_{\tilde{\bar{H}}_{S}}(\infty, x_{2}) - F_{\tilde{\bar{H}}_{S}}(0, x_{2}) - F_{\tilde{\bar{H}}_{S}}(\infty, 0) + F_{\tilde{\bar{H}}_{S}}(0, 0) \right)} \\
        = & \sqrt{T}w_{2}\int_{[0,\infty)}{\exp\left\{-\sqrt{T}w_{2}x_{2}\right\} \left(F_{\tilde{\bar{H}}_{S}}(\infty, x_{2}) - F_{\tilde{\bar{H}}_{S}}(0, x_{2}) - F_{\tilde{\bar{H}}_{S}}(\infty, 0) + F_{\tilde{\bar{H}}_{S}}(0, 0) \right)dx_{2}} \\
        = & \sqrt{T}w_{2}\int_{[0,\infty)}{\exp\left\{-\sqrt{T}w_{2}x_{2}\right\} \left(K_{q,T}(\infty, x_{2}) - K_{q,T}(0, x_{2}) - K_{q,T}(\infty, 0) + K_{q,T}(0, 0) \right)dx_{2}} + \mathcal{O}(T^{-(q-2)/2}) \\
        = & \int_{[0,\infty)}{\exp\left\{-\sqrt{T}w_{2}x_{2}\right\} d\left(K_{q,T}(\infty, x_{2}) - K_{q,T}(0, x_{2}) - K_{q,T}(\infty, 0) + K_{q,T}(0, 0) \right)} + \mathcal{O}(T^{-(q-2)/2}) \\
        = & \int_{[0,\infty)}{\exp\left\{-\sqrt{T}w_{2}x_{2}\right\} d\left(K_{q,T}(\infty, x_{2}) - K_{q,T}(0, x_{2})\right)} + \mathcal{O}(T^{-(q-2)/2}) \\
        = & \int_{[0,\infty)^{2}}{\exp\left\{-\sqrt{T}w^{T}(x_{1}, x_{2})^{T}\right\} dK_{q,T}(x_{1}, x_{2})} + \mathcal{O}(T^{-(q-2)/2}) \\
        = & \int_{[0,\infty)^{n}}{\exp\left\{-\sqrt{T}w^{T}\bm{x}\right\} \frac{\partial^{2}}{\partial x_{i_{1}}\cdots\partial x_{i_{n}}}K_{q,T}(\bm{x})d\bm{x}} + \mathcal{O}(T^{-(q-2)/2}),
    \end{align*}
    where the second equality follows from Fubini's theorem and $w_{1} = \bar{\lambda}_{i_{1}}^{*}p_{i_{1}} = 0$, the fourth follows from the integration by part, the fifth follows from Lemma \ref{lem:multivarexpand} with $K_{q,T}:=F_{\tilde{\bar{H}}_{S}} - R_{q,T}$ redefined for the multivariate case, and the equalities below are valid for the same reason. According to Lemma \ref{lem:multivarexpand}, there exists polynomials $\tilde{p}_{\nu}(\bm{x})$ for $1\leq \nu\leq q-3$ such that
    \begin{align*}
        \frac{\partial^{2}}{\partial x_{i_{1}}\cdots\partial x_{i_{n}}}K_{q,T}(\bm{x}) = (2\pi)^{-\frac{n-1}{2}}\mathrm{det}(\Sigma)^{-\frac{1}{2}}\cdot\exp\{-\bm{x}^{T}\Sigma^{-1}\bm{x}/2\} \cdot \left(1 + \sum_{\nu=1}^{q-3}{\frac{\tilde{p}_{\nu}(\bm{x})}{T^{\nu/2}}}\right),
    \end{align*}
    where the polynomials $\tilde{p}_{\nu}(\bm)$ could be found explicitly by matching terms.
    
    Concluding from the discussion above, we can rewrite
    \begin{align}
         & \mathbb{E}\left[\bm{1}\{0\leq \tilde{\bar{H}}_{S}\}\cdot \exp\left\{ -\sqrt{T}\cdot w^{T}\tilde{\bar{H}}_{S} \right\}\right] \notag \\
        = & \mathbb{E}\left[\bm{1}\{0\leq U, V\}\cdot \exp\{-\sqrt{T}\cdot w^{T}(U^{T}, V^{T})^{T}\} \cdot \left(1 + \sum_{\nu=1}^{q-3}\frac{\tilde{p}_{\nu}(U, V)}{T^{\nu/2}}\right)\right] \notag \\
        = & \mathbb{E}\left[\bm{1}\{0\leq U, V\}\cdot \exp\{-\sqrt{T}\cdot w_{V}^{T}V\}\right] + \sum_{\nu=1}^{q-3}T^{-\nu/2}\mathbb{E}\left[\bm{1}\{0\leq U, V\}\cdot \exp\{-\sqrt{T}\cdot w_{V}^{T}V\} \cdot \tilde{p}_{\nu}(U, V) \right]. \label{eq:summation}
    \end{align}
    The next goal is to approximate each of these expectation. The tower's law of total expectation yields that
    $$\mathbb{E}\left[\bm{1}\{0\leq U, V\}\cdot \exp\left\{ -\sqrt{T}\cdot w_{V}^{T}V \right\}\right] = \mathbb{E}\left[\bm{1}\{0\leq U\}\mathbb{E}\left[\bm{1}\{0\leq V\}\cdot \exp\{-\sqrt{T}\cdot w_{V}^{T}V\}\middle\vert U\right]\right].$$
    The conditional distribution of $V$ given $U$ is $N(\Sigma_{VU}\Sigma_{UU}^{-1}U, \Sigma_{VV}-\Sigma_{VU}\Sigma_{UU}^{-1}\Sigma_{UV})$. Let $\tilde{W}:\mathbb{R}^{n-s}\rightarrow \mathbb{R}$ be given by $\tilde{W}(\bm{x})=e^{-\sqrt{T}\cdot w_{V}^{T}\bm{x}}\cdot \bm{1}_{[0,\infty)^{n-s}}(\bm{x})$, then the Fourier transformation of $\tilde{W}$ is $\mathcal{F}\tilde{W}(\bm{\lambda}) = \prod_{i\in\{i_{s+1},\dots,i_{n}\}}(\sqrt{T}w_{i} - i\lambda_{i})^{-1}$ for $\bm{\lambda} := (\lambda_{i})_{i\in \{i_{s+1},\dots,i_{n}\}} \in \mathbb{R}^{n-1}$. It follows from the Parseval's identity that 
    \begin{align*}\allowdisplaybreaks
         & \mathbb{E}\left[\bm{1}\{0\leq V\}\cdot \exp\{-\sqrt{T}\cdot w_{V}^{T}V\}\middle\vert U\right] \\
        = & \dfrac{1}{ (2\pi)^{-(n-s)}\cdot \prod_{i}{\bar{\lambda}_{i}^{*}p_{i} \cdot T^{(n-s)/2}} } \times \\
         & \qquad \int_{\mathbb{R}^{n-s}}d\bm{\lambda} \cdot \prod_{i}\left(1 + \frac{i\lambda_{i}}{\sqrt{T}\cdot \bar{\lambda}_{i}^{*}p_{i}}\right)^{-1} \cdot \exp\left\{i\bm{\lambda}^{T}\Sigma_{VU}\Sigma_{UU}^{-1}U - \frac{1}{2}\bm{\lambda}^{T}(\Sigma_{VV} - \Sigma_{VU}\Sigma_{UU}^{-1}\Sigma_{UV})\bm{\lambda}\right\}.
    \end{align*}
    It follows from the Lagrange's mean value theorem that, for some $\theta\in(0, 1)$ dependent on $\bm{\lambda}$, we have 
    $$\prod_{i}\left(1 + \frac{i\lambda_{i}}{\sqrt{T}\cdot \bar{\lambda}_{i}^{*}p_{i}}\right)^{-1} = 1 + \theta \left(\sum_{i}\frac{i\lambda_{i}}{\bar{\lambda}_{i}^{*}p_{i}}\right) T^{-\frac{1}{2}}.$$
    Consequently, we see that
    \begin{align*}\allowdisplaybreaks
         & \mathbb{E}\left[\bm{1}\{0\leq V\}\cdot \exp\{-\sqrt{T}\cdot w_{V}^{T}V\}\middle\vert U\right] \\
        = & \dfrac{1}{ (2\pi)^{(n-s)}\cdot \prod_{i}{\bar{\lambda}_{i}^{*}p_{i}} \cdot T^{(n-s)/2} } \times \\
         & \qquad \int_{\mathbb{R}^{n-s}}d\bm{\lambda} \cdot \exp\left\{i\bm{\lambda}^{T}\Sigma_{VU}\Sigma_{UU}^{-1}U - \frac{1}{2}\bm{\lambda}^{T}(\Sigma_{VV} - \Sigma_{VU}\Sigma_{UU}^{-1}\Sigma_{UV})\bm{\lambda}\right\} + \\
         & \dfrac{1}{ (2\pi)^{(n-s)}\cdot \prod_{i}{\bar{\lambda}_{i}^{*}p_{i} \cdot T^{(n-s)/2}} } \times  T^{-\frac{1}{2}} \cdot\\
         & \qquad \int_{\mathbb{R}^{n-s}}d\bm{\lambda} \cdot \theta \left(\sum_{i}\frac{i\lambda_{i}}{\bar{\lambda}_{i}^{*}p_{i}}\right) \cdot \exp\left\{i\bm{\lambda}^{T}\Sigma_{VU}\Sigma_{UU}^{-1}U - \frac{1}{2}\bm{\lambda}^{T}(\Sigma_{VV} - \Sigma_{VU}\Sigma_{UU}^{-1}\Sigma_{UV})\bm{\lambda}\right\}.
    \end{align*}
    On one hand, we have 
    \begin{align*}
         & \int_{\mathbb{R}^{n-s}}d\bm{\lambda} \cdot \exp\left\{i\bm{\lambda}^{T}\Sigma_{VU}\Sigma_{UU}^{-1}U - \frac{1}{2}\bm{\lambda}^{T}(\Sigma_{VV} - \Sigma_{VU}\Sigma_{UU}^{-1}\Sigma_{UV})^{-1}\bm{\lambda}\right\} \\
        = & \dfrac{(2\pi)^{(n-s)/2}}{\mathrm{det}(\Sigma_{VV} - \Sigma_{VU}\Sigma_{UU}^{-1}\Sigma_{UV})^{1/2}}\cdot \exp\left\{-\frac{1}{2}U^{T}\Sigma_{UU}^{-1}\Sigma_{UV}(\Sigma_{VV} - \Sigma_{VU}\Sigma_{UU}^{-1}\Sigma_{UV})^{-1}\Sigma_{VU}\Sigma_{UU}^{-1}U\right\}.
    \end{align*}
    On the other hand, the error term has an upper bound uniform in $U$ as follows
    \begin{align*}
         & \left\Vert \int_{\mathbb{R}^{n-s}}d\bm{\lambda} \cdot \theta \left(\sum_{i}\frac{i\lambda_{i}}{\bar{\lambda}_{i}^{*}p_{i}}\right) \cdot \exp\left\{i\bm{\lambda}^{T}\Sigma_{VU}\Sigma_{UU}^{-1}U - \frac{1}{2}\bm{\lambda}^{T}(\Sigma_{VV} - \Sigma_{VU}\Sigma_{UU}^{-1}\Sigma_{UV})\bm{\lambda}\right\} \right\Vert \\
        \leq & \int_{\mathbb{R}^{n-s}}d\bm{\lambda} \cdot \left(\sum_{i}\frac{\lambda_{i}}{\bar{\lambda}_{i}^{*}p_{i}}\right) \cdot \exp\left\{- \frac{1}{2}\bm{\lambda}^{T}(\Sigma_{VV} - \Sigma_{VU}\Sigma_{UU}^{-1}\Sigma_{UV})\bm{\lambda}\right\} < \infty.
    \end{align*}
    Combining the results in the above paragraph, we partially conclude that
    \begin{align*}
         & \mathbb{E}\left[\bm{1}\{0\leq U, V\}\cdot \exp\left\{ -\sqrt{T}\cdot w_{V}^{T}V \right\}\right] \\
        = & \dfrac{1}{\mathrm{det}(\Sigma_{VV} - \Sigma_{VU}\Sigma_{UU}^{-1}\Sigma_{UV})^{1/2} \cdot \prod_{i}{\bar{\lambda}_{i}^{*}p_{i}}} \cdot T^{-(n-s)/2} \cdot \\
         & \qquad \mathbb{E}\left[ \bm{1}\{0\leq U\}\cdot \exp\left\{-\frac{1}{2}U^{T}\Sigma_{UU}^{-1}\Sigma_{UV}(\Sigma_{VV} - \Sigma_{VU}\Sigma_{UU}^{-1}\Sigma_{UV})^{-1}\Sigma_{VU}\Sigma_{UU}^{-1}U\right\} \right]  + \mathcal{O}\left(T^{-\frac{n-s+1}{2}}\right) \\
        = & \dfrac{1}{\mathrm{det}(\Sigma_{VV} - \Sigma_{VU}\Sigma_{UU}^{-1}\Sigma_{UV})^{1/2} \cdot \prod_{i}{\bar{\lambda}_{i}^{*}p_{i}}} \cdot T^{-(n-s)/2} \cdot \\
         & \qquad \mathbb{E}\left[ \bm{1}\{0\leq U\}\cdot \exp\left\{-\frac{1}{2}U^{T}\Sigma_{UU}^{-1}\Sigma_{UV}(\Sigma_{VV} - \Sigma_{VU}\Sigma_{UU}^{-1}\Sigma_{UV})^{-1}\Sigma_{VU}\Sigma_{UU}^{-1}U\right\} \right] \times \left(1 + \mathcal{O}\left(T^{-\frac{1}{2}}\right)\right).
    \end{align*}.

    In order to complete the first statement of Proposition \ref{prop:anyterm}, it suffices to tackle the second term on the last line in (\ref{eq:summation}). We state without proof, since the argument simply repeats the procedure above, that for any non-negative integers $t, t\geq 0$ and any indices $i\in\{i_{1},i_{2},\dots,i_{s}\}$ and $j\in\{i_{s+1},\dots,i_{n}\}$,
    $$\mathbb{E}\left[\bm{1}\{0\leq U, V\}\cdot \exp\left\{ -\sqrt{T}\cdot w_{V}^{T}V \right\} \cdot U_{i}^{r} V_{j}^{t}\right] = T^{-(n-s)/2} \cdot (C + \mathcal{O}(T^{-\frac{1}{2}})),$$
    for some constant $C\geq 0$. Recall that $\tilde{p}_{\nu}(U, V)$ is a multivariate polynomial jointly in $U$ and $V$, the expectation in the second term in (\ref{eq:summation}) takes the same form as above. Putting pieces together, we conclude that
    $$P\left(\bar{X}_{1}(T_{1}) \leq \bar{X}_{i_{1}}(T_{i_{1}})\wedge \bar{X}_{i_{2}}(T_{i_{2}})\wedge \dots\wedge \bar{X}_{i_{n}}(T_{i_{n}})\right) = \exp\{-TJ_{S}(x_{1}^{*})\} \cdot \frac{c_{S}}{\sqrt{T}^{l_{S}}} \cdot \left(1 + \mathcal{O}(T^{-1/2})\right),$$
    where the constant equals to
    $$\begin{aligned}
        c_{S} = & \dfrac{1}{\mathrm{det}(\Sigma_{VV} - \Sigma_{VU}\Sigma_{UU}^{-1}\Sigma_{UV})^{1/2} \cdot \prod_{i}{\bar{\lambda}_{i}^{*}p_{i}}} \cdot \\
        & \qquad \mathbb{E}\left[ \bm{1}\{0\leq U\}\cdot \exp\left\{-\frac{1}{2}U^{T}\Sigma_{UU}^{-1}\Sigma_{UV}(\Sigma_{VV} - \Sigma_{VU}\Sigma_{UU}^{-1}\Sigma_{UV})^{-1}\Sigma_{VU}\Sigma_{UU}^{-1}U\right\} \right],
    \end{aligned}$$
    and the order $l_{S} = n-s$ equals to the number of alternatives with means smaller than the critical point $x_{1}^{*}$. \halmos
\end{proof}%
\begin{proof}{Proof of Lemma \ref{lem:extrapolation}.}
    Suppose $Q(x) = A(x)/B(x)$ and $R(x) = C(x)/D(x)$, where $A(x)$, $B(x)$, $C(x)$, $D(x)$ are polynomials with real coefficients. Without loss of generality, assume that $A(x)$ and $B(x)$ are relatively prime, and $C(x)$ and $D(x)$ are relatively prime. Since $Q(x)$ and $R(x)$ are well-defined on the interval $(a, b)$, the polynomials $B(x)$ and $D(x)$ must have no zeros on it. 
    
    For $x \in (a, b)$, we have
    $$\exp\left\{\frac{A(x)}{B(x)}\right\} = \exp\{Q(x)\} = R(x) = \frac{C(x)}{D(x)}.$$
    By taking logarithms on both sides, we see that 
    $$\frac{A(x)}{B(x)} = \ln\left(\frac{C(x)}{D(x)}\right).$$
    Then it follows from taking derivatives on both sides that
    \begin{equation}\label{eq:rationalpoly}
        \frac{A'(x)B(x) - A(x)B'(x)}{B^{2}(x)} = \frac{C'(x)D(x) - C(x)D'(x)}{C(x)D(x)}.
    \end{equation}
    By rearranging the terms in (\ref{eq:rationalpoly}), we see that if $C(x)\neq 0$, 
    \begin{equation}\label{eq:bezout}
        \left[(A'(x)B(x) - A(x)B'(x))D(x) + B^{2}(x)D'(x)\right]C(x) - B^{2}(x)C'(x)D(x) = 0.
    \end{equation}
    Since the left-hand side is a polynomial in $x$ and the equality holds for $x\in(a, b)$ except for a finite number of zeros of $C(x)$, we see that (\ref{eq:bezout}) holds for $x\in\mathbb{R}$. 

    We note that the identical equality (\ref{eq:bezout}) does not in turn imply the validity of (\ref{eq:rationalpoly}), because $B(x)$, $C(x)$ and $D(x)$ can have zeros outside $(a, b)$. To finish the proof, we show that $P(x)$ and $Q(x)$ are in fact constants.
    
    It follows that $D(x)$ must be a factor of $(A'(x)B(x) - A(x)B'(x))D(x) + B^{2}(x)D'(x)$ and, consequently, a factor of $B^{2}(x)D'(x)$. Suppose $P(x)$ is a prime factor of $D(x)$ and that $D(x) = P^{k}(x) S(x)$ where $k\geq 1$ and $S(x)$ is a polynomial relatively prime to $p(x)$. Then we have 
    $$D'(x) = kP^{k-1}(x)P'(x) S(x) + P^{k}(x) S'(x).$$ 
    Given the only prime factors in real polynomials are linear functions and quadratic functions that do not have real zeros, it is evident that $P(x)$ and $P'(x)$ are relatively prime. Hence it follows from the divisibility of $D(x)$ by $P^{k}(x)$ that $B^{2}(x)\cdot kP^{k-1}(x)P'(x) S(x)$ is divisible by $P^{k}(x)$. As a result, $P(x)$ must be a factor of $B^{2}(x)$, and consequently, a factor of $B(x)$. This in turn implies that $D(x)$ is a factor of $B(x)D'(x)$. Similarly, it follows from (\ref{eq:bezout}) that $C(x)$ must be a factor of $B^{2}(x)C'(x)$. Using the same argument, we see that each prime factor of $C(x)$ must be a factor of $B(x)$ as well, and thus $C(x)$ is a factor of $B(x)C'(x)$.

    Now, we rewrite (\ref{eq:rationalpoly}) as
    $$\frac{A'(x)B(x) - A(x)B'(x)}{B(x)} = B(x)\frac{C'(x)D(x) - C(x)D'(x)}{C(x)D(x)} = \frac{B(x)C'(x)}{C(x)} - \frac{B(x)D'(x)}{D(x)}.$$
    Since the right-hand side is a polynomial, we see that $A'(x)B(x) - A(x)B'(x)$ is divisible by $B(x)$. Therefore, $B(x)$ is a factor of $A(x)B'(x)$. However, using the same argument as above, we see that each prime factor of $B(x)$ must be a factor of $A(x)$, implying that $B(x)$ has no non-degenerate prime factors since it is relatively prime to $A(x)$.

    We can conclude that $B(x)$ is a constant. It follows that $C(x)$ and $D(x)$ are both constants. Therefore, $A(x)$ must also be a constant, which completes the proof. \halmos
\end{proof} %

\begin{proof}{Proof of Proposition \ref{prop:unique}.}
    We claim that if $\ell$ is even, the optimality condition (\ref{eq:conditions}) admits at most one solution. We argue by contradiction. Suppose $\bm{\bar{p}}:=(\bar{p}_{1},\dots,\bar{p}_{k})$ and $\tilde{\bm{p}}:=(\tilde{p}_{1},\dots,\tilde{p}_{k})$ are two distinct solutions to (\ref{eq:conditions}). It follows from the argument above that $\bm{p}$ and $\tilde{\bm{p}}$ are both optimal solutions to (\ref{opt:opt1}). For any $\theta\in\mathbb{R}$, let $\bm{p}_{\theta}:=(p_{\theta,1},\dots,p_{\theta,k})$ denote the convex combination $\theta\bm{\tilde{p}} + (1-\theta)\bm{\bar{p}}$. It is obvious that $\bm{p}_{\theta}$ is a feasible solution to (\ref{opt:opt1}), i.e., the entries of $\bm{p}_{\theta}$ add up to one, as long as $\bm{p}_{\theta}\geq 0$. Moreover, for $\theta\in(0,1)$, 
    $$V_{\ell}(\bm{\bar{p}}) \leq V_{\ell}(\bm{p}_{\theta}) = V_{\ell}(\theta\bm{\tilde{p}} + (1-\theta)\bm{\bar{p}}) \leq \theta V_{\ell}(\bm{\tilde{p}}) + (1-\theta)V_{\ell}(\bm{\bar{p}}) = V_{\ell}(\bm{\bar{p}}),$$
    where the second inequality follows from Lemma \ref{lem:concavity}. Therefore, $V_{\ell}(\bm{p}_{\theta}) = V_{\ell}(\bm{\bar{p}})$, and thus $\bm{p}_{\theta}$ is an optimal solution to (\ref{opt:opt1}) and satisfies the necessary optimality conditions. Specifically, for $\theta\in(0, 1)$,
    \begin{equation}\label{eq:condition2fort}
        \frac{p_{\theta,1}^{2}}{\sigma_{1}^{2}} = \sum_{j\in[k]_{-}}\frac{p_{\theta,j}^{2}}{\sigma_{j}^{2}}.
    \end{equation}
    By definition, $p_{\theta,i} = \theta\tilde{p}_{i}+(1-\theta)\bar{p}_{i}$ for $i\in[k]$. It follows from the second equality in (\ref{eq:conditions}) and the above equation that
    $$\frac{\bar{p}_{1}\tilde{p}_{1}}{\sigma_{1}^{2}} = \sum_{j\in[k]_{-}}\frac{\bar{p}_{j}\tilde{p}_{j}}{\sigma_{j}^{2}}.$$
    It turns out that, for $\theta\in\mathbb{R}$, equality (\ref{eq:condition2fort}) still holds.

    On the other hand, for $\theta\in(0, 1)$ and $i\neq j\in[k]_{-}$, we have
    \begin{equation}\label{eq:condition1fort}
        U_{\ell}'(R_{i}(p_{\theta,1}, p_{\theta,i}))\cdot \frac{\partial}{\partial p_{i}}R_{i}(p_{\theta,1}, p_{\theta,i}) = U_{\ell}'(R_{j}(p_{\theta,1}, p_{\theta,j}))\cdot \frac{\partial}{\partial p_{j}}R_{j}(p_{\theta,1}, p_{\theta,j}).
    \end{equation}
    With simple calculations, we see that 
    $$U_{\ell}'(x) = \exp\left\{-\frac{1}{2}Tx - \frac{1}{2}\ln{x}\right\} \cdot \left( -\frac{1}{2}T + \frac{1}{2}{\frac{(-1)^{\ell+1}(2\ell+1)!!}{x^{\ell+1}}}\frac{1}{T^{\ell}} \right).$$
    Therefore, equality (\ref{eq:condition1fort}) is equivalent to
    \begin{equation}\label{eq:rearrangement}
        \begin{aligned}
            & \exp\left\{-T(R_{i}(p_{\theta,1}, p_{\theta,i}) - R_{j}(p_{\theta,1}, p_{\theta,j}))\right\} \\
            = & \frac{R_{i}(p_{\theta,1}, p_{\theta,i})}{R_{j}(p_{\theta,1}, p_{\theta,j})} \cdot \frac{\left( -\frac{1}{2}T + \frac{1}{2}{\frac{(-1)^{\ell+1}(2\ell+1)!!}{R_{j}^{\ell+1}(p_{\theta,1}, p_{\theta,j})}}\frac{1}{T^{\ell}} \right)^{2}}{\left( -\frac{1}{2}T + \frac{1}{2}{\frac{(-1)^{\ell+1}(2\ell+1)!!}{R_{i}^{\ell+1}(p_{\theta,1}, p_{\theta,i})}}\frac{1}{T^{\ell}} \right)^{2}} \cdot \frac{\left(\frac{\partial}{\partial p_{j}}R_{j}(p_{\theta,1}, p_{\theta,j})\right)^{2}}{\left(\frac{\partial}{\partial p_{i}}R_{i}(p_{\theta,1}, p_{\theta,i})\right)^{2}}.
        \end{aligned}
    \end{equation}
    Because the exponent on the left-hand side and the term on the right-hand side are both rational polynomials in $\theta$, it follows from Lemma \ref{lem:extrapolation} that the equality above holds for $\theta\in\mathbb{R}$. This in turn implies that (\ref{eq:condition1fort}) holds for $\theta\in\mathbb{R}$. To partially conclude, $\bm{p}_{\theta}$ satisfies the optimality condition (\ref{eq:conditions}). Moreover, if $\bm{p}_{\theta}\geq 0$, then it is an optimal solution to (\ref{opt:opt1}) and thus $V_{\ell}(\bm{p}_{\theta}) = V_{\ell}(\bm{\bar{p}})$. 
    
    However, this is impossible. Define $\theta_{0}:= \max\{\bar{p}_{i}/(\bar{p}_{i} - \tilde{p}_{i}) : \bar{p}_{i} - \tilde{p}_{i}>0, i\in [k]\}$ as the greatest number such that $\bm{p}_{\theta}\geq 0, \forall 0\leq \theta\leq \theta_{0}$. It follows from the fact $\bm{\tilde{p}} \neq \bm{\bar{p}}$ and the constraint $\sum_{i\in[k]}{\bar{p}_{i}} = \sum_{i\in[k]}{\tilde{p}_{i}} = 1$ that there exists $i\in[k]$ such that $\tilde{p}_{i}<\bar{p}_{i}$. Therefore, the set in the definition of $\theta_{0}$ is non-empty and $\theta_{0}$ is well-defined. In addition, there exists $i\in[k]$ such that $p_{\theta_{0},i} = 0$ by definition. Recall that $R_{j}(p_{1}, p_{j}) = (m_{1} - m_{j})^{2}/(\sigma_{1}^{2}/p_{1} + \sigma_{j}^{2}/p_{j}) \rightarrow 0$ as $p_{1}\rightarrow 0$ or as $p_{j}\rightarrow 0$. It follows that $\lim_{\theta \uparrow \theta_{0}}R_{j}(p_{\theta,1}, p_{\theta,j}) = 0$. Moreover, it turns out that $\lim_{\theta \uparrow \theta_{0}}V_{\ell}(\bm{p}_{\theta}) = \infty$ since $\lim_{x\downarrow 0}U_{\ell}(x) = \infty$. This contradicts the equality $V_{\ell}(\bm{p}_{\theta}) = V_{\ell}(\bm{\bar{p}})$ for $1\leq \theta\leq \theta_{0}$ from the above argument. Consequently, the optimality condition (\ref{eq:conditions}) admits at most one solution.
    
    Since the optimization problem (\ref{opt:opt1}) admits at least one feasible solution and the feasible region is bounded, it admits at least one optimal solution satisfying the necessary condition above. Therefore, the unique solution to the necessary conditions must be the unique optimal solution to (\ref{opt:opt1}). \halmos
\end{proof}

To prove Theorem \ref{prop:asympallocation}, we need the following lemma which states the boundedness of $\bm{p}$.
\begin{lemma}\label{lem:theoreticalallocationbound}
    Given $\ell\geq 0$ even, there exists $\varepsilon>0$ such that for any $i\in[k]$,
    $$\varepsilon<\liminf_{T\rightarrow\infty}p_{i}^{(\ell,T)} \leq \limsup_{T\rightarrow\infty}p_{i}^{(\ell,T)} < 1- \varepsilon.$$
\end{lemma}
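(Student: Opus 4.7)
The plan splits into two parts: bound $p_1^{(\ell,T)}$ away from $0$ and $1$ using the second equation in (\ref{eq:conditions}), and bound $p_j^{(\ell,T)}$ away from $0$ for $j \geq 2$ using the optimality inequality $V_\ell(\bm{p}^{(\ell,T)}) \leq V_\ell(\bm{p}^*)$, where $\bm{p}^*$ is the rate-optimal allocation. The upper bound $p_j^{(\ell,T)} < 1-\varepsilon$ for $j \geq 2$ is then automatic from the lower bound on $p_1^{(\ell,T)}$.

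For $p_1^{(\ell,T)}$, the identity $p_1^2/\sigma_1^2 = \sum_{j\in[k]_-} p_j^2/\sigma_j^2$ forces $p_j \leq (\sigma_j/\sigma_1)\,p_1$ for each $j \geq 2$, and together with $\sum_i p_i = 1$ this yields
\[ p_1^{(\ell,T)} \geq \frac{\sigma_1}{\sigma_1 + \sum_{j\in[k]_-}\sigma_j} > 0, \]
uniformly in $T$ and $\ell$. For the upper bound, if $p_1^{(\ell,T_n)} \to 1$ along some subsequence, then every $p_j^{(\ell,T_n)} \to 0$ for $j \geq 2$, so the right-hand side of the second equation tends to $0$ while the left-hand side converges to $1/\sigma_1^2 > 0$, a contradiction.

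For the more delicate lower bound on $p_j^{(\ell,T)}$ with $j \geq 2$, note that since $\bm{p}^*$ lies in the interior of the simplex, $R^* := \min_{j\in[k]_-} R_j(\bm{p}^*) > 0$ and $V_\ell(\bm{p}^*) = \mathcal{O}(T^{-1/2}\,e^{-TR^*/2})$ decays exponentially. Suppose for contradiction that $p_{j_0}^{(\ell,T_n)} \to 0$ for some $j_0 \in [k]_-$ along a subsequence; combined with the lower bound on $p_1$ from above, this forces $R_{j_0} := R_{j_0}(p_1^{(\ell,T_n)}, p_{j_0}^{(\ell,T_n)}) \to 0$. Writing $U_\ell(x) = Q_\ell(Tx)\,e^{-Tx/2}/\sqrt{x}$ with $Q_\ell(y) := 1 + \sum_{l=1}^\ell (-1)^l(2l-1)!!/y^l$, and invoking the positivity $Q_\ell > 0$ on $(0,\infty)$ for even $\ell$ (established en route to Lemma \ref{lem:concavity}), every $U_\ell(R_j) \geq 0$, so $U_\ell(R_{j_0}) \leq V_\ell(\bm{p}^{(\ell,T_n)}) \leq V_\ell(\bm{p}^*)$. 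I would then split on the asymptotics of $T_n R_{j_0}$: when $T_n R_{j_0}$ stays bounded, $Q_\ell(T_n R_{j_0})$ is bounded below by a positive constant (since $Q_\ell(y) \to +\infty$ as $y \to 0^+$), giving $U_\ell(R_{j_0}) \gtrsim R_{j_0}^{-1/2} \to \infty$ in contradiction with $V_\ell(\bm{p}^*) \to 0$; when $T_n R_{j_0} \to \infty$, $U_\ell(R_{j_0}) \sim e^{-T_n R_{j_0}/2}/\sqrt{R_{j_0}}$ and the ratio
\[ \frac{U_\ell(R_{j_0})}{V_\ell(\bm{p}^*)} \sim \frac{1}{k-1}\sqrt{\frac{R^*}{R_{j_0}}}\,\exp\!\left\{\tfrac{1}{2}T_n(R^* - R_{j_0})\right\} \to \infty \]
since $R_{j_0} \to 0 < R^*$, again a contradiction.

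The hardest step is verifying the positivity $Q_\ell > 0$ on $(0,\infty)$ for every even $\ell$; without it, the chain $U_\ell(R_{j_0}) \leq V_\ell$ could fail and one would have to absorb possibly negative summands elsewhere in $V_\ell$. We rely on this positivity from the proof of Lemma \ref{lem:concavity} and its antecedent in \cite{shi2024finite}, where it is established as part of the convexity argument.
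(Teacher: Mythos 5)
Your proof is correct, and its first half (deriving $p_{1}^{(\ell,T)}\geq \sigma_{1}/\sum_{i\in[k]}\sigma_{i}$ from the second equation in (\ref{eq:conditions})) coincides exactly with the paper's argument. For the harder step---bounding $p_{j}^{(\ell,T)}$ away from zero for $j\in[k]_{-}$---you take a genuinely different route. The paper compares $V_{\ell}(\bm{p}^{(\ell,T)})$ with $V_{\ell}$ at the \emph{equal} allocation $(1/k,\dots,1/k)$ rather than at the ROA, and then, instead of a subsequence/contradiction argument, analyzes the auxiliary function $f_{T}(a)=a\,U_{\ell}(a\min_{q}R_{q}(\tfrac1k,\tfrac1k))$, shows $f_{T}'<0$ on $[\tfrac{1}{k-1},\infty)$ for $T$ large, deduces $(k-1)U_{\ell}(\min_{q}R_{q}(\tfrac1k,\tfrac1k))\leq U_{\ell}(\tfrac{1}{k-1}\min_{q}R_{q}(\tfrac1k,\tfrac1k))$, and inverts the decreasing function $U_{\ell}$ to obtain the uniform bound $R_{j}(p_{1}^{(T)},p_{j}^{(T)})\geq \tfrac{1}{k-1}\min_{q}R_{q}(\tfrac1k,\tfrac1k)$, which translates into an explicit $\varepsilon$. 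Your version is shorter but requires the case split on the asymptotics of $T_{n}R_{j_{0}}$ and the interiority of $\bm{p}^{*}$; the paper's version yields a quantitative $\varepsilon$ without passing to subsequences. Both arguments rest on the non-negativity of $U_{\ell}$ for even $\ell$ in the step $U_{\ell}(R_{j_{0}})\leq V_{\ell}$, and the paper likewise imports this from \cite{shi2024finite}, so your reliance is no heavier than the paper's; the only point to watch is that your Case 1 needs the strict positivity of $Q_{\ell}$ on $(0,\infty)$ (so that it is bounded below on bounded sets), which does hold for even $\ell$ by the enveloping property of the alternating Mills-ratio expansion.
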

\begin{proof}{Proof.}
    Suppose $\ell\geq 0$ is even. We will drop $\ell$ from the superscript of $\bm{p}^{(\ell,T)}$ in this proof for simplicity. It follows from the second equation in (\ref{eq:conditions}) that, for any $j\in[k]_{-}$, 
    $$(p_{j}^{(T)})^{2} \leq \sigma_{j}^{2}\cdot\sum_{q\in[k]_{-}}\frac{(p_{q}^{(T)})^{2}}{\sigma_{q}^{2}} = \sigma_{j}^{2}\cdot\frac{(p_{1}^{(T)})^{2}}{\sigma_{1}^{2}}.$$
    Equivalently, we have $p_{j}^{(T)} \leq \frac{\sigma_{j}}{\sigma_{1}}p_{1}^{(T)}$. Then, it follows from the fact $\bm{p}^{(T)}$ is a feasible allocation that
    $$1 = \sum_{i\in[k]}{p_{i}^{(T)}} \leq \frac{\sum_{i\in[k]}{\sigma_{i}}}{\sigma_{1}} p_{1}^{(T)}.$$
    Therefore, $\liminf_{T\rightarrow\infty}p_{1}^{(T)} \geq \sigma_{1}/\sum_{i\in[k]}{\sigma_{i}} > 0$. \\
    \indent Now we claim that, for any $j\in[k]_{-}$, $\liminf_{T\rightarrow\infty}p_{j}^{(T)} > 0$. To see this, note that
    \begin{equation}\label{eq:comparequalallocation}
        U_{\ell}(R_{j}(p_{1}^{(T)}, p_{j}^{(T)})) \leq \sum_{q\in[k]_{-}}U_{\ell}(R_{q}(p_{1}^{(T)}, p_{q}^{(T)})) \leq \sum_{q\in[k]_{-}}U_{\ell}(R_{q}(\frac{1}{k}, \frac{1}{k})) \leq (k-1)U_{\ell}(\min_{q\in[k]_{-}}R_{q}(\frac{1}{k}, \frac{1}{k}))
    \end{equation}
    Consider an auxiliary function $f_{T}(a) = aU_{\ell}(a\min_{q\in[k]_{-}}R_{q}(\frac{1}{k}, \frac{1}{k}))$ for $a\in[\frac{1}{k-1}, \infty)$. Recall that $f_{T}(a)$ is contingent on simulation budget $T$ as $U_{\ell}$ is. After some calculations, we see that
    $$\begin{aligned}
        f'_{T}(a) & = U_{\ell}(a\min_{q\in[k]_{-}}R_{q}(\frac{1}{k}, \frac{1}{k})) + a\min_{q\in[k]_{-}}R_{q}(\frac{1}{k}, \frac{1}{k})U'_{\ell}(a\min_{q\in[k]_{-}}R_{q}(\frac{1}{k}, \frac{1}{k})) \\
        & = \exp\left\{-\frac{1}{2}y - \frac{1}{2}\ln\frac{y}{T}\right\} \cdot \left(-\frac{1}{2}y + 1 + \sum_{l=1}^{\ell - 1}\frac{(-1)^{l}(2l-1)!!}{y^{l}} + \frac{(\frac{1}{2}-\ell)(2\ell-1)!!}{y^{\ell}}\right),
    \end{aligned}$$
    where $y$ is an abbreviation for $T\cdot a\min_{q\in[k]_{-}}R_{q}(\frac{1}{k}, \frac{1}{k})$. Since $a\min_{q\in[k]_{-}}R_{q}(\frac{1}{k}, \frac{1}{k})$ is lower bounded by $\frac{1}{k-1}\min_{q\in[k]_{-}}R_{q}(\frac{1}{k}, \frac{1}{k}) > 0$, we see that $y\rightarrow \infty$ as long as $T\rightarrow\infty$. Then, it follows from the fact that the bracket term above tends to $-\infty$ as $y\rightarrow \infty$ that, there exists some $T_{0}\in \mathbb{N}$,
    \begin{equation*}
        f'_{T}(a) < 0, \quad\forall a\in\left[\frac{1}{k-1}, \infty\right),~\forall T\geq T_{0}. 
    \end{equation*}
    Consequently, for any $T\geq T_{0}$, $f_{T}(1) \leq f_{T}(\frac{1}{k-1})$. Equivalently, 
    \begin{equation}\label{ineq:upperbound}
        (k-1)U_{\ell}(\min_{q\in[k]_{-}}R_{q}(\frac{1}{k}, \frac{1}{k})) \leq U_{\ell}(\frac{1}{k-1}\min_{q\in[k]_{-}}R_{q}(\frac{1}{k}, \frac{1}{k})).
    \end{equation}
    It follows from the monotonicity of $U_{\ell}(x)$ by (\ref{eq:comparequalallocation}) and (\ref{ineq:upperbound}) that, for any $T\geq T_{0}$, 
    $$\frac{1}{k-1}\min_{q\in[k]_{-}}R_{q}(\frac{1}{k}, \frac{1}{k}) \leq R_{j}(p_{1}^{(T)}, p_{j}^{(T)}) = \frac{(m_{1} - m_{j})^{2}}{\sigma_{1}^{2}/p_{1}^{(T)} + \sigma_{j}^{2}/p_{j}^{(T)}} \leq \frac{(m_{1} - m_{j})^{2}}{\sigma_{1}^{2} + \sigma_{j}^{2}/p_{j}^{(T)}}.$$
    By rearranging the terms, we see that
    $$p_{j}^{(T)} \geq \left[\left(\frac{1}{k-1}\min_{q\in[k]_{-}}R_{q}(\frac{1}{k}, \frac{1}{k})\right)^{-1}(m_{1} - m_{j})^{2} - \sigma_{1}^{2} \right]^{-1} \sigma_{j}^{2} > 0.$$
    Therefore, for $j\in[k]_{-}$, $\liminf_{T\rightarrow\infty}p_{j}^{(T)} > 0.$
    \indent To conclude, for all $i\in[k]$, $\liminf_{T\rightarrow\infty}p_{i}^{(T)} > 0.$ Hence it completes the proof by choosing $\varepsilon = \frac{1}{2}\min_{i\in[k]}\liminf_{T\rightarrow\infty}p_{i}^{(T)} > 0$.
    \halmos
\end{proof}

Then Theorem \ref{prop:asympallocation} follows.
\begin{proof}{Proof of Theorem \ref{prop:asympallocation}.}
    Suppose $\ell\geq 0$ is even. We will drop $\ell$ from the superscript of $\bm{p}^{(\ell,T)}$ in this proof for simplicity. Let $\{\bm{p^{(T_{m})}}\}_{m\geq 1}$ be a subsequence of $\{\bm{p^{(T)}}\}_{T\geq 1}$. Since $\{\bm{p^{(T_{m})}}\}_{m\geq 1}\subseteq \mathbb{R}^{k}$ is bounded, there exists a further convergent subsequence $\{\bm{p^{(T_{m_{n}})}}\}_{n\geq 1}$. For simplicity, we will denote $\bm{p}^{(n)}$ in short for $\bm{p}^{(T_{m_{n}})}$. Now, it remains to show that $\lim_{n\rightarrow\infty}\bm{p}^{(n)} = \bm{p}^{*}$. \\
    \indent According to Lemma \ref{lem:theoreticalallocationbound}, there exists $\varepsilon>0$ such that $\varepsilon<p_{i}^{(n)}<1-\varepsilon$ holds for $n$ sufficiently large, $\forall i\in[k]$. Since $R_{j}(\cdot,\cdot)$ are continuous functions on $[\varepsilon, 1-\varepsilon]^{2}\subseteq\mathbb{R}^{2}$, there exist two reals $0<L<U<\infty$ such that $L\leq R_{j}(p_{1}^{(n)}, p_{j}^{(n)})\leq U$, $\forall n\geq 1$, $\forall j\in[k]_{-}$. It follows similarly from the proof of (\ref{eq:rearrangement}) that
    \begin{equation*}
        \begin{aligned}
            & \exp\left\{-T_{m_{n}}(R_{i}(p^{(n)}_{1}, p^{(n)}_{i}) - R_{j}(p^{(n)}_{1}, p^{(n)}_{j}))\right\} \\
            = & \frac{R_{i}(p^{(n)}_{1}, p^{(n)}_{i})}{R_{j}(p^{(n)}_{1}, p^{(n)}_{j})} \cdot \frac{\left( \frac{1}{2}T_{m_{n}} + \frac{1}{2}{\frac{(2\ell+1)!!}{R_{j}^{\ell+1}(p^{(n)}_{1}, p^{(n)}_{j})}}\frac{1}{T_{m_{n}}^{\ell}} \right)^{2}}{\left( \frac{1}{2}T_{m_{n}} + \frac{1}{2}{\frac{(2\ell+1)!!}{R_{i}^{\ell+1}(p^{(n)}_{1}, p^{(n)}_{i})}}\frac{1}{T_{m_{n}}^{\ell}} \right)^{2}} \cdot \frac{\left(\frac{\partial}{\partial p_{j}}R_{j}(p^{(n)}_{1}, p^{(n)}_{j})\right)^{2}}{\left(\frac{\partial}{\partial p_{i}}R_{i}(p^{(n)}_{1}, p^{(n)}_{i})\right)^{2}} \\
            = &\frac{R_{j}(p^{(n)}_{1}, p^{(n)}_{j})}{R_{i}(p^{(n)}_{1}, p^{(n)}_{i})} \cdot \frac{\left( \frac{1}{2}T_{m_{n}} + \frac{1}{2}{\frac{(2\ell+1)!!}{R_{j}^{\ell+1}(p^{(n)}_{1}, p^{(n)}_{j})}}\frac{1}{T_{m_{n}}^{\ell}} \right)^{2}}{\left( \frac{1}{2}T_{m_{n}} + \frac{1}{2}{\frac{(2\ell+1)!!}{R_{i}^{\ell+1}(p^{(n)}_{1}, p^{(n)}_{i})}}\frac{1}{T_{m_{n}}^{\ell}} \right)^{2}} \cdot \frac{\left(\dfrac{\sigma_{j}^{2}/(p_{j}^{(n)})^{2}}{\sigma_{1}^{2}/p_{1}^{(n)} + \sigma_{j}^{2}/p_{j}^{(n)}}\right)^{2}}{\left(\dfrac{\sigma_{i}^{2}/(p_{i}^{(n)})^{2}}{\sigma_{1}^{2}/p_{1}^{(n)} + \sigma_{i}^{2}/p_{i}^{(n)}}\right)^{2}} \\
            \leq & \frac{U}{L}\cdot \frac{\left( \frac{1}{2}T_{m_{n}} + \frac{1}{2}{\frac{(2\ell+1)!!}{L^{\ell+1}}}\frac{1}{T_{m_{n}}^{\ell}} \right)^{2}}{\left( \frac{1}{2}T_{m_{n}} + \frac{1}{2}{\frac{(2\ell+1)!!}{U^{\ell+1}}}\frac{1}{T_{m_{n}}^{\ell}} \right)^{2}} \cdot \frac{\sigma_{j}^{4}(\sigma_{1}^{2} + \sigma_{i}^{2})}{\sigma_{i}^{4}(\sigma_{1}^{2} + \sigma_{j}^{2})}\frac{(1-\varepsilon)^{6}}{\varepsilon^{6}}.
        \end{aligned}
    \end{equation*}
    The right-hand side is evidently a rational polynomial in $T_{m_{n}}$. Hence, it follows by taking logarithms on both sides of this inequality and dividing both sides by $-T_{m_{n}}$ that
    $$\limsup_{n\rightarrow \infty}R_{i}(p^{(n)}_{1}, p^{(n)}_{i}) - R_{j}(p^{(n)}_{1}, p^{(n)}_{j}) \leq 0.$$
    Similarly, we have
    $$\liminf_{n\rightarrow \infty}R_{i}(p^{(n)}_{1}, p^{(n)}_{i}) - R_{j}(p^{(n)}_{1}, p^{(n)}_{j}) \geq 0.$$
    These two inequalities combine to yield that
    $$\lim_{n\rightarrow \infty}R_{i}(p^{(n)}_{1}, p^{(n)}_{i}) - R_{j}(p^{(n)}_{1}, p^{(n)}_{j}) = 0.$$
    Therefore, since $R_{i}(\cdot, \cdot)$ and $R_{j}(\cdot, \cdot)$ are continuous, we see that
    $$R_{i}(\lim_{n\rightarrow \infty}p^{(n)}_{1}, \lim_{n\rightarrow \infty}p^{(n)}_{i}) = R_{j}(\lim_{n\rightarrow \infty}p^{(n)}_{1}, \lim_{n\rightarrow \infty}p^{(n)}_{j}).$$
    \indent On the other hand, it follows naturally from the definition of $\bm{p}^{(n)}$ that
    $$\frac{\lim_{n\rightarrow \infty}(p^{(n)}_{1})^{2}}{\sigma_{1}^{2}} = \sum_{j\in[k]_{-}}\frac{\lim_{n\rightarrow \infty}(p^{(n)}_{j})^{2}}{\sigma_{j}^{2}}.$$
    Since the system (\ref{eq:roacondition}) admits a unique solution $\bm{p}^{*}$, we conclude that
    $$\lim_{n\rightarrow\infty}p_{i}^{(n)} = p_{i}^{*},$$
    which completes the proof.
    \halmos 
\end{proof}
\end{APPENDICES}

\end{document}